\documentclass{article}
\usepackage{arxiv}
\usepackage{lmodern}
\usepackage{graphicx}
\usepackage{subcaption}
\usepackage{dingbat}
\usepackage{amsmath, amsthm, amssymb}
\DeclareMathOperator*{\Supp}{Supp}
\DeclareMathOperator*{\Id}{Id}
\DeclareMathOperator*{\Bary}{Bary}

\DeclareMathOperator*{\diag}{diag}
\DeclareMathOperator*{\argmin}{argmin}
\usepackage{algorithm}
\usepackage{algpseudocode}

\usepackage{wrapfig}
\usepackage{url}
\usepackage{hyperref}
\usepackage[T1]{fontenc}
\newtheorem{lemma}{Lemma}

\newtheorem{theorem}{Theorem}
\newtheorem{corollary}{Corollary}
\newtheorem{definition}{Definition}

\newcommand{\R}{\mathbb{R}}
\newcommand{\M}{\mathcal{M}}
\newcommand{\E}{\mathcal{E}}
\newcommand{\A}{\mathcal{A}}
\newcommand{\I}{\mathcal{I}}
\newcommand{\J}{\mathcal{J}}
\newcommand{\U}{\mathcal{U}}
\newcommand{\W}{\mathcal{W}}
\newcommand{\X}{\mathcal{X}}
\newcommand{\prox}{\text{prox}}
\newcommand{\proj}{\text{proj}}
\newcommand{\diff}{\:\mathrm{d}}

\usepackage{xcolor}
\usepackage[normalem]{ulem} 

\newif\ifshowcomments
\showcommentstrue      

\DeclareMathOperator*{\divx}{div_{\X}}
\date{\today}
\hypersetup{
 pdfauthor={David Gentile},
 pdftitle={Barycenters of Measures on Graphs},
 pdfkeywords={},
 pdfsubject={},
 pdfcreator={Emacs 30.2 (Org mode 9.7.34)}, 
 pdflang={English}}
\date{December 2025}

\begin{document}

\title{Static and Dynamic Approaches to Computing Barycenters of Probability Measures on Graphs}
\author{David Gentile \thanks{Tufts University, Dept. of Mathematics, Medford, MA, USA} \And James M. Murphy \(^*\)}

\maketitle

\begin{abstract} The optimal transportation problem defines a geometry of probability measures which leads to a definition for weighted averages (barycenters) of measures, finding application in the machine learning and computer vision communities as a signal processing tool. Here, we implement a barycentric coding model for measures which are supported on a graph, a context in which the classical optimal transport geometry becomes degenerate, by leveraging a Riemannian structure on the simplex induced by a dynamic formulation of the optimal transport problem.
We approximate the exponential mapping associated to the Riemannian structure, as well as its inverse, by utilizing past approaches which compute action minimizing curves in order to numerically approximate transport distances for measures supported on discrete spaces. Intrinsic gradient descent is then used to synthesize barycenters, wherein gradients of a variance functional are computed by approximating geodesic curves between the current iterate and the reference measures; iterates are then pushed forward via a discretization of the continuity equation. Analysis of measures with respect to given dictionary of references is performed by solving a quadratic program formed by computing geodesics between target and reference measures. We compare our novel approach to one based on entropic regularization of the static formulation of the optimal transport problem where the graph structure is encoded via graph distance functions, we present numerical experiments validating our approach\footnote{Library and code for reproducing experiments is available at \url{github.com/dcgentile/GraphTransportation.jl}}, and we conclude that intrinsic gradient descent on the probability simplex provides a coherent framework for the synthesis and analysis of measures supported on graphs.
\end{abstract}

\keywords{Optimal transport on graphs, Graph signal processing, Riemannian center of mass, Barycentric coding model, Computational optimal transport}

\section{Introduction}

Applications and problems centering on graphs have long attracted the attention of mathematicians, dating as far back as Euler's celebrated consideration of the seven bridges of K{\"o}nigsberg \cite{imperatorskaiaakademianaukrussiaCommentariiAcademiaeScientiarum1726}. Problems formulated in the graph setting have ranged from classical combinatorial optimization considerations, such as \(k\)-coloring problems \cite{brooksColouringNodesNetwork1941} and max-flow/min-cut algorithms \cite{jrMaximalFlowNetwork1956}, to modern machine learning tools such as spectral clustering \cite{ngSpectralClusteringAnalysis2001} and graph convolutional neural networks \cite{wuGraphNeuralNetworks2022}. In recent years, signal processing on graphs has become a problem of growing interest in the applied mathematics community because graphs and discrete metric spaces arise naturally as data structures in domains such as sensor networks, opinion dissemination, and measurements of community segregation \cite{stankovicGraphSignalProcessing2019, ruizGraphonSignalProcessing2021, duchinMeasuringSegregationAnalysis2023}, where the signal domain is best represented as some discrete, potentially irregular space.  The essence of a graph is in its connectivity information --- i.e., which nodes are connected to each other --- and it therefore is natural to look for signal processing methods which incorporate that connectivity information. Study of graph connectivity finds application in several subdomains of machine learning, such as clustering methods \cite{ngSpectralClusteringAnalysis2001, roblitzFuzzySpectralClustering2013, nieConstrainedLaplacianRank2016} and supervised/semisupervised learning problems \cite{songGraphBasedSemiSupervisedLearning2023,chenGraphBasedChangePointAnalysis2023, huangLaplacianChangePoint2020}.

\begin{wrapfigure}{l}{0.60\textwidth}
    \centering
    \includegraphics[width=0.98\linewidth]{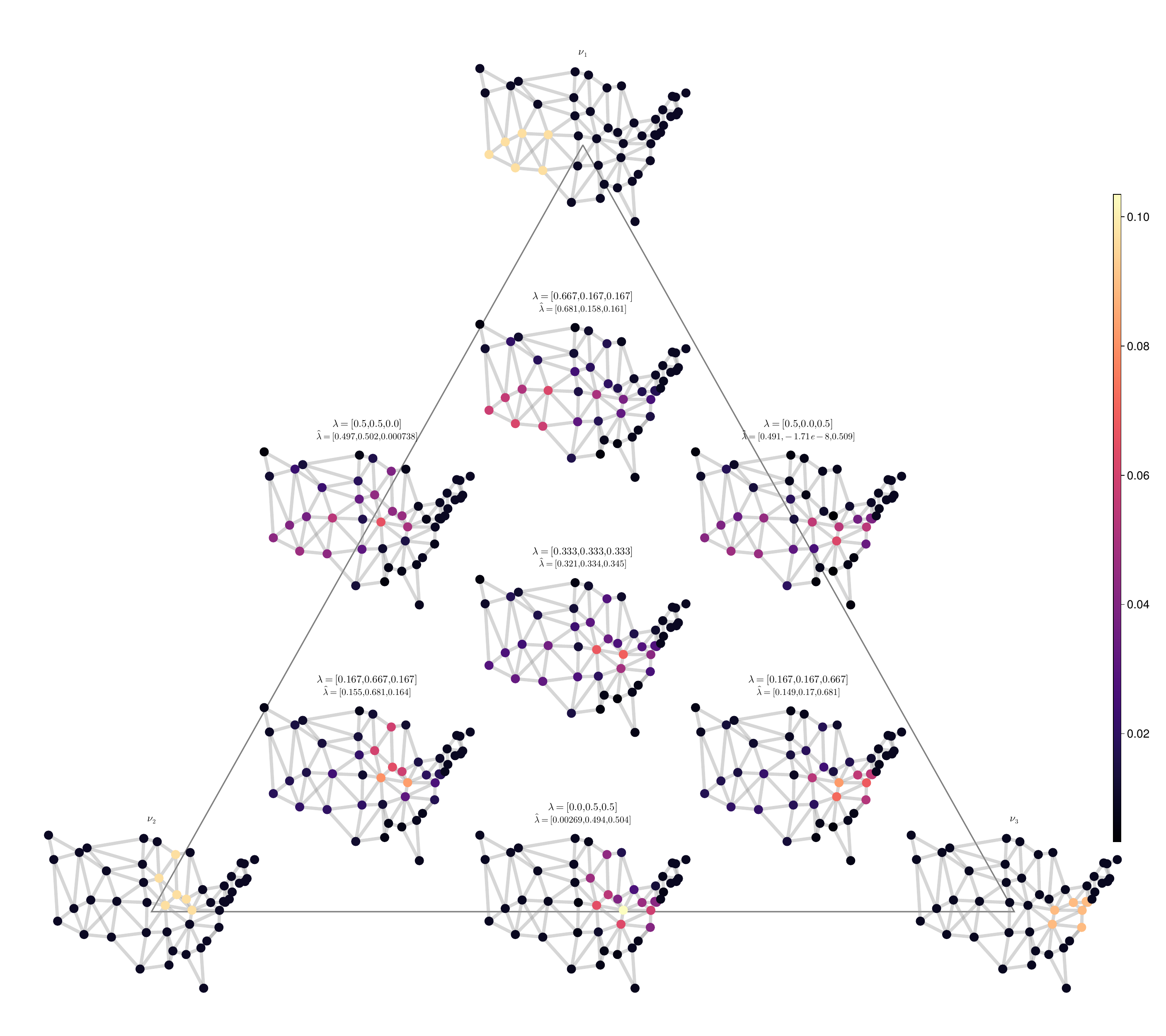}
    \caption{A schematic of the barycentric coding model for measures supported on a graph. At the extremal points of the simplex, which correspond to the vertices of the triangle, are the reference measures; away from the extremal points are \emph{barycentric interpolations} of the reference measures for certain weights \(\lambda\). The coordinates recovered by solving the analysis problem are displayed and labeled with \(\hat\lambda\).}
    \label{fig:BCMSchematic}
\end{wrapfigure}
Simultaneously, optimal transport has become a popular computational tool and attracted attention from the statistics and machine learning communities \cite{gangboShapeRecognitionWasserstein2000, arjovsky2017wasserstein, genevayLearningGenerativeModels2018, cuturiDifferentiableRankingSorting2019, cuturiSinkhornDistancesLightspeed2013,gabrielComputationalOptimalTransport2019,  rigollet2025mean}. The optimal transport problem endows the space of probability measures with a metric which encodes the geometry of the underlying space; in so many words, the distance between two distributions is the total cost it takes to transport one to the other --- this contrasts with, e.g., the \(L^2\) difference between densities of two absolutely continuous distributions, or the Kullback-Liebler divergence between them. In recent years, this has opened up new paradigms for signal processing problems \cite{schmitzWassersteinDictionaryLearning2018, muellerGeometricallyRegularizedWasserstein2023, mallery2025synthesis, werenskiLinearizedWassersteinBarycenters2025, cazelles2020wasserstein}. The barycentric coding model for probability measures \cite{bonneelWassersteinBarycentricCoordinates2016, werenskiMeasureEstimationBarycentric2022, gunsiliusTangentialWassersteinProjections2024, muellerGeometricallyRegularizedWasserstein2023, gentile2025regularized, werenskiLinearizedWassersteinBarycenters2025, mallery2025synthesis, jiangAlgorithmsMeanfieldVariational2025} is a recently developed tool for the synthesis and analysis of probability measure-valued data. The central conceit of the model, a schematic of which is shown in Figure \ref{fig:BCMSchematic}, is that the theory of barycenters in the \(L^2\)-Kantorovich metric (also known as the \(2\)-Wasserstein metric) on probability measures supported on \(\R^d\) \cite{aguehBarycentersWassersteinSpace2010} provides a synthesis mechanism for expressing the weighted average of several reference measures, provided that the weights are simplicial, i.e., that they are non-negative and sum to unity. On the other hand, an inverse problem for this weighted synthesis of measures can also be established and solved, meaning that for a given target measure and a family of reference measures, it is possible to identify an appropriate set of weights so that (re)-synthesizing the references according to these found weights results in the target measure.

The purpose of this work is to propose and implement a barycentric coding model for data which can be represented as probability measures on a graph\footnote{In this work, all measures considered are probability measures, potentially with respect to a fixed reference measure, and having total mass \(1\), although we may occasionally drop the modifier ``probability''.} . The change of setting from a continuous domain to a discrete one poses some interesting technical challenges. The \(L^2\)-Kantorovich metric becomes degenerate when the underlying metric space is discrete, and immediately renders the aforementioned work on barycenters of measures inapplicable \cite{maasGradientFlowsEntropy2011}. Nevertheless, since the early \(2010\)s, transport metrics for probability measures on graphs have been constructed and studied \cite{maasGradientFlowsEntropy2011, chowFokkerPlanckEquations2012}, and it has been shown that these discrete transport metrics do indeed bear real relation to the classically studied \(L^2\)-Kantorovich metric for measures supported on \(\R^d\) --- in particular, it is known that, for a sequence of suitably constructed graphs \& appropriately chosen admissible mean (see \ref{DOTDeets} for details on admissible means), the discrete transport metric converges in the sense of Gromov-Hausdorff to the classical \(L^2\)-Kantorovich distance on \(\R^d\), as the graphs become finer and finer approximations of Euclidean domains \cite{gladbachScalingLimitsDiscrete2020, gigliGromovHausdorffConvergenceDiscrete2013, erbarRicciCurvatureFinite2012}.

In order to achieve our implementation, we leverage the Riemannian structure of transport on discrete metric spaces, along with previously developed numerical routines for approximating geodesics in such a setting \cite{erbarComputationOptimalTransport2020}. The momentum information associated to discretized energy minimizing geodesics is interpreted as an oracle for the inverse of the exponential mapping. Then, barycenters, characterized as minimizers of a certain variance functional which is convex in the Euclidean geometry, are synthesized using an intrinsic gradient descent approach, where movement on the simplex is approximated by a continuity equation-inspired heuristic, while analysis of barycenters is performed by solving a finite dimensional quadratic program formed by approximating the inverse of the exponential mapping based at the target of the analysis. We find that our algorithms are consistent in the sense that synthesizing a barycenter of given reference measures and coordinates and then analyzing the output measure with respect to the reference measures leads to accurate recovery of the given coordinates.

\subsection{Structure of the Paper}
The structure of this paper is as follows. In Section 2, we provide the essential background information concerning graphs, optimal transport, transport on discrete metric spaces, and Riemannian optimization. In Section 3, we explicate the algorithms used to implement the barycentric coding model on graphs. In Section 4, we present our numerical results. We conclude with a discussion in Section 5, and outline future directions for improving this work both from theoretical and practical points of view.

\section{Background}
\subsection{Relation to the Literature}

Let us briefly recap the development of barycenters in the \(L^2\)-Kantorovich geometry (also known as the \(W_2\) geometry) and optimal transport methods for measures on graphs, as well as highlight some alternatives to the one which serves as our focal point. For the granular details of optimal transport, we refer the reader to any of the excellent textbooks on the subject, such as \cite{ambrosio2021lectures, figalli2021invitation, villaniOptimalTransport2009, santambrogioOptimalTransportApplied2015}; in Section \ref{OTSec} we recall the most essential definitions and results for understanding of this work.

The particular subdomain of optimal transport which will be of interest to us in this work concerns \emph{\(W_2\) barycenters}, a construction first introduced in \cite{aguehBarycentersWassersteinSpace2010}, wherein the authors demonstrate that under mild regularity assumptions on a collection of measures supported on \(\R^d\), a unique weighted center of mass can be defined for the measures.  In \cite{kimWassersteinBarycentersRiemannian2017}, it was shown that the construction can be extended to the non-Euclidean case, where measures are supported on a Riemannian manifold. \(W_2\) barycenters have since attracted much attention in the optimal transport community. Computationally, it was shown in \cite{benamou2015iterative} that entropic regularization (see Section \ref{EOTSec} for discussion of entropic regularization) allows barycenters of discrete measures to be approximated quickly via Bregman projection methods, and in \cite{alvarez-estebanFixedpointApproachBarycenters2016} a fixed-point method is applied to obtain rapid convergence to barycenters when all the measures under consideration are Gaussian. 
Outside of special cases however, \(W_2\) barycenters are generally hard to compute \cite{altschuler2022wasserstein}. \(W_2\) barycentric coordinates were introduced for histogram data in \cite{bonneelWassersteinBarycentricCoordinates2016}. A similar regression scheme exists when the measures are absolutely continuous \cite{werenskiMeasureEstimationBarycentric2022}. The statistics of \(W_2\) barycenters have also been of interest \cite{zemelFrechetMeansProcrustes2019, panaretosInvitationStatisticsWasserstein2020}. Recently stochastic gradient descent in the \(W_2\) metric has been applied to approximate \(W_2\) barycenters \cite{backhoffStochasticGradientDescent2025}. This method is similar to ours in spirit, but the distinction between continuous and discrete domains makes them essentially different in their implementation and background.  Recently a unified framework for studying first-order optimization methods in the Wasserstein space was presented in \cite{lanzettiFirstOrderConditionsOptimization2025}, yielding results on necessary and sufficient optimality conditions for constrained optimization problems in the \(W_2\) geometry, in particular justifying ``set the gradient to zero'' approaches.

In \cite{janatiDebiasedSinkhornBarycenters2020}, the smoothing bias introduced by the reference measure used to define the entropic regularization of the \(W_2\) geometry was studied resulting in the definition of the debiased Sinkhorn barycenter, as a compromise preserving the usefulness of fast Sinkhorn iterations for barycenter computations while eliminating the smoothing introduced by the regularization. The debiased Sinkhorn divergence was further explored in the context of the barycentric coding model in \cite{mallery2025synthesis}, where the synthesis and analysis problems are characterized as solutions to a fixed point of the weighted average of certain regularized displacement maps and solutions to finite-dimensional convex quadratic programs respectively. Linearization of the barycentric coding model (LBCM), where the problem is lifted into the tangent space at a fixed reference measure, was considered in \cite{merigotQuantitativeStabilityOptimal2020} as well as \cite{werenskiLinearizedWassersteinBarycenters2025}, with particular attention paid to the case of compatible measures; additionally, a suitable family of reference measures is identified such that the LBCM is capable of expressing all probability measures on the closed unit interval.   Regularity of entropy-regularized barycenters was explored in \cite{carlierEntropicWassersteinBarycentersPDE2021}, where an elliptic system of Monge-Amp{\'e}re equations is employed to establish existence and uniqueness, bounds on moments and the Fisher information of the barycenters, and to perform asymptotic analysis, obtaining a central limit theorem and law of large numbers for barycenters of empirical measures.

Also of special interest to us is recent research on optimal transportation of measures which are supported on graphs, instead of \(\R^d\). The earliest work in this direction are the approaches developed in parallel by \cite{maasGradientFlowsEntropy2011} and \cite{chowFokkerPlanckEquations2012}, wherein a Riemannian structure is constructed for the simplex, inspired by the fluid dynamics formulation of the classical optimal transport problem.   This approach has been expanded on in recent years. For example, it has been used to define a notion of discrete Ricci curvature for Markov chains \cite{erbarRicciCurvatureFinite2012}. The so-called Gromov-Hausdorff scaling limits of this construction, where the graph is interpreted as a coarse representation of some continuous metric space, have been investigated and found, under certain technical assumptions, to be consistent in the limit with the classical \(L^2\)-Kantorovich metric \cite{gigliGromovHausdorffConvergenceDiscrete2013,gladbachScalingLimitsDiscrete2020}. Attention has been given to the underlying Hamilton-Jacobi dynamics of the geodesics in this discrete transport geometry \cite{erbarGeometryGeodesicsDiscrete2019,gangboGeodesicsMinimalLength2019}, and many classical objects from Riemannian geometry, such as the Christoffel symbols and the Riemannian curvature tensor have been computed \cite{liTransportInformationGeometry2022}. We also point out that other approaches to the problem of continuous flows in the space of measures on graphs have been developed \cite{solomon2016continuous}, as well as approaches inspired by the Beckmann problem \cite{robertson2025generalization}, the notion of circuit resistance \cite{robertson2024all}, and vector-valued transport for measures supported on spaces which are products of Euclidean spaces and graphs \cite{craig2025vector}.

Finally, let us highlight a few papers in the domain of Riemannian optimization. Riemmanian optimization is a broad field of active interest to the applied mathematics community. The particular problem of finding the Riemannian center of mass via intrinsic gradient descent was considered in \cite{afsariRiemannian$L^p$Center2011}, and further investigation into the algorithm's convergence properties was made in \cite{afsariConvergenceGradientDescent2013}. 

\subsection{Common Objects}
We start by recalling some common definitions from the theories of graphs, analysis, and probability. 

A (finite) \emph{graph} (with its shortest path metric) is a quintessential example of a discrete metric space. Specifically, a graph is a pair \(G = (V,E)\), where \(V = \{1, 2, \dots, n\}\) is the vertex set and \(E \subset V \times V\) is the edge set, encoding the information of which vertices connect to each other.  For a fixed pair of vertices \(u,v\in V\), a sequence of vertices \(v_1, v_2, \dots, v_k\) is called a \(uv\) walk if \(v_1 = u, v_k=v,\) and \((v_i, v_{i+1}) \in E\) for \(i = 1, 2, \dots, k-1\). By defining
\[d(u,v) := \inf\left\{\sum\limits_{i} \mathbf{1}_{(v_i, v_{i+1})\in E} : (v_1, \dots, v_k) \text{ is a }uv \text{ walk}\right\},\]
we obtain the shortest path metric on the graph \(G\). The \emph{degree} of a node \(x \in V\) is the number of edges \(y \in V\) such that \((x,y) \in E\), and is denoted \(\deg(x)\).

For the remainder, it will be useful to represent graphs via Markov transition kernels. Note that we can always pass back and forth between Markov transition kernels and graphs. To construct a Markov transition kernel representing an arbitrary simple, connected graph \(G = (V,E)\), one takes

\begin{equation*}
  \pi(x) = \frac{\deg(x)}{|E|}; \qquad
  Q(x,y) = \begin{cases} 
    \frac{1}{|E|\pi(x)} & (x,y) \in E \\
    0 & \text{else}
  \end{cases}.
\end{equation*}
We also note that this construction can be extended to graphs equipped with symmetric weight matrices \([\omega(x,y)]_{(x,y) \in E}\) by setting \(\deg_{\omega}(x) := \sum_{y \in V} \omega(x,y)\). Then the corresponding Markov kernel is defined by 
\begin{equation*}
  \pi(x) = \frac{\deg_\omega(x)}{\sum_{y\in V}\deg_\omega(y)}; \qquad
  Q(x,y) = \begin{cases} 
    \frac{\omega(x,y)}{\deg_\omega(x)}& (x,y) \in E \\
    0 & \text{else}
  \end{cases}.
\end{equation*}
This representation will be useful both because it serves as part of the basis for the theory of transport on discrete metric spaces, and also because it opens up a new class of metrics on graphs. Because the shortest path metric does not always capture all the connectivity information of a graph, other metrics have been constructed and considered --- in particular, the parameterized class of metrics known as the diffusion distances \cite{coifmanDiffusionMaps2006, coifmanGeometricDiffusionsTool2005}.

\begin{definition}
  The squared \emph{diffusion distance} between nodes \(x,y \in V\) with time parameter \(t\) is given by
\[d_t^2(x,y) := \sum\limits_{w \in V}\frac{(p(w,t|x) - p(w,t|y))^2}{\pi(w)},\]
where \(p(w,t|x)\) denotes the probability of reaching node \(w\) in \(t\) steps starting from node \(x\).
\end{definition}
The diffusion distance defines a metric on the vertices that which measures the connectivity between a pair \(x,y \in V\) --- points connected by many short paths will be closer together in the diffusion distance \cite{coifmanGeometricDiffusionsTool2005, coifmanDiffusionMaps2006}.  Diffusion distances can be understood as a type of non-linear dimensionality reduction, and we refer the reader interested in this aspect of diffusion distances to the papers \cite{maggioni2019learning, tenenbaumGlobalGeometricFramework2000, roweisNonlinearDimensionalityReduction2000, belkinLaplacianEigenmapsDimensionality2003}. Connectivity-encoding metrics on graphs are an area of active interest, and other such metrics, such as the Fermat distances \cite{bijralSemisupervisedLearningDensity2011, mckenzie2019power, little2022balancing, chazalchoosing, trillosFermatDistancesMetric2024} might also be employed in this setting, which we leave this to future work. Later on, we will use these distances to construct cost matrices in the context of computing entropically regularized \(W_2\) barycenters on discrete spaces.

\begin{definition}
  Take \(p\) to be a positive integer. The \emph{\(p\)-simplex}, denoted \(\Delta^{p-1}\) is the set of vectors in \(\R^p\) with non-negative components which sum to 1, in other words
  \begin{equation*}
   \Delta^{p-1} = \left\{\lambda = (\lambda_1, \dots, \lambda_p) : \lambda_i \ge 0, \sum_i \lambda_i = 1\right\} .
  \end{equation*}
\end{definition}

We denote the set of non-negative measures \(\nu\) with total mass of \(1\) defined on a measurable space \(\Omega\) by \(\mathcal{P}(\Omega)\). If \(\Omega = \R^d\), then we denote by \(\mathcal{P}_{2}(\Omega)\) the set of probability measures \(\nu\) such that \(\int_\Omega \|x\|^2\diff\nu(x) < \infty\). If there exists a density function \(\rho(x): \Omega \to \R\) such that for all Lebesgue measurable \(A \subset \Omega\) it holds that \(\nu(A) = \int_A\rho(x)\diff x\), then \(\nu\) is said to be absolutely continuous, and belongs to the set \(\mathcal{P}_{2,ac}(\Omega).\)

\begin{definition}
  Let \(\mu, \nu \in \mathcal{P}(\Omega)\). A probability measure \(\gamma \in \mathcal{P}(\Omega\times\Omega)\) is said to be a \emph{coupling} of the measures \(\mu\) and \(\nu\) if for all Borel measurable sets \(A\), the following identities hold:
\begin{equation*}
\gamma(A\times\Omega) = \mu(A); \quad \gamma(\Omega\times A) = \nu(A).
\end{equation*}
 The set of all couplings of \(\mu\) and \(\nu\) is denoted \(\Pi(\mu, \nu)\).
\end{definition}
Note that \(\Pi(\mu,\nu)\) is always non-empty because the product measure \(\mu \otimes \nu\) is itself a coupling. In the context of finite discrete measures, couplings may be represented as matrices satisfying linear constraints. If \(T: \R^d \to \R^d\) is a measurable mapping, and \(\gamma \in \Pi(\mu,\nu)\) has the form \((\Id \times T)\# \mu\), where \(\#\) is the measure push-forward operator, then we say that the plan \(\gamma\) is induced by mapping \(T\), and we call \(T\) a \emph{transport map} taking \(\mu\) to \(\nu\).

\subsection{Transport in a Euclidean Domain}
\label{OTSec}
\subsubsection{Overview}
Conceptually, the \emph{optimal transportation problem} is this: suppose we have a pile of mass that needs to be moved from one location to a new one, and that transporting a unit of mass from point \(x\) to point \(y\) has associated cost \(c(x,y)\) --- how can the mass be transported in a cost minimizing way, and what is the minimum cost associated to the optimal method? As it happens, this intuitive optimization problem is the genesis for a geometry of probability measures encoding the structure of the underlying space on which those measures are defined. 

The essential things about optimal transport that we need to understand are: that optimal transport endows the space of probability measures with a geometry; that geodesic curves in the geometry of transport obey the continuity equation and in particular minimize its action; that barycenters are defined in terms of a geometric minimization problem.

\subsubsection{Static Formulation, Kantorovich Problem, and Brenier's Theorem}
Let us recall the basic building blocks for the \(L^2\)-Kantorovich geometry of probability measures. The theory of optimal transportation in its modern formulation is centrally concerned with the problem of solving a certain linear program with constraints defined in terms of measures. Let \(\mu, \nu \in \mathcal{P}(\R^n)\) and let \(c(x,y): \R^n \times \R^n \to \R^+\). Then the optimal transport cost for \(\mu\) and \(\nu\) with cost \(c\) is given by
\begin{equation}
\label{OTcost}
  OT(\mu,\nu) := \inf\left\{ \int_{\Omega\times\Omega}c(x,y) \diff\gamma : \gamma \in \Pi(\mu, \nu) \right\}.
\end{equation}

The minimization problem in \eqref{OTcost} is referred to as the Kantorovich problem. In general, the cost function \(c(x,y)\) can take many forms, but the case which has proven most well-studied has turned out to be when \(c(x,y) = \frac{1}{2}d^2(x,y)\).  For the case of \(\Omega = \R^d\), we denote \(d(x,y)\) by \(\|x - y\|\); norms in other Hilbert spaces are differentiated by subscript, e.g. \(\|\cdot\|_H.\) Taking \(\Omega = \R^d\), this is the setting of Brenier's theorem \cite{brenierPolarFactorizationMonotone1991}, one of the foundational results of the modern field, which we reproduce here.

\begin{theorem}[\cite{brenierPolarFactorizationMonotone1991}, Theorem 1.3]
  \label{Brenier}
Let \(\mu, \nu\) be probability measures on \(\R^d\) satisfying \linebreak \(\int_{\R^n} \|x\|^2 \diff \mu(x), \int_{\R^n}\|y\|^2 \diff\nu(y) < \infty\). Let \(c(x,y) = \frac{1}{2}\|x-y\|^2\), and suppose that \(\mu\) gives no mass to \((d-1)\) surfaces of class \(C^2\). Then there exists a \(\mu\)-a.e. unique optimal transport map \(T\) from \(\mu\) to \(\nu\), and a convex function \(\phi\) such that \(T = \nabla\phi\).
\end{theorem}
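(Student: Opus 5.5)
The plan is the classical route through Kantorovich duality and convex analysis. First I will show that an optimal plan exists. The finite second moments make the cost \(\frac12\|x-y\|^2\) integrable against every \(\gamma\in\Pi(\mu,\nu)\). The set \(\Pi(\mu,\nu)\) is tight, since its marginals are fixed, and it is weakly closed, so Prokhorov's theorem makes it weakly compact. The cost is lower semicontinuous and bounded below, so \(\gamma\mapsto\int c\diff\gamma\) is weakly lower semicontinuous, and a minimizer \(\gamma^*\) of \eqref{OTcost} exists.

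Next I will reduce to a bilinear problem. Expand \(\frac12\|x-y\|^2=\frac12\|x\|^2+\frac12\|y\|^2-\langle x,y\rangle\). The first two terms integrate to constants that depend only on the marginals. Minimizing \(\int c\diff\gamma\) is therefore equivalent to maximizing \(\int\langle x,y\rangle\diff\gamma\) over \(\Pi(\mu,\nu)\).

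The key structural step is to prove that \(\Supp\gamma^*\) is cyclically monotone. That is, for any finite collection \((x_i,y_i)\in\Supp\gamma^*\) and any permutation \(\sigma\), we need \(\sum_i\langle x_i,y_i\rangle\ge\sum_i\langle x_i,y_{\sigma(i)}\rangle\). If this failed, I would take small balls around the offending points and move a small amount of mass from the pairs \((x_i,y_i)\) to \((x_i,y_{\sigma(i)})\). This keeps both marginals fixed and strictly increases the objective, contradicting optimality. Rockafellar's theorem then gives a proper lower semicontinuous convex function \(\phi\) with \(\Supp\gamma^*\subset\partial\phi\), i.e. \(y\in\partial\phi(x)\) for \(\gamma^*\)-a.e. \((x,y)\). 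The explicit construction is the supremum over chains \(\phi(x)=\sup\{\langle x-x_n,y_n\rangle+\dots+\langle x_1-x_0,y_0\rangle\}\).

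The main obstacle is the step from a multivalued subdifferential to a single-valued map, and this is where the hypothesis on \(\mu\) enters. A convex function is differentiable except on a set that can be covered by countably many \(C^2\) \((d-1)\)-dimensional surfaces, a result of Zaj\'i\v{c}ek and Alberti. I also need \(\mu\) to be concentrated on the interior of \(\mathrm{dom}\,\phi\). The boundary of a convex set is Lipschitz, hence covered by such surfaces, so this holds as well. I would cite this structure theorem rather than reprove it. It then follows that \(\mu\)-a.e. \(x\) is a point where \(\partial\phi(x)=\{\nabla\phi(x)\}\). Hence \(\gamma^*\) is concentrated on the graph of \(T=\nabla\phi\), so \(\gamma^*=(\Id\times\nabla\phi)\#\mu\), and \(T\) pushes \(\mu\) forward to \(\nu\). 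This proves existence.

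For uniqueness I will use the following argument. Suppose \(T_1=\nabla\phi_1\) and \(T_2=\nabla\phi_2\) both induce optimal plans \(\gamma_1\) and \(\gamma_2\). Their average \(\frac12(\gamma_1+\gamma_2)\) is also optimal, since the problem is linear in \(\gamma\). By the previous step, this average must be concentrated on the graph of a single gradient map \(\nabla\phi\) \(\mu\)-a.e. A plan that is concentrated on a graph cannot split mass at any point. Therefore \(T_1=T_2=\nabla\phi\) \(\mu\)-a.e. This gives the \(\mu\)-a.e. uniqueness of the optimal map.
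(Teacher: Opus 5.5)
The paper does not prove this theorem; it only cites Brenier. There is therefore no internal argument to compare against, and your proposal has to be judged on its own.

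Your route is the standard one: existence by Prokhorov, reduction to maximizing \(\int\langle x,y\rangle\,d\gamma\), cyclical monotonicity of \(\Supp\gamma^*\), Rockafellar, then single-valuedness of \(\partial\phi\) \(\mu\)-a.e. Everything before the last step is fine. So is the averaging argument for uniqueness, which never uses that \(T_1,T_2\) are gradients and so gives uniqueness among all optimal maps, as required.

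\textbf{The gap is the structure theorem you cite.} It is not true that the non-differentiability set of a convex function is covered by countably many \(C^2\) hypersurfaces. The two results you merge say different things:
\begin{itemize}
\item Zaj\'{\i}\v{c}ek's theorem covers that set by countably many Lipschitz (indeed d.c.) hypersurfaces, with no exceptional set.
\item Alberti's theorem gives \(C^2\) hypersurfaces only up to an \(\mathcal{H}^{d-1}\)-null exceptional set.
\end{itemize}
The same caveat applies to your remark about \(\partial(\mathrm{dom}\,\phi)\), which is covered by Lipschitz graphs but not, in general, by \(C^2\) ones. A measure that charges no \(C^2\) hypersurface can still charge such an \(\mathcal{H}^{d-1}\)-null set. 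So from the hypothesis as written you cannot conclude that \(\partial\phi(x)=\{\nabla\phi(x)\}\) for \(\mu\)-a.e.\ \(x\).

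This step cannot be patched, because with the \(C^2\) hypothesis the conclusion fails. Here is a counterexample in \(\R^2\).
\begin{itemize}
\item Let \(c\) be the Cantor function and \(K\) the Cantor set. Set \(f(s)=\int_0^s c\), which is convex and \(C^1\), and \(u(s,t)=\max\{f(s)-t,0\}\). Let \(\mu\) be the Cantor measure carried to \(\{(s,f(s)):s\in K\}\).
\item Take \(g\in C^2\) and \(E=\{s\in K:f(s)=g(s)\}\). Then \(c=g'\) at the accumulation points of \(E\). So \(c\) is Lipschitz on that Lebesgue-null set and maps it to a null set, and the rest of \(E\) is countable.
\item Since \(c\) pushes the Cantor measure onto Lebesgue measure on \([0,1]\), \(\mu\) gives the graph of \(g\) zero mass. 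A local-graph argument then gives \(\mu(\Gamma)=0\) for every \(C^2\) curve \(\Gamma\).
\item Yet \(\mu\) lives on the singular set of \(u\), where \(\partial u(s,f(s))=\{\tau(c(s),-1):\tau\in[0,1]\}\).
\item Let \(\nu\) be the law of \(\tau(c(X),-1)\), with \(X\) Cantor-distributed and \(\tau\) uniform on \([0,1]\) and independent of \(X\). Then \(\nu\) has density \(1/|t|\) on the triangle with vertices \((0,0),(0,-1),(1,-1)\).
\item The plan spreading the mass at each \((s,f(s))\) uniformly along \(\partial u(s,f(s))\) has support in the graph of \(\partial u\), so it is optimal. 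Hence \((u,u^*)\) is a dual optimizer, and every optimal plan is concentrated on the graph of \(\partial u\).
\item A map concentrated there has the form \((s,f(s))\mapsto\sigma(s)(c(s),-1)\). Its image lies in the image of the graph of a function of the single parameter \(c(s)\) under the Lipschitz map \((a,\tau)\mapsto(\tau a,-\tau)\), which is Lebesgue-null. But \(\nu\) is absolutely continuous, so no optimal map exists.
\end{itemize}

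The fix is to strengthen the hypothesis to what the standard versions of the theorem assume: \(\mu\) vanishes on Lipschitz (or d.c.) hypersurfaces. This holds, for instance, when \(\mu\ll\mathcal{L}^d\) or when \(\mu\) vanishes on sets of Hausdorff dimension at most \(d-1\). With that hypothesis, invoke Zaj\'{\i}\v{c}ek's theorem and your argument goes through as written.
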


The convex function \(\phi\) is known as the \emph{Brenier potential} and \(T\) is referred to as the \emph{optimal transport (OT) map}. It can be shown that in this case, and under the further stipulation that space of probability measures is restricted to the set \(\mathcal{P}_2(\Omega)\) of those measures with finite second moment, then \eqref{OTcost} defines a metric on \(\mathcal{P}_2(\Omega)\), known as the \(2\)-Wasserstein or \(L^2\)-Kantorovich metric and denoted by \(W_2\) \cite{santambrogioOptimalTransportApplied2015}.

Note also some other foundational work in the modern field of transportation: in \cite{smithNoteOptimalTransportation1987} the cyclical monotonicity characterization of optimal plans is established; in \cite{mccannConvexityPrincipleInteracting1997}, what is now known as the \emph{McCann interpolant} was introduced, which describes the dynamic transportation of one distribution to another, achieved through pushing the initial distribution forward by the linear interpolation of the optimal transport map and the identity function; in  \cite{gangboGeometryOptimalTransportation1996}, existence and uniqueness for the optimal transport problem are established for a broad class of cost functions both convex and concave.

The Kantorovich problem can be thought of as a static formulation of the optimal transportation problem --- the optimal coupling \(\gamma^*\) describes the measure of mass moving from some set \(A\) to some set \(B\), but \emph{a priori} gives no information about the path the mass takes when moving between the sets.  As we will emphasize below, the dynamic formulation of transportation will be of the utmost importance for our work.

\subsubsection{Dynamic Formulation and Benamou-Brenier Theorem}
It is also a remarkable fact that the \(W_2\) metric is formally similar to the kinds of metrics which arise in Riemannian geometry. This fact is expressed by the Benamou-Brenier formula reproduced below. In order to provide necessary context for this formula, first recall the \emph{continuity equation with temporal boundary conditions}. A function \(\rho:[0,1] \times \R^d\) and a vector field \(V:[0,1]\times\R^d\) are said to satisfy the continuity equation with temporal boundary data \((\rho_0, \rho_1)\) if the following holds:

\begin{equation}
  \label{CCE}
 \frac{\partial\rho}{\partial t} + \nabla\cdot(\rho V) = 0; \quad \rho(0, x) = \rho_0(x), \quad \rho(1,x) = \rho_1(x).
\end{equation}

On a physical level, this equation is asserting that a quantity of mass is evolving continuously from distribution \(\rho_0\) to distribution \(\rho_1\). The Benamou-Brenier formula connects this fluid dynamics problem to the optimal transportation problem by asserting that the optimal transport cost for measures \(\rho_0\) and \(\rho_1\) with ground cost given by \(\|\cdot - \cdot\|^2\) is exactly the minimum energy required to move the mass continuously from one configuration to another.

\begin{theorem}[\cite{benamouComputationalFluidMechanics2000}, Proposition 1.1]
  \label{BBF}
  Given absolutely continuous probability measures on \(\R^d\) with finite second moment \(\rho_0, \rho_1\), the square of the \(L^2\)-Kantorovich distance satisfies
  \begin{equation*}
	W_2^2(\rho_0,\rho_1) = \inf \left\{ \int_{0}^1\int_{\R^d}\|V(t,x)\|^2\rho(t,x) \diff x \diff t \right\},
  \end{equation*}
where the infimum is taken over all pairs \((\rho, V)\) satisfying the continuity equation with boundary data \(\rho_0, \rho_1\).
\end{theorem}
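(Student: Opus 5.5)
We prove the Benamou--Brenier formula by establishing the two inequalities separately. Note that with the cost normalized as \(\|x-y\|^2\) (not \(\frac12\|x-y\|^2\)) the static value on the left is \(W_2^2\). The assumption that \(\rho_0\) is absolutely continuous with finite second moment lets us invoke Theorem \ref{Brenier} and obtain an optimal map \(T=\nabla\phi\) pushing \(\rho_0\) to \(\rho_1\).

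\emph{Upper bound.} We build a competitor from the McCann interpolant. Set \(T_t := (1-t)\Id + tT\) and \(\rho_t := (T_t)\#\rho_0\).

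Because \(\phi\) is convex, \(T_t\) is the gradient of the strictly convex function \((1-t)\|x\|^2/2 + t\phi\) for \(t<1\). Hence \(T_t\) is injective \(\rho_0\)-a.e., and \(\rho_t\) remains absolutely continuous. We then define the velocity \(V(t,\cdot) := (T-\Id)\circ T_t^{-1}\) on the support of \(\rho_t\).

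For any test function \(\varphi\),
\[\frac{d}{dt}\int\varphi\,d\rho_t=\int\nabla\varphi(T_t x)\cdot(Tx-x)\,d\rho_0(x)=\int\nabla\varphi\cdot V\,d\rho_t.\]
This is the weak form of \eqref{CCE}, and the boundary values are \(\rho_0\) and \(\rho_1\). The action is then
\[\int_0^1\!\!\int\|V\|^2\rho_t\,dx\,dt=\int_0^1\!\!\int\|Tx-x\|^2\,d\rho_0\,dt=W_2^2(\rho_0,\rho_1),\]
so the infimum is at most \(W_2^2\).

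\emph{Lower bound.} Take any admissible pair \((\rho,V)\) with finite action. We first want the flow \(X_t\) of \(V\), so that \(\rho_t=(X_t)\#\rho_0\). Granting this, \(\gamma:=(X_0,X_1)\#\rho_0\) is a coupling of \(\rho_0\) and \(\rho_1\). Using Jensen in time,
\[W_2^2\le\int\|X_1(x)-x\|^2\,d\rho_0\le\int\int_0^1\|V(t,X_t(x))\|^2\,dt\,d\rho_0=\int_0^1\!\!\int\|V\|^2\rho_t\,dx\,dt.\]

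\emph{Main obstacle.} The flow \(X_t\) need not exist when \(V\) is not Lipschitz. The first approach is to regularize. Mollify in space, \(\rho^\varepsilon=\rho*\eta_\varepsilon\) and \(E^\varepsilon=(\rho V)*\eta_\varepsilon\), and set \(V^\varepsilon=E^\varepsilon/\rho^\varepsilon\). This field is smooth and locally Lipschitz, and the pair \((\rho^\varepsilon,V^\varepsilon)\) still solves \eqref{CCE}.

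The key point is that \((\rho,E)\mapsto\|E\|^2/\rho\) is jointly convex and 1-homogeneous. By Jensen this gives
\[\int\|E^\varepsilon\|^2/\rho^\varepsilon\le\int\|E\|^2/\rho,\]
so regularization does not increase the action. Applying the flow argument to \((\rho^\varepsilon,V^\varepsilon)\) yields \(W_2^2(\rho_0^\varepsilon,\rho_1^\varepsilon)\le\) the original action. We then let \(\varepsilon\to0\) and use the continuity of \(W_2\) under mollification, since \(W_2(\rho_i^\varepsilon,\rho_i)\le C\varepsilon\).

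Growth at infinity is controlled by the finite action together with the finite second moments. Alternatively, one can bypass flows entirely by duality: for smooth subsolutions \(\partial_t\varphi+\frac14\|\nabla\varphi\|^2\le0\), one shows \(\int\varphi_1\,d\rho_1-\int\varphi_0\,d\rho_0\le\) the action, and then invokes Kantorovich duality through Hopf--Lax.
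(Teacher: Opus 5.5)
The paper does not prove this statement. It is quoted from \cite{benamouComputationalFluidMechanics2000}, and the surrounding text only remarks that the McCann interpolant is energy minimizing. So there is no paper proof to compare routes with. Your argument is the standard one and is sound in outline. The upper bound uses exactly that McCann interpolant. The lower bound is the Lagrangian/Jensen argument behind the cited result, made rigorous in the way of Chapter 8 of \cite{ambrosioGradientFlowsMetric2005}: the mollification $V^\varepsilon=(\rho V)*\eta_\varepsilon/(\rho*\eta_\varepsilon)$, the joint convexity of $(\rho,E)\mapsto\|E\|^2/\rho$, and $W_2(\rho*\eta_\varepsilon,\rho)\le C\varepsilon$. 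Your duality alternative is also legitimate. It needs its own approximation step, since Hopf--Lax solutions are only Lipschitz and semiconcave, not smooth.

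The one step where your stated justification does not work is the flow of $V^\varepsilon$.
\begin{itemize}
\item Smoothness in $x$ for each fixed $t$ is not enough. $V^\varepsilon$ is only measurable in $t$, so you need Carath\'eodory theory with local bounds that are integrable in time.
\item Finite action plus finite second moments give no pointwise growth bound on $V^\varepsilon$ at infinity. There $E^\varepsilon/\rho^\varepsilon$ is an average of $V_t$ weighted toward the far tail of $\rho_t$, where finite action does not constrain $V_t$ pointwise. So ``growth at infinity is controlled'' cannot rule out finite-time blow-up of characteristics.
\end{itemize}
What works instead is this. Take a smooth, strictly positive kernel. On each ball $B$ you get
\[\sup_B\|V^\varepsilon_t\|+\mathrm{Lip}(V^\varepsilon_t;B)\le C(\varepsilon,B)\int\|V_t\|\diff\rho_t.\]
This uses a lower bound $\inf_B\rho^\varepsilon_t\ge c_B>0$, uniform in $t$, which comes from tightness of the narrowly continuous curve $(\rho_t)_{t\in[0,1]}$. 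The right-hand side lies in $L^1(0,1)$ by Cauchy--Schwarz and finite action.

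Under these hypotheses one shows two things. First, characteristics exist on all of $[0,1]$ for $\rho^\varepsilon_0$-a.e.\ starting point, because the set of starting points whose trajectories escape every ball carries no mass, not because of any growth bound. Second, $\rho^\varepsilon_t=(X^\varepsilon_t)\#\rho^\varepsilon_0$. This second point also needs uniqueness for the continuity equation with such a field, which your ``granting this'' passes over. Alternatively, the superposition principle directly gives a measure on curves solving $\dot\gamma=V(t,\gamma)$ with time marginals $\rho_t$, and your Jensen bound then follows without any uniqueness argument. Either insertion completes the lower bound.

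A smaller point in the upper bound concerns absolute continuity of $\rho_t$. Strict convexity gives injectivity of $T_t$, but it does not give absolute continuity. For that you need strong convexity, i.e.\ $\|T_tx-T_ty\|\ge(1-t)\|x-y\|$, so that $T_t^{-1}$ is Lipschitz on its image and preimages of Lebesgue-null sets are null.
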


To see how this can be interpreted as a Riemannian metric, notice that the integrand might as well be written
\begin{equation}
  W_2^2(\rho_0, \rho_1) = \inf \left\{ \int_0^1 \int_\Omega \rho(t,x)\|V(t,x)\|^2 \diff x \diff t \right\} = \inf \left\{ \int_0^1 \|V(t,x)\|_{L^2(\rho(t,x))}^2 \diff t \right\}.
\label{BBEQ}
\end{equation}
The RHS can now be read as the minimization of an energy functional, with the minimization occurring over the space of all absolutely continuous curves with endpoints \(\rho_0\) and \(\rho_1\). This point will be important in the development of a transport-like metric for probability measures supported on graphs. We want to emphasize here that the Benamou-Brenier formula gives a \emph{dynamic} interpretation of the optimal transportation problem --- the minimizer of \eqref{BBEQ} will be a curve in the space of measures \(\rho(t)\), which can be interpreted as tracking the instantaneous distribution of mass at each time \(t \in [0,1]\) as the distribution evolves from \(\rho_0\) to \(\rho_1\). This is in contrast to the static formulation given by the Kantorovich problem.

The energy minimizing curve may also be connected to the static plans and maps formulation of Kantorovich via the McCann interpolation. In \cite{mccannConvexityPrincipleInteracting1997}, it is shown that if \(T^*\) is the optimal transport map pushing \(\rho_0\) to \(\rho_1\), then the interpolation, defined by the expression
\begin{equation*}
    \rho_t := ((1-t)\Id + tT^*)\#\rho_0,
\end{equation*}
is energy minimizing over the interval \([0,1]\).

With this interpretation of the \(L^2\)-Kantorovich metric in mind, a natural question to ask is the following: is there a corresponding interpretation of the notion of a tangent space at a point? The answer is yes, and, following \cite{ambrosioGradientFlowsMetric2005}, we give the following definition:
\[T_\nu \mathcal{P}_{2,ac}(\R^n) := \left\{ \beta(\nabla\phi - \Id)  : \beta > 0, \phi \in C_c^\infty(\R^n), \phi \text{ convex }\right\},\]
where the closure is in \(L^2(\nu)\). If \(u,v\) are two tangent vectors in \(T_\nu \mathcal{P}_{2,ac}(\R^n)\), then their inner product is defined by multiplying them together and integrating against the measures \(\nu\), \(\langle u,v\rangle_\nu := \int_{\R^d}\langle u,v\rangle \diff \nu(x)\). The existence of this formal Riemannian interpretation will be important because it opens the door to applying first-order optimization methods for minimizing functionals defined on \((\mathcal{P}_{2,ac}(\Omega), W_2)\).

In full generality, absolute continuity of the boundary measures is not necessary for (\ref{BBF}) to hold. In \cite{ambrosioGradientFlowsMetric2005}, a rigorous investigation is undertaken to show that in a setting as general as \(\Omega\) being a separable Hilbert space, there is a direct correspondence between absolutely continuous curves \(\nu_t:(a,b) \to \mathcal{P}(\Omega)\) and solutions \emph{in the sense of distributions} to the continuity equation. Taking \(\Omega=\R^d\), a curve \(\nu_t\) of Borel probability measures and a time-dependent vector field \(v_t\) are said to satisfy the continuity equation in the sense of distributions if the following holds for all test functions \(\varphi \in C_{c}^\infty(\R^d \times (a,b))\):

\begin{equation*}
  \int_{a}^{b}\int_{\R^d} \left( \partial_t \varphi(x,t) + \langle v_t(x), \nabla_x\varphi(x,t) \rangle \right) \diff \nu_t(x) \diff t = 0.
\end{equation*}

This weak formulation allows for a class of admissible solutions which are lacking in the regularity that would normally be required to solve the equation \eqref{CCE} in the classical sense. This will be important later on when finding weak solutions to a modification of \eqref{CCE} is of use for approximating the discretized transport distance for measures on graphs (see Section \ref{DOTsec}).
\subsubsection{Barycenters of Probability Measures and the Barycentric Coding Model}
\begin{wrapfigure}{l}{0.4\textwidth}
    \centering
    \includegraphics[width=0.4\textwidth]{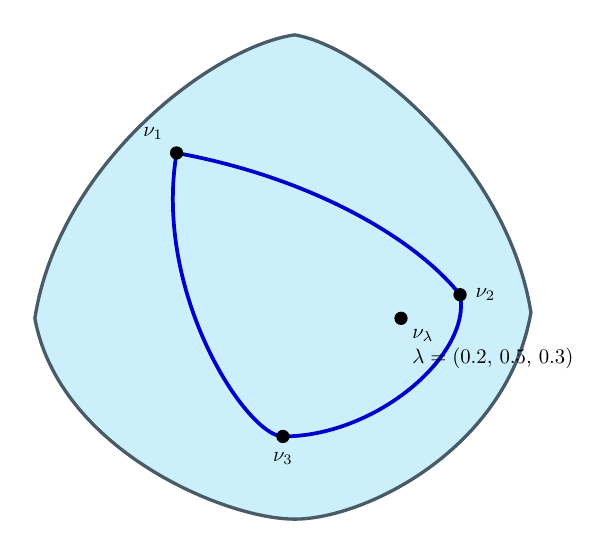}
    \caption{Weighted center of mass on a non-Euclidean space; \(\nu_\lambda\) is the synthesis of the points \(\nu_1, \nu_2, \nu_3\) with weights \(\lambda = (0.2,0.5,0.3)\)}
    \label{fig:ManifoldBarycenter}
\end{wrapfigure}
Now with a metric in hand, we would like to consider extending the notion of convex combinations of points to this setting.  An example of a weighted center of mass on a non-Euclidean domain is show in Figure \ref{fig:ManifoldBarycenter}.  In a Euclidean space, given points \(\left\{ x_i \right\}_{i=1}^p\) and \(\lambda \in \Delta^{p-1}\), the convex combination \(\sum \lambda_i x_i\) minimizes the function \(\sum \lambda_i \|x_i - x\|^2\). Replacing \(\|\cdot - \cdot\|^2\) by \(d^2(\cdot,\cdot)\) suggests how convex combinations might be reasonably defined in spaces lacking linear structure, such as \((\mathcal{P}_{2}(\Omega), W_2)\).
In \cite{aguehBarycentersWassersteinSpace2010}, the minimization problem
\begin{equation}
  \label{VF}
  \inf_\nu \left\{J(\nu) := \sum\limits_{i=1}^p \lambda_i W_2^2(\nu_i, \nu)\right\}
\end{equation}
is studied, wherein the coefficients \((\lambda_1, \dots, \lambda_p)\) are real numbers such that \(\lambda_i \ge 0\) for each \(i=1,\dots,p\), and \(\sum\lambda_i=1\), and where the \(\nu_i\) are probability measures with finite second moment defined on \(\R^n\). The fixed measures \(\{\nu_i\}_{i=1}^p\) are referred to as the \emph{reference measures}, and the coefficients \(\left\{ \lambda_i \right\}_{i=1}^p\) are the \emph{weights}. We refer to the functional \(J\) as the \emph{variance functional}. Minimizers \(\nu^*\) of \eqref{VF} are referred to as \emph{barycenters} of the reference measures \(\left\{ \nu_i \right\}_{i=1}^p\) with weights \(\left\{ \lambda_i \right\}_{i=1}^p\), and we write \(\nu^*=\Bary(\left\{ \nu_i \right\}, \lambda)\).  Recall that for a Hilbert space \(H\) and convex function \(\varphi: H \to \R\), the \emph{Fenchel conjugate} of \(\varphi\) is denoted \(\varphi^*\) and defined by the expression:
\begin{equation*}
  \varphi^*(y) := \sup\limits_{x} \langle x, y \rangle_{H} - f(x).
\end{equation*}
A major result of \cite{aguehBarycentersWassersteinSpace2010} is the following:

\begin{theorem}(\cite{aguehBarycentersWassersteinSpace2010}, Proposition 3.8)
  \label{ACTheorem}
  Let \(\mathcal{M}(\R^d)\) be the space of bounded Radon measures on \(\R^d\), let \(\mathcal{M}_{+}^1(\R^d)\) be the set of Radon probability measures on \(\R^d\), and let \[X' = \left\{ \mu\in \mathcal{M}(\R^d) : (1 + \|x\|^2)\mu \in \mathcal{M}(\R^d) \right\}.\] Assume that \(\nu_i\) vanishes on sets of Hausdorff dimension \(\le d-1\) for every \(i=1,\dots,p\), and let \(\nu \in X' \cap \mathcal{M}_+^1(\R^n)\). Then the following conditions are equivalent:
  \begin{enumerate}
    \item \(\nu\) solves \eqref{VF}.
    \item There exist convex potentials \(\psi_i\) such that \(\nabla\psi_i\) is Brenier's map transporting \(\nu_i\) to \(\nu\), and a constant \(C\) such that \(\sum\limits_{i=1}^p\lambda_i\psi_i^*(y)\le C + \frac{\|y\|^2}{2}, \forall y \in \R^n,\) with equality \(\nu\)-a.e. Furthermore, the potentials \(\psi_i^*\) are differentiable on \(\Supp(\nu)\) and satisfy \(\sum\limits_{i=1}^p\lambda_i\nabla\psi_i^* = \Id\).
  \end{enumerate}

\end{theorem}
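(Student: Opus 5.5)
The plan is to prove the equivalence through the multi-marginal formulation of \eqref{VF} and its convex dual, following the Agueh--Carlier strategy. First, write the variance functional via Kantorovich duality: for each \(i\),
\[\tfrac12 W_2^2(\nu_i,\nu)=\sup\Big\{\int f_i\,\diff\nu_i+\int g_i\,\diff\nu : f_i(x)+g_i(y)\le \tfrac12\|x-y\|^2\Big\}.\]
Equivalently, \(\tfrac12 W_2^2(\nu_i,\nu)=\int\frac{\|x\|^2}{2}\diff\nu_i+\int\frac{\|y\|^2}{2}\diff\nu-\inf\{\int\psi_i\,\diff\nu_i+\int\psi_i^*\,\diff\nu\}\), where the infimum is over convex \(\psi_i\). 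Summing with weights, \eqref{VF} becomes an inf-sup over \(\nu\) and the convex potentials. The dual problem is
\[\sup\Big\{\sum_i\lambda_i\int \psi_i\,\diff\nu_i : \sum_i\lambda_i\psi_i^*(y)\le \tfrac{\|y\|^2}{2}+C\ \ \forall y\Big\},\]
up to additive constants and sign conventions. Working in \(X'\), whose dual is the space of continuous functions with quadratic growth, lets me apply Fenchel--Rockafellar duality with the constraint \(\nu\in\mathcal M^1_+\). The output is that the minimum of \eqref{VF} equals the supremum of this dual problem, and that the dual supremum is attained. Attainment comes from the usual compactness argument for \(c\)-concave potentials normalized at a point, which are locally uniformly Lipschitz under the second-moment bounds.

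For \((2)\Rightarrow(1)\), take any competitor \(\mu\) and use \(W_2^2\)-duality with the given \(\psi_i\) as admissible pairs. Integrating the inequality \(\sum\lambda_i\psi_i^*\le C+\|y\|^2/2\) against \(\mu\) gives a lower bound for \(J(\mu)\) expressed in terms of \(\sum\lambda_i\int\psi_i\,\diff\nu_i\), \(C\), and the second moments. For \(\nu\), the pairs are optimal because \(\nabla\psi_i\) is the Brenier map (Theorem \ref{Brenier}), and the constraint holds with equality \(\nu\)-a.e., so the bound is attained. Hence \(J(\nu)\le J(\mu)\).

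For \((1)\Rightarrow(2)\), take a dual maximizer \((\psi_i)\) and the minimizer \(\nu\); strong duality forces zero duality gap. The gap is a sum of two nonnegative terms. The first is \(\sum\lambda_i\big[\int\psi_i\,\diff\nu_i+\int\psi_i^*\,\diff\nu-\int\langle x,y\rangle\,\diff\gamma_i\big]\) for optimal plans \(\gamma_i\). The second is \(\int\big(C+\tfrac{\|y\|^2}{2}-\sum\lambda_i\psi_i^*\big)\diff\nu\). Both must vanish. Vanishing of the first term says \(\gamma_i\) is supported on \(\partial\psi_i\). Since \(\nu_i\) charges no small sets, \(\psi_i\) is differentiable \(\nu_i\)-a.e., so \(\nabla\psi_i\) is the Brenier map from \(\nu_i\) to \(\nu\). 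Vanishing of the second term gives the equality \(\nu\)-a.e.

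The delicate step, and the one I expect to be the main obstacle, is the differentiability of the \(\psi_i^*\) on \(\Supp\nu\) together with \(\sum\lambda_i\nabla\psi_i^*=\Id\). My approach is to note that \(h:=C+\tfrac{\|y\|^2}{2}-\sum\lambda_i\psi_i^*\ge0\) vanishes on \(\Supp\nu\), using continuity of the convex \(\psi_i^*\) and closing the a.e. equality. So each point of the support is a minimum of \(h\). There \(\tfrac{\|y\|^2}{2}+C\) is a smooth function touching the convex function \(\sum\lambda_i\psi_i^*\) from above. A convex function touched from above by a \(C^1\) function is differentiable at the touching point, so \(\sum\lambda_i\psi_i^*\) is differentiable there with gradient \(y\). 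Since the subdifferential of a sum of convex functions is the sum of the subdifferentials and each \(\partial\psi_i^*\) is nonempty, a singleton sum forces each \(\partial\psi_i^*(y)\) to be a singleton. This yields differentiability of each \(\psi_i^*\) and \(\sum\lambda_i\nabla\psi_i^*(y)=y\). The remaining technical care is in the duality and attainment step, namely the growth bounds in \(X'\).
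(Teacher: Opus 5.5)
Your outline follows the Agueh--Carlier route; the paper only quotes the result and gives no proof of its own. The ingredients are Kantorovich duality for each term, Fenchel--Rockafellar duality between $X=\{f\in C(\R^d): f(x)/(1+\|x\|^2)\to 0\}$ and its dual $X'$, existence of optimal potentials, the splitting of the zero duality gap into two nonnegative terms, and touching from above. The multi-marginal formulation you mention at the start plays no role.

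The implication (2)$\Rightarrow$(1), the extremality relations, and your last paragraph are sound. The last paragraph only needs one remark: the constraint itself makes every $\psi_i^*$ finite on all of $\R^d$, since each $\psi_i^*>-\infty$ and $\lambda_i>0$. That finiteness is what gives continuity of $h$, nonempty subdifferentials, and the sum rule.

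There are two slips. First, the dual as you wrote it has value $+\infty$: replace every $\psi_i$ by $\psi_i+a$ and let $a\to\infty$. It should be $\inf\{\sum_i\lambda_i\int\psi_i\,d\nu_i+C\}$ under your constraint. Second, Fenchel--Rockafellar, run on $X$ where the conjugates of $\nu\mapsto\frac12W_2^2(\nu_i,\nu)$ are finite and continuous, gives equality of values and attainment on the measure side only.

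So attainment on the potential side is a separate step, and it is the step that carries (1)$\Rightarrow$(2). The reason you give for it does not work as stated. Potentials normalized at a point are not locally uniformly Lipschitz on $\R^d$: normalizing $\psi^*(0)=0$ still allows $\psi^*(y)=M\langle e,y\rangle$ with $\|e\|=1$ and $M$ arbitrary. Second moments alone do not bound them either, so the compact-support argument does not transplant.

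What works is the following.
\begin{enumerate}
\item Absorb $C$ into the $\psi_i$ and use the invariance $\psi_i\mapsto\psi_i+c_i$ with $\sum_i\lambda_ic_i=0$. This arranges $C=0$ and $\psi_i^*(0)\le0$ for all $i$, hence $\psi_i\ge0$. Along a minimizing sequence, every $\int\psi_i^n\,d\nu_i$ then stays bounded.
\item Because $\nu_i$ charges no hyperplane, $\mathrm{int}\,\mathrm{conv}\,\Supp\nu_i$ is nonempty. For any $x_i$ in it, every closed half-space with $x_i$ on its boundary has $\nu_i$-mass at least some $\delta_i>0$.
\item If $\psi_i^n(x_i)>M$, strictly separating $x_i$ from the closed convex set $\{\psi_i^n\le M\}$ gives $\psi_i^n>M$ on such a half-space. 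Hence $M\delta_i\le\int\psi_i^n\,d\nu_i$, so $\psi_i^n(x_i)$ is bounded in $n$.
\item Consequently $(\psi_i^n)^*(y)\ge\langle x_i,y\rangle-K$ with $K$ independent of $n$. The constraint converts these lower bounds into local upper bounds, so the finite convex functions $(\psi_i^n)^*$ are locally equi-Lipschitz on $\R^d$.
\item Extract locally uniform limits $u_i$. The constraint passes to the limit. The functions $\psi_i:=u_i^*$ are optimal, because conjugation is lower semicontinuous under pointwise convergence and Fatou applies since $\psi_i^n\ge0$.
\end{enumerate}

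In short, the step you flagged as the main obstacle is routine, and the step you treated as routine is where the proof needs an actual argument.
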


The last piece of background from the theory of transport in the Euclidean setting is the \emph{barycentric coding model for measures}. Fix a family of reference measures \(\left\{ \nu_i \right\}_{i=1}^p \subset \mathcal{P}_{2,ac}(\R^n)\), and let \(\nu_\lambda\) be the minimizer of \eqref{VF} for a given \(\lambda \in \Delta^{p-1}\). Then we denote by \(\Bary( \left\{ \nu_i \right\}_{i=1}^p) = \left\{ \nu_\lambda : \lambda \in \Delta^{p-1}\right\}\). The barycentric coding model consists of two problems: \emph{synthesis}, the problem of finding \(\nu_\lambda\) for given \(\lambda \in \Delta^{p-1}\), and \emph{analysis}, the problem of finding \(\lambda\) for given \(\nu_\lambda\). The content of Theorem \ref{ACTheorem} is that the synthesis problem is well-defined, provided certain regularity assumptions hold. In \cite{werenskiMeasureEstimationBarycentric2022}, it is shown under mild regularity assumptions on the family of reference measures that the analysis problem can also be solved.
\begin{theorem}[\cite{werenskiMeasureEstimationBarycentric2022}, Proposition 1]
  Let \(\left\{ \nu_i \right\}_{i=1}^p\) be a collection of suitably regular probability measures on \(\Omega\). Then \(\nu_0 \in \Bary(\left\{ \nu_i \right\}_{i=1}^p)\) if and only if
  \begin{equation}
	\min\limits_{\lambda\in\Delta^{p-1}}\lambda^TA\lambda=0,
  \label{AP}
  \end{equation}
  where, letting \(T_i\) be the optimal transport map from \(\mu_0\) to \(\mu_i\), \(A \in \R^{p\times p}\) is defined by
  \begin{equation}
	A_{ij} = \langle T_i - \Id, T_j - \Id\rangle_{L^2(\nu_0)}.
  \label{GramMatrix}
  \end{equation}
  Furthermore, if the minimum value of \(\lambda^TA\lambda\) is \(0\), and \(\lambda^*\) is an optimal argument, then \(\nu_0 = \nu_{\lambda^*}\).
\end{theorem}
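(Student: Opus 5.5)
The plan is to derive both directions from the first-order characterization of barycenters in Theorem \ref{ACTheorem}. The key identity is that, for any \(\lambda\in\Delta^{p-1}\),
\[
\lambda^TA\lambda=\Big\|\sum_i\lambda_i(T_i-\Id)\Big\|_{L^2(\nu_0)}^2 .
\]
This follows at once from bilinearity of the inner product in \eqref{GramMatrix}. In particular \(A\) is a Gram matrix, hence positive semidefinite, so the minimum in \eqref{AP} is nonnegative. Moreover, it equals zero exactly when some \(\lambda\in\Delta^{p-1}\) satisfies \(\sum_i\lambda_iT_i=\Id\) \(\nu_0\)-a.e., using \(\sum_i\lambda_i=1\). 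The statement therefore reduces to showing that \(\nu_0=\nu_\lambda\) if and only if \(\sum_i\lambda_iT_i=\Id\) holds \(\nu_0\)-a.e. Here "suitably regular" is read as: the \(\nu_i\) and \(\nu_0\) are absolutely continuous with finite second moment, so that Brenier maps exist in both directions and are mutually inverse.

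\emph{Forward direction.} Suppose \(\nu_0=\nu_\lambda\). Theorem \ref{ACTheorem} supplies convex \(\psi_i\) with \(\nabla\psi_i\) the Brenier map from \(\nu_i\) to \(\nu_0\). It also gives \(\sum_i\lambda_i\nabla\psi_i^*=\Id\) on \(\Supp(\nu_0)\). By Brenier's theorem (Theorem \ref{Brenier}) and the standard fact that \(\nabla\psi_i^*\) inverts \(\nabla\psi_i\) almost everywhere, \(\nabla\psi_i^*\) is the optimal map from \(\nu_0\) to \(\nu_i\), i.e.\ \(T_i=\nabla\psi_i^*\) \(\nu_0\)-a.e. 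Hence \(\sum_i\lambda_iT_i=\Id\), and the minimum is zero, attained at \(\lambda\).

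\emph{Converse.} Suppose \(\lambda^*\) attains value zero, so \(\sum_i\lambda_i^*T_i=\Id\) \(\nu_0\)-a.e. Write \(T_i=\nabla\varphi_i\) with \(\varphi_i\) convex, and set \(\psi_i=\varphi_i^*\). Then \(\nabla\psi_i\) is the Brenier map from \(\nu_i\) to \(\nu_0\), and \(\sum_i\lambda^*_i\nabla\psi_i^*=\Id\) on \(\Supp(\nu_0)\).

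To invoke condition (2) of Theorem \ref{ACTheorem}, I also need the potential inequality
\[
\sum_i\lambda_i^*\psi_i^*(y)\le C+\tfrac{\|y\|^2}{2}\quad\text{for all } y,
\]
with equality \(\nu_0\)-a.e. This is the main obstacle. The gradient identity only makes \(\Phi:=\sum_i\lambda_i^*\varphi_i-\tfrac12\|\cdot\|^2\) have zero gradient on \(\Supp(\nu_0)\). To get \(\Phi\) constant there, I would assume \(\Supp(\nu_0)\) is connected (e.g.\ convex), which I take as part of "suitably regular". The global inequality off the support is the harder part. Here I would try to modify the potentials outside \(\Supp(\nu_0)\), e.g.\ by taking the convex envelope, or by replacing each \(\varphi_i\) by the largest convex extension consistent with its values on the support. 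This does not change the maps \(\nu_0\)-a.e.

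If that fails, I would bypass condition (2) and argue directly with a variance comparison. For any \(\nu\), let \(S\) be a map pushing \(\nu_0\) to \(\nu\) (using regularity). Then \((T_i,S)_\#\nu_0\) is a coupling of \(\nu_i\) and \(\nu\). Expanding the squares and using \(\sum_i\lambda_i^*T_i=\Id\) gives
\[
\sum_i\lambda_i^*W_2^2(\nu_i,\nu)\le\sum_i\lambda_i^*\|T_i-S\|^2_{L^2(\nu_0)}=\sum_i\lambda_i^*\|T_i-\Id\|^2_{L^2(\nu_0)}+\|S-\Id\|^2_{L^2(\nu_0)} .
\]
The cross term vanishes because \(\sum_i\lambda_i^*(T_i-\Id)=0\). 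The right-hand side is at least \(\sum_i\lambda_i^*W_2^2(\nu_i,\nu_0)\), but the left-hand side is only an upper bound on the variance at \(\nu\). So this route only shows that \(\nu_0\) minimizes the coupling-restricted functional, not the variance functional \eqref{VF} itself. It still requires the convexity input from Theorem \ref{ACTheorem} (equivalently, a dual/Kantorovich-potential argument) to conclude that \(\nu_0=\nu_{\lambda^*}\).
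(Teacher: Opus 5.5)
Your reduction and your forward direction are correct. $\lambda^TA\lambda=\|\sum_i\lambda_i(T_i-\Id)\|^2_{L^2(\nu_0)}$, so the minimum in \eqref{AP} vanishes iff $\sum_i\lambda_iT_i=\Id$ $\nu_0$-a.e.\ for some $\lambda\in\Delta^{p-1}$. When $\nu_0=\nu_\lambda$, Theorem \ref{ACTheorem} plus uniqueness of Brenier maps gives exactly this. The paper gives no proof of its own; it quotes the result. The cited argument is, as far as I can tell, your reduction combined with an external theorem saying that, under the regularity hypotheses, this fixed-point condition also \emph{implies} $\nu_0=\nu_\lambda$.

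The genuine gap is that converse, including the ``furthermore'' clause. Under your reading of ``suitably regular'' as absolute continuity plus finite second moments, it is not merely unproved but false: $\sum_i\lambda_iT_i=\Id$ only says $\nu_0$ is a critical point (a Karcher mean) of the variance. Here is a counterexample in $\R^2$. Take $r=0.1$, let $\nu_0$ be uniform on $B(0,r)\cup B(e_1,r)$, and set $T_1=\Id+e_2$ on $B(0,r)$, $T_1=\Id-e_2$ on $B(e_1,r)$, $T_2=2\Id-T_1$, and $\nu_i=(T_i)_\#\nu_0$.
\begin{itemize}
\item Over any cycle, the constant part of $T_1-\Id$ telescopes out of $\sum_k\langle T_1(x_k)-x_k,x_{k+1}-x_k\rangle$. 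Each excursion of the cycle into $B(0,r)$ then contributes at most $4r=0.4$ to that sum, but at least $(1-2r)^2=0.64$ to $\tfrac12\sum_k\|x_{k+1}-x_k\|^2$. So $T_1$, and likewise $T_2$, is cyclically monotone and hence a Brenier map.
\item Then $\tfrac12(T_1+T_2)=\Id$, and $\lambda^TA\lambda=(\lambda_1-\lambda_2)^2$ has minimum $0$, attained only at $\lambda^*=(\tfrac12,\tfrac12)$.
\item But $W_2(\nu_0,\nu_1)=W_2(\nu_0,\nu_2)=1$, while $W_2(\nu_1,\nu_2)\le1$ (shift each ball horizontally by one unit). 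So $\nu_0$ lies on no geodesic from $\nu_1$ to $\nu_2$ and belongs to $\Bary(\{\nu_1,\nu_2\})$ for no weights.
\item Concretely, the uniform measure on $B((\tfrac12,\pm1),r)$ has variance at most $\tfrac14$ for $\lambda^*$, versus $1$ for $\nu_0$.
\end{itemize}

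So the missing ingredient is a genuine sufficiency theorem. ``Suitably regular'' must include hypotheses under which Karcher means are barycenters, and the converse is then quoted from such a result (cf.\ \cite{alvarez-estebanFixedpointApproachBarycenters2016, zemelFrechetMeansProcrustes2019}). It cannot be derived from Theorem \ref{ACTheorem} and the gradient identity alone, and your partial repairs do not supply it.
\begin{itemize}
\item Connectedness of $\Supp(\nu_0)$ excludes the example above. But there the potentials can even be normalized so that $\sum_i\lambda_i\varphi_i-\tfrac12\|\cdot\|^2$ vanishes on both balls. What actually fails is the inequality off the support (e.g.\ at $y=(\tfrac12,\tfrac12)$), which is precisely the step you left open.
\item Your proposed modification points the wrong way. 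The required inequality is an upper bound on $\varphi_i=\psi_i^*$, so you need the \emph{smallest} convex extension, $\varphi_i(y)=\sup_z[\varphi_i(z)+\langle\nabla\varphi_i(z),y-z\rangle]$ with $z$ ranging over the support. The largest extension is $+\infty$ outside the convex hull of the support.
\item Even with the smallest extension, the identity on the support only controls $\sup_z\sum_i\lambda_i[\varphi_i(z)+\langle\nabla\varphi_i(z),y-z\rangle]=\sup_z(C+\tfrac12\|y\|^2-\tfrac12\|y-z\|^2)\le C+\tfrac12\|y\|^2$. You need to bound the larger quantity $\sum_i\lambda_i\sup_{z_i}[\cdots]$, in which each $i$ picks its own $z_i$.
\item As you noted yourself, the coupling argument only bounds the variance at $\nu$ from above, so it cannot show that $\nu_0$ is a minimizer.
\end{itemize}
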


Thus, if we can approximate the matrix \(A\), and solve the quadratic program \eqref{AP}, we can solve the analysis problem for the barycentric coding model; the geometry of this result is illustrated in Figure \ref{fig:ManifoldAnalysis}.  We note that this assumes perfect access to the probability measures under consideration; estimation of transport maps via i.i.d. samples from the underlying measures enables estimation of the optimal $\lambda$ in this statistical setting.

\begin{wrapfigure}{r}{0.4\textwidth}
    \centering
\includegraphics[width=0.4\textwidth]{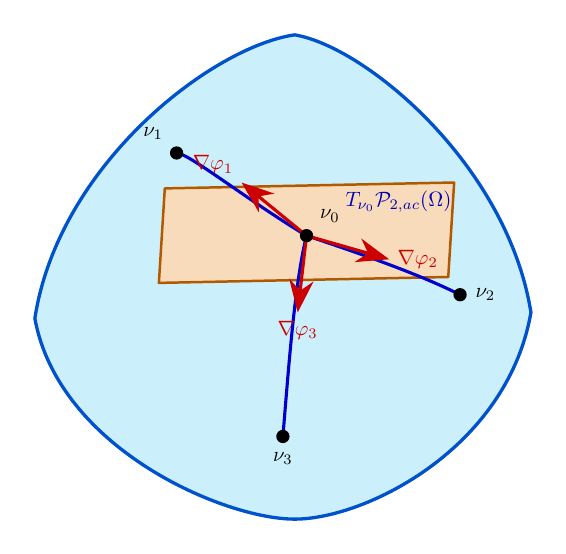}
    \caption{To solve the analysis problem, we compute the tangent vectors associated to the unit time geodesic curves connecting the target measure to each reference measure, then form the associated Gram matrix.}
    \label{fig:ManifoldAnalysis}
\end{wrapfigure}

\subsection{Entropic Regularization of the Optimal Transport Problem and Sinkhorn's Algorithm}
\label{EOTSec}
Up to this point, reference has been made frequently to regularity of measures --- but in applications, typically one works with empirical measures based on some set of observations. In the case of finitely supported measures the Kantorovich problem may be optimized via standard techniques in convex programming. However, linear programs exhibit poor scaling in the size of that support (\(\mathcal{O}(n^3\log n)\)) \cite{peleFastRobustEarth2009}. In \cite{cuturiSinkhornDistancesLightspeed2013}, it was shown that transport distances of empirical measures, i.e., normalized superpositions of Dirac point masses, could be approximated more quickly by appending a regularization term. Let \(\hat{\mu} = \sum\limits_{i=1}^n \alpha_i \delta_{x_i}, \hat{\nu} = \sum\limits_{i=1}^m\beta_i \delta_{y_i}\), be finitely supported measures, fix \(\varepsilon_{reg} > 0\), define the pairwise distance matrix \(C = [C_{i,j}]_{i=1, j=1}^{n,m}\) via
\begin{equation*}
  C_{i,j} = \|x_i - y_j\|^2,
\end{equation*}
and for \(\gamma\in\Pi(\hat{\mu}, \hat{\nu})\) define the entropy of \(\gamma\) by 
\begin{equation*}
  H(\gamma) := -\sum\limits_{i,j}\gamma_{i,j}(\log\gamma_{i,j} - 1).
\end{equation*}
Then the entropic optimal transport cost for \(\hat{\mu}\) and \(\hat{\nu}\) is defined via
\begin{equation}
  \label{EOTCost}
  OT_{\varepsilon_{reg}}(\hat{\mu}, \hat{\nu}) := \inf \left\{ \langle\gamma,C\rangle + \varepsilon_{reg} H(\gamma) \right\}.
\end{equation}
\begin{figure}
    \centering
    \includegraphics[width=0.95\textwidth]{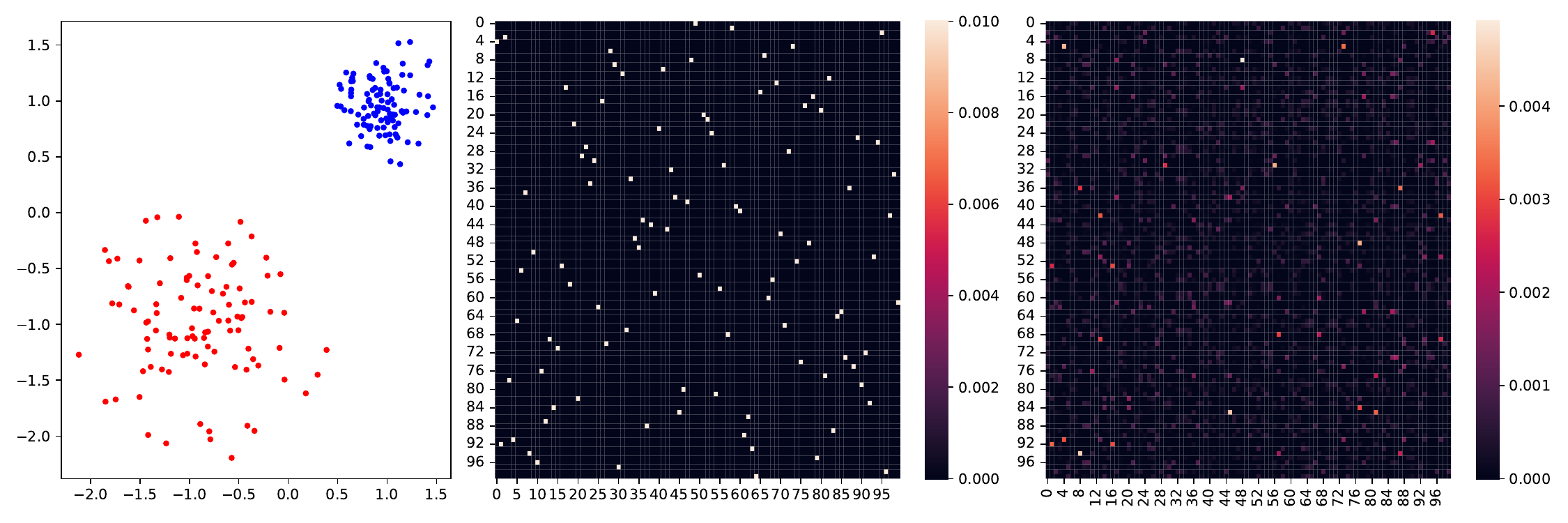}
    \caption{Comparison between a classic optimal transport plan and an entropically regularized one. Left: two samples of \(100\) points each in the plane (normally distributed with different means and variances). Center: optimal assignment between points to minimize transport cost, obtained via computation of a linear program. Right: entropic transport plan for minimizing entropic transport cost. Notice the difference in sparsity between the two plans: entropic transport results in a diffusive effect.}
    \label{fig:EntropicPlans}
\end{figure}

In the limit as \(\varepsilon_{reg} \to 0^+\), we have \(OT_{\varepsilon_{reg}}(\hat{\mu}, \hat{\nu}) \to OT(\hat{\mu}, \hat{\nu})\) \cite{gabrielComputationalOptimalTransport2019}. The effect of the regularization is demonstrated in Figure \ref{fig:EntropicPlans} --- the optimal plan in the unregularized case is a pure permutation matrix, assigning mass to move from one specific point to another in a cost-minimizing way, while the regularization introduces a diffusive effect, resulting in a much less sparse plan.
Computing the entropic optimal transport cost is accomplished via the Sinkhorn algorithm \cite{sinkhornConcerningNonnegativeMatrices1967, sinkhornDiagonalEquivalenceMatrices1967}, which converges to an optimal solution of \eqref{EOTCost} in an amount of time which is roughly quadratic in the number of samples \cite{cuturiSinkhornDistancesLightspeed2013}. For this reason, entropic regularization has become a popular computational technique in the optimal transport community \cite{genevayLearningGenerativeModels2018, mena2019statistical, cuturiDifferentiableRankingSorting2019, pooladian2021entropic, masudMultivariateSoftRank2023, mallery2025synthesis, werenskiOptimalTransportMaps2024, montesumaRecentAdvancesOptimal2025}.

\subsection{Optimal Transport on Discrete Spaces}
\label{DOTDeets}
\subsubsection{Definitions and Calculus for Discrete Metric Spaces}
We now describe how the theory of transportation differs when the underlying metric space is discrete. A \emph{(finite) discrete metric space} is a pair \((\X, d)\) where \(\X = \{1,2,\dots,n\}\), with \(d: \X\times\X\to\R^{+}\) satisfying the classic properties of a metric (symmetry, positive definiteness, and the triangle inequality).

Now, a probability measures on a graph always has a simple structure, namely it is a non-negative function \(p:V\to\R^+\) such that \(\sum p_i = 1\). It is easy to see that the space of functions on the vertex set of \(G\) is identified with \(\R^{|V|}\) and hence that the set of probability measures is the \(|V|\)-simplex.

In the Euclidean setting, the approach to the optimal transport problem based on the Kantorovich formulation and transportation plans, as opposed to transportation maps, arose in part because it was easier to treat analytically. While the problem remains tractable when the underlying domain of the measures is discrete, certain technical issues arise that make it unsuitable as a metric on the simplex. In particular, one can show that if \(\X_2 = (\{0,1\}, \{(0,1)\})\) is the simple two point graph, with two vertices connected by an undirected edge, and \(W_2\) is the classic \newline \(L^2\)-Kantorovich distance, then as metric spaces \((\mathcal{P}(\X_2), W_2) \simeq ([0,1], \sqrt{|x-y|})\). This is problematic, since the space on the RHS admits no non-trivial geodesics; in particular, every geodesic curve is \(2\)-H{\"o}lder continuous, and hence constant. Thus there can be no theory of gradient flows or of variance minimizing measures \cite{maasGradientFlowsEntropy2011}.

To remedy this issue, a construction based on the Benamou-Brenier formula was proposed in \cite{maasGradientFlowsEntropy2011}. The key to the construction is adapting the continuity equation to the setting of a discrete metric space, defining an action functional for curves in the \(|V|\)-simplex, and declaring the squared distance between two measures \(\rho_A\) and \(\rho_B\) to be the minimal action of a curve \(\rho(t)\) where \(\rho(0) = \rho_A,\) and \(\rho(1) = \rho_B\), where the minimization occurs over the space of all curves satisfying the adapted continuity equation with prescribed temporal boundary conditions. 

The construction in \cite{maasGradientFlowsEntropy2011} is phrased in terms of Markov kernels on finite sets. Let \(\X\) be a finite set, let \(Q: \X \times \X \to [0, \infty)\) be the transition rate matrix of a continuous time irreducible Markov chain on \(\X\), and let \(\pi\) be the (unique) stationary distribution of the process, satisfying \(\pi: \X \to (0,1), \sum_i \pi(i) = 1\). Assume also that \(Q\) and \(\pi\) satisfy a detailed balance equation, meaning that \(\pi(i)Q(i,j) = \pi(j)Q(j,i)\) for all \(i,j \in \X\). The set of probability measures \emph{with respect to \(\pi\)} is denoted
\begin{equation*}
  \mathcal{P}(\X) := \left\{ \rho:\X\to\R_0^+ : \sum\limits_{i=1}^{|V|} \rho(i) \pi(i) = 1\right\}.
\end{equation*}

In order to describe the adaptation of the continuity equation to the discrete settings, we first need to recall the tools of discrete calculus. Let \(\X = (V,E)\) and \(d\) be the shortest path metric. A function on (the nodes of) \(\X\) is identified with a vector \(f \in \R^{|V|}\). A vector field on (the edges of) \(\X\) is a function \(m \in \R^{|E|}\). The spaces \(\R^{|V|}\) and \(\R^{|E|}\) are equipped with the respective inner products
\begin{equation*}
  \langle\varphi,\psi\rangle_\pi = \sum\limits_{x \in V} \varphi(x)\psi(x)\pi(x); \quad \langle\Phi,\Psi\rangle_Q = \frac{1}{2}\sum\limits_{x,y\in V}\Phi(x,y)\Psi(x,y)Q(x,y)\pi(x).
\end{equation*}
The graph gradient of a function \(f\) is a vector field \(\nabla_\X f\), and is defined by
\begin{equation*}
  \nabla_{\X}f(x,y) = f(x) - f(y).
\end{equation*}
The graph divergence of a vector field \(m\) is a function \(\divx m\), defined by
\begin{equation*}
  \divx m(x) = \frac{1}{2}\sum_{y \in V} (m(y,x) - m(x,y))Q(x,y).
\end{equation*}
The graph Laplacian of a function \(f\) is the composition of the graph gradient and graph divergence, and is denoted \(\Delta_{\X}f\).  The operator \(\Delta_{\X}\) admits the explicit form
\begin{equation*}
    \Delta_{\X}f(x) = \sum\limits_{y\in\X} Q(x,y)(f(y) - f(x)) = ((Q - D)f)(x),
\end{equation*}
where \(D := \diag\left(\sum_{y \in \X}Q(x,y)\right)_{x\in \X}\). The graph Laplacian has been an object of intense study and finds applications in network cluster identification, vertex dimensionality reduction, and Fourier analysis on graphs; for details, we refer the reader to the surveys \cite{merrisSurveyGraphLaplacians1995, merris1994laplacian, stankovicGraphSignalProcessing2019}.

In the context of graph calculus, there is no way to multiply a vector field by a function and obtain another vector field. Therefore, if the continuity equation \eqref{CCE} is to be made legible in the context of a discrete metric space, some choice must be made about how to ``scale'' a vector field. To that end, an \emph{admissible mean} \(\theta(\cdot,\cdot): \R^+ \times \R^+ \to \R^+\) is chosen. For technical reasons, there are certain properties that \(\theta\) will need to satisfy: it must be concave, lower semi-continuous, \(1\)-homogeneous, symmetric, and \(C^2\) on \((0, \infty)^2\). The choices most often considered in the literature are the logarithmic mean and the geometric mean, respectively given by
\begin{equation*}
  \theta_{\log}(s,t) = \frac{s - t}{\log(s) - \log(t)}; \quad \theta_{geom}(s,t) = \sqrt{st}.
\end{equation*}
In our implementation, we prefer \(\theta_{geom}\) simply because it is easier to work with on a practical level. Once \(\theta\) is chosen, then for any measure \(\rho \in \mathcal{P}(\X)\), define the vector field \(\theta(\rho)(i,j) := \theta(\rho(i),\rho(j))\). Then the set of solutions to the \emph{discrete continuity equation} is defined by

\begin{equation*}
  \mathcal{CE}(\rho_A, \rho_B) := \left\{ (\rho, m) : \frac{\partial\rho}{\partial t} + \divx(\theta(\rho) \odot m) = 0, \: \rho(0, \cdot) = \rho_A, \: \rho(1, \cdot) = \rho_B \right\},
\end{equation*}
where \(\odot\) means componentwise multiplication. Then, if \(\rho:[0,1] \to \R^\X\) and \(m:[0,1]\to \R^{\X\times\X}\), the action of the pair \((\rho, m)\) is defined to be

\begin{equation*}
  \A(\rho, m) := \frac{1}{2} \int_0^1 \sum\limits_{i,j \in \X} \frac{m(t,i,j)^2}{\theta(\rho(t,i), \rho(t,j))}Q(i,j)\pi(i) \mathrm{d}t.
\end{equation*}
Taking \(\I_{\mathcal{CE}(\rho_A, \rho_B)}(\rho,m)\) to be \(0\) for \((\rho,m) \in \mathcal{CE}(\rho_A, \rho_B)\) and \(+\infty\) otherwise, the energy of the pair \((\rho,m)\) is defined to be
\begin{equation*}
  \E(\rho,m) = \A(\rho,m) + \I_{\mathcal{CE}(\rho_A, \rho_B)}(\rho, m),
\end{equation*}
and finally the metric \(\W\) is defined by

\begin{equation*}
  \W(\rho_A, \rho_B) := \inf \left\{ \sqrt{\E(\rho, m)} : (\rho, m) \in \mathcal{CE}(\rho_A, \rho_B) \right\}.
\end{equation*}
For full details of the construction and proof that it does indeed define a proper Riemannian structure on the interior of the simplex, we refer the reader to \cite{maasGradientFlowsEntropy2011}. Two things which are important to note concerning the discrete transport geometry. Firstly, the tangent space has an interpretation analogous to the one given in the Euclidean setting, which is to say that vectors in the tangent space at a point can are in correspondence with the set of graph gradients of functions on the graph,
\[T_{\nu}\mathcal{P}(\X) = \left\{\nabla\varphi \in \R^{|V|\times|V|} : \varphi \in \R^{|V|}\right\}.\] 
Second, that the metric tensor has an explicit expression in terms of the admissible mean \(\theta\) --- this is key, because it allows us to form a the Gram matrix needed for solving the analysis problem. We have that for \(\nabla\varphi, \nabla\psi \in T_{\nu}\mathcal{P}(\X)\)
\[\langle\nabla\varphi,\nabla\psi\rangle_{\nu} = \frac{1}{2}\sum\limits_{x,y \in V}\theta(\nu(x),\nu(y))\nabla\varphi(x,y)\nabla\psi(x,y).\]
Lastly, and perhaps most important from a theoretical perspective, is that as in the case with the \(W_2\) metric for probability measures on \(\R^d\), \(\W^2\) is convex with respect to linear interpolation.

\begin{theorem}[\cite{erbarRicciCurvatureFinite2012}, Proposition 2.11]
    For \(\tau \in [0,1]\) and \(i,j=0,1\), take \(\rho_i^j \in \mathcal{P}(\X)\) and set \(\rho_i^\tau := (1-\tau)\rho_i^0 + \tau\rho_i^1\). Then
    \[\W^2(\rho_{0}^\tau, \rho_1^{\tau}) \le (1- \tau)\W^2(\rho_0^0, \rho_1^0) + \tau\W^2(\rho_0^1,\rho_1^1).\]
\end{theorem}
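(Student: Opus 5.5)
The plan is to prove joint convexity of the squared distance by mixing near-optimal curves linearly and using joint convexity of the action integrand. Fix \(\varepsilon > 0\). For \(j = 0,1\), choose pairs \((\rho^j, m^j)\in \mathcal{CE}(\rho_0^j,\rho_1^j)\) with
\[
\A(\rho^j,m^j) \le \W^2(\rho_0^j,\rho_1^j) + \varepsilon .
\]
It is cleaner to pass to the flux variable \(V := \theta(\rho)\odot m\). In this variable the continuity equation becomes \(\partial_t\rho + \divx V = 0\), which is \emph{linear} in \((\rho,V)\). The action becomes
\[
\frac12\int_0^1\sum_{i,j}\alpha\bigl(V(t,i,j),\rho(t,i),\rho(t,j)\bigr)\,Q(i,j)\pi(i)\,\mathrm{d}t,
\qquad \alpha(v,s,r) := \frac{v^2}{\theta(s,r)},
\]
with the usual conventions \(\alpha(0,s,r)=0\) when \(\theta(s,r)=0\), and \(+\infty\) if \(v\ne 0\) there.

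Next, define the interpolated curve \(\rho^\tau := (1-\tau)\rho^0 + \tau\rho^1\) and flux \(V^\tau := (1-\tau)V^0 + \tau V^1\).
\begin{itemize}
\item By linearity, \((\rho^\tau,V^\tau)\) solves the discrete continuity equation.
\item Its endpoints are \(\rho_0^\tau\) and \(\rho_1^\tau\), exactly as in the statement.
\item The mass constraint \(\sum_i \rho(i)\pi(i)=1\) and nonnegativity are preserved under convex combination.
\end{itemize}
So \((\rho^\tau, m^\tau)\), with \(m^\tau := V^\tau/\theta(\rho^\tau)\) where defined, is admissible for \(\W(\rho_0^\tau,\rho_1^\tau)\).

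The key step is that \(\alpha\) is jointly convex on \(\R\times[0,\infty)^2\). This is where the hypotheses on \(\theta\) enter. The map \((v,w)\mapsto v^2/w\) is jointly convex on \(\R\times(0,\infty)\), being the perspective of \(v\mapsto v^2\), and it is nonincreasing in \(w\). The map \((s,r)\mapsto\theta(s,r)\) is concave. Composing a convex function that is nonincreasing in its second argument with a concave function yields a convex function. Lower semicontinuity of \(\theta\) and the stated conventions handle the boundary where \(\theta=0\), giving a convex lower semicontinuous extension.

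With convexity in hand, the conclusion follows directly. Applying joint convexity pointwise in \((t,i,j)\) and integrating against the positive weights \(Q(i,j)\pi(i)\) gives
\[
\W^2(\rho_0^\tau,\rho_1^\tau) \le \A(\rho^\tau,m^\tau) \le (1-\tau)\A(\rho^0,m^0) + \tau\A(\rho^1,m^1) \le (1-\tau)\W^2(\rho_0^0,\rho_1^0) + \tau\W^2(\rho_0^1,\rho_1^1) + \varepsilon .
\]
Letting \(\varepsilon\to 0\) gives the claim.

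The main obstacle is not the convexity computation but measure-theoretic admissibility. The infimum defining \(\W\) must be taken over a class of curves that is closed under the linear mixing above, namely weak solutions with \(\rho\) absolutely continuous and \(V\) locally integrable. One also has to check that curves touching the boundary of the simplex, where \(\theta\) may vanish, cause no trouble. I would handle this by working throughout with the \((\rho,V)\) formulation as in \cite{maasGradientFlowsEntropy2011}, where \(\W\) can equivalently be defined with \(\alpha\) extended by lower semicontinuity. With that formulation the argument above goes through verbatim.
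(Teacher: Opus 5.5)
Your proof is correct and is essentially the argument of the cited source \cite{erbarRicciCurvatureFinite2012}, which the paper quotes without proof: pass to the flux variable $V=\theta(\rho)\odot m$, in which the continuity equation is linear, mix $\varepsilon$-optimal pairs, and conclude from the joint convexity of $\alpha(v,s,r)=v^2/\theta(s,r)$ together with the equivalence of the $(\rho,m)$ and $(\rho,V)$ formulations. Two harmless remarks: your action in terms of $V$ is the standard one, $\frac12\int_0^1\sum\theta(\rho)m^2\,Q\pi\,\mathrm{d}t$ (consistent with the paper's metric tensor), whereas the paper's displayed integrand $m^2/\theta$ is only right if $m$ denotes the flux; and lower semicontinuity of $\alpha$ would need upper semicontinuity (i.e.\ continuity) of $\theta$ rather than lower semicontinuity, though only convexity of $\alpha$ is used here.
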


\begin{corollary}
    Fix \(\{\nu_i\}_{i=1}^p \subset \mathcal{P}(\X)\), let \(\lambda\in\Delta^{p-1}\), and define the variance functional \newline \(\J: \mathcal{P}(\X) \to \R\) via
    \begin{equation}
    \label{GraphVF}
        \J[\nu] = \sum\limits_{i=1}^p \frac{\lambda_i}{2}\W^2(\nu,\nu_i).
    \end{equation}
    Then there exists \(\nu^* \in \mathcal{P}(\X)\) which minimizes \(\J\).
\end{corollary}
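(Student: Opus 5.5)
The plan is a direct-method argument on the compact set \(\mathcal{P}(\X)\). Since \(\X\) is finite and \(\pi>0\), the set \(\mathcal{P}(\X)=\{\rho\ge 0: \sum_i\rho(i)\pi(i)=1\}\) is a closed and bounded subset of \(\R^{|V|}\), hence compact in the Euclidean topology. It therefore suffices to show that \(\J\) is finite and lower semicontinuous on \(\mathcal{P}(\X)\) with respect to the Euclidean topology. Then the infimum is finite, a minimizing sequence has a convergent subsequence, and the limit \(\nu^*\) is a minimizer.

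\textbf{Step 1: finiteness.} I will use the fact from \cite{maasGradientFlowsEntropy2011} that \(\W\) is a genuine finite metric on all of \(\mathcal{P}(\X)\), including the boundary of the simplex, under irreducibility and the stated assumptions on \(\theta\). Consequently \(\W^2(\nu,\nu_i)<\infty\) for every \(\nu\), so \(\J\) is real-valued and bounded below by \(0\).

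\textbf{Step 2: lower semicontinuity.} I will show that each map \(\nu\mapsto \W^2(\nu,\nu_i)\) is continuous, or at least lower semicontinuous, in the Euclidean topology. The main route is the convexity result of \cite{erbarRicciCurvatureFinite2012}, Proposition 2.11 quoted above. Fix \(\rho_1^0=\rho_1^1=\nu_i\); then \(\nu\mapsto\W^2(\nu,\nu_i)\) is a convex function on the compact convex set \(\mathcal{P}(\X)\). Convexity alone gives continuity on the relative interior but only upper bounds at the boundary, so I also need lower semicontinuity. For this I will invoke the known comparison from \cite{maasGradientFlowsEntropy2011}: \(\W\) dominates a multiple of the total variation and metrizes the Euclidean topology on \(\mathcal{P}(\X)\), i.e., \(\W(\rho,\sigma)\le C\sqrt{\|\rho-\sigma\|}\) together with a reverse lower bound. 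This gives continuity of \(\W(\cdot,\nu_i)\) directly via the triangle inequality \(|\W(\nu,\nu_i)-\W(\nu',\nu_i)|\le \W(\nu,\nu')\).

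If that comparison is not available, the fallback is lower semicontinuity of the action. Take \(\nu_n\to\nu\) with near-optimal curves \((\rho^n,m^n)\). Uniform action bounds and the Cauchy--Schwarz inequality control \(\int|\theta(\rho)m|\), which gives compactness of the fluxes \(\theta(\rho^n)\odot m^n\) as measures. Joint convexity and lower semicontinuity of \((\rho,J)\mapsto J^2/\theta(\rho)\) then pass to the limit, and the limiting pair lies in \(\mathcal{CE}(\nu,\nu_i)\).

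\textbf{Step 3: conclusion.} A nonnegative weighted sum of lower semicontinuous functions is lower semicontinuous. Hence \(\J\) attains its minimum on the compact set \(\mathcal{P}(\X)\) by Weierstrass's theorem.

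I expect the main obstacle to be Step 2 at the boundary of the simplex, where the Riemannian structure degenerates and \(\theta(\rho)\) can vanish. Handling it correctly depends on either citing the topological equivalence from \cite{maasGradientFlowsEntropy2011} or carrying out the action lower-semicontinuity argument.
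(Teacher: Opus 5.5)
Your argument is correct, and it is more careful than the paper's.

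\textbf{The paper's route.} The paper's proof is a one-line appeal to convexity. It asserts that convexity of $\J$ under linear interpolation (from the cited Erbar--Maas proposition) makes $\J$ continuous in the Euclidean topology, and then it applies the extreme value theorem on the compact simplex.

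\textbf{The gap you fill.} As you point out, convexity only gives continuity on the relative interior, and on a polytope at best upper semicontinuity. Existence of a minimizer needs lower semicontinuity up to the boundary. A convex function on a compact simplex can fail to attain its infimum: take $f(0)=1$ and $f(x)=x$ on $(0,1]$. So the paper's proof leaves exactly the gap that your Step 2 fills. You import the independent fact that $\W(\cdot,\nu_i)$ is Euclidean-continuous on all of $\mathcal{P}(\X)$, via $|\W(\nu,\nu_i)-\W(\nu',\nu_i)|\le \W(\nu,\nu')$ and the comparison of $\W$ with the Euclidean metric from the Maas/Erbar--Maas theory. Alternatively, you use lower semicontinuity of the action.

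\textbf{What your main route actually needs.} Convexity plays no role at all in that route. You only need the upper comparison: $\W(\nu,\nu')\to 0$ as $\nu'\to\nu$ in Euclidean norm, with any modulus of continuity. The reverse total-variation bound and the specific square-root rate you quote are unnecessary for existence, so you do not need to pin down their exact form in the literature.

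\textbf{Trade-off.} The paper's route is shorter and uses only the single cited convexity property, but it is incomplete at the boundary, which is where $\theta$ degenerates and the issue actually lives. Your route costs one additional cited topological fact, or a direct-method compactness argument for the fluxes $\theta(\rho^n)\odot m^n$. In exchange, it actually proves the statement on the closed simplex.
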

\begin{proof}
    Convexity of the functional \(\J\) implies that is continuous in the Euclidean geometry and since \(\J\) is defined over a compact domain, there therefore exists \(\nu^* \in \mathcal{P}(\X) \cong \Delta^{|V|-1}\) minimizing \(\J\) by the extreme value theorem.
\end{proof}

\subsection{Intrinsic Gradient Descent}
In order to compute minimizers to the variance functional, we take a perspective on the problem originating in Riemannian optimization. One of the key tools we will need for this approach is the \emph{exponential map}, a classical object in Riemannian geometry \cite{tuDifferentialGeometry2017}. 
Let \((\M, g)\) be a Riemannian manifold, let \(p \in \M\), and let \(T_p\M\) be the tangent space to \(\M\). The exponential map at \(p\) is a function 
\begin{equation*}
    \exp_p:\U\subset T_p\M \to\M, 
\end{equation*}
which maps tangent vectors at \(p\) to points in \(\M\) based on the geodesics emanating from \(p\).  If \(\gamma_{p,v}:[0,1]\to\M\) is a geodesic curve such that \(\gamma(0) = p, \gamma'(0) = v\), then the exponential map is defined by the relation
\begin{equation*}
    \exp_p(v) = \gamma_{p,v}(1).
\end{equation*}
Or in physical terms, it is the point in \(\M\) reached after unit time by departing from \(p\) in the direction \(v\). Typically the exponential map is difficult to work with, because it is defined in terms of an ordinary differential equation and the Levi-Civita connection associated to the metric \(g\). Nevertheless, the exponential map is the key ingredient for what we shall refer to as \emph{intrinsic gradient descent}. 

Now the classical gradient descent scheme in the Euclidean setting is defined by the following relation: if \(F\) is a smooth function and we wish to find a (perhaps local) minimizer of \(F\), we compute the iterates
\begin{equation*}
    x_{k+1} = x_k - t_k \nabla F(x_k),
\end{equation*}
for some positive sequence \(\left\{ t_k \right\}\).  Lacking a linear structure, this equation becomes meaningless on a Riemannian manifold, but the essential idea of walking ``in the direction of steepest descent'' remains legible, provided the exponential map is well-defined. In that case, we write
\begin{equation*}
    x_{k+1} = \exp_{x_k}\left(-t_k\nabla_{\M}F|_{x_k}\right),
\end{equation*}
where \(\nabla_\M\) is the Riemannian gradient, i.e., the unique vector in \(T_{x_k}\M\) which is dual to the differential \(\mathrm{d}F\). The intrinsic gradient of the functional \eqref{GraphVF} is actually straightforward enough to compute, if the point at which the gradient is to be evaluated can be joined to each of the reference points by a unique length-minimizing geodesic and if one is willing to leave it in terms of the exponential mapping \cite{afsariRiemannian$L^p$Center2011} (in fact, a similar result holds in the classical \(L^2\)-Kantorovich geometry \cite{ambrosioGradientFlowsMetric2005}). One
has, for fixed \(\left\{ x_i \right\}_{i=1}^p \subset \M\), with \(\lambda \in \Delta^{p-1}\)
\begin{equation*}
    \nabla_x \left\{\sum\limits_{i=1}^p\frac{\lambda_i}{2}d^2(x, x_i)\right\} = -\sum\limits_{i=1}^p\lambda_i\exp_{x}^{-1}(x_i),
\end{equation*}
and therefore, what we will need to compute is 
\begin{equation*}
    \nu_{k+1} = \exp_{\nu_k}\left(-t_k\left(\sum\limits_{i=1}^{p}\lambda_i\exp_{\nu_k}^{-1}(\mu_i)\right)\right).
\end{equation*}

\section{Algorithms}
\subsection{Overview}
We now present our approach to realizing the barycentric coding model for measures on discrete metric spaces.  Synthesis is accomplished by minimizing the discrete variance functional \eqref{GraphVF} via intrinsic gradient descent in the \(\W\) metric. To be able to actually implement this minimization scheme we need to be able to do two things: evaluate the exponential map in order to push our iterates forward, and we also need to be able compute the intrinsic gradient of the functional at point. Analysis turns out to be significantly easier, because all we need to do in this case is form the Gram matrix defined by \eqref{GramMatrix} and minimize it with an off-the-shelf optimization library, such as CVX \cite{cvx, gb08}. Forming the Gram matrix requires approximating the inverse of the exponential map at the target measures \(\nu\) for each of the reference measures. This can be accomplished as a direct by-product of approximating the distance between the target and reference measures using the algorithm outlined in \cite{erbarComputationOptimalTransport2020}, as we will explain below.

\subsection{Computation of Optimal Transport on Discrete Spaces}
\label{DOTsec}
Having discussed the discretized transport metric on \(\mathcal{P}(\X)\), a natural question to ask is whether or not that distance can actually be computed in practice for a given graph and pair of measures. The answer to this is that while analytic computations of the distance are out of reach in general, at the least an approximation of the distance can be computed numerically. We recap here some essential definitions, constructions, and results from \cite{erbarComputationOptimalTransport2020}.

The insight in \cite{erbarComputationOptimalTransport2020} is to proceed via Galerkin discretization in time of the discrete continuity equation. By first constructing an appropriate discrete analog to the graph-transport metric, it is shown via a \(\Gamma\)-convergence technique that this discretization of the distance function converges to the true distance function in the limit of small step size. To that end, the following definitions are needed. Divide the unit interval \([0,1]\) into \(N\) subintervals \(I_i = [t_i, t_{i + 1})\) with uniform step size \(h = \frac{1}{N}, t_i = ih\) and define the following sets:

\[V_{n,h}^1 := \left\{ \psi_h \in C^0([0,1], \R^\X : \psi_h|_{I_i}) \text{ is affine } \forall i=0, \dots, N-1 \right\},\]
\[V_{n,h}^0 := \left\{ \psi_h :[0,1] \to \R^\X : \psi_h|_{I_i} \text{ is constant } \forall i=0, \dots, N-1 \right\},\]
\[V_{e,h}^0 := \left\{ \psi_h :[0,1] \to \R^{\X\times\X} : \psi_h|_{I_i} \text{ is constant } \forall i=0, \dots, N-1 \right\}.\]
We can think of these sets as representing, respectively: piecewise linear curves in the space of functions on \(\X\), their piecewise constant weak derivatives, and piecewise constant curves in the space of vector fields on \(\X\).
\begin{definition}
  The set of solutions to the \emph{discretized continuity equation} for given boundary values \(\rho_A, \rho_B \in \R^\X\) is given by
  \begin{multline*}
  \mathcal{CE}_h(\rho_A, \rho_B) = \Big\{ (\rho_h, m_h) \in V_{n,h}^{1}\times V_{e,h}^0 : \\
  h \sum\limits_{i=0}^{N-1}\left\langle \frac{\rho_h(t_{i+1}, \cdot) - \rho_{h}(t_i, \cdot)}{h} + \divx m_h(t_i, \cdot), \varphi_h(t_i, \cdot) \right\rangle_\pi = 0 \; \forall \varphi_h \in V_{n,h}^0 \Big\}.
  \end{multline*}
\end{definition}
\noindent  An example of such a weak solution is shown at a high level in Figure \ref{fig:ChambollePock}. The \emph{Galerkin discretization of the metric \(\W\)} is defined by the expression
 \[\W_h^2(\rho_A, \rho_B) = \inf \left\{\E_h(\rho, m) := \A(\rho, m) + \I_{\mathcal{CE}_h(\rho_A, \rho_B)}(\rho,m) \right\}.\]
The major analytic result of \cite{erbarComputationOptimalTransport2020} is reproduced below, and in words asserts that taking \(h \to 0\), one recovers the discrete transport metric \(\W\).

\begin{theorem}[\cite{erbarComputationOptimalTransport2020}, Theorem 3.6]
  Let \(\rho_A, \rho_B\) be fixed temporal boundary conditions. Then, the sequence of functionals \((\E_h)_h\) \(\Gamma\)-converges for \(h\to 0^+\) to the functional \(\E\) with respect to the weak* topology in \(\M([0,1], \R^\X \times \R^{\X\times\X})\).
\end{theorem}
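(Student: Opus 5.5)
The plan is to prove the $\Gamma$-convergence by establishing the two standard inequalities for the weak* topology on $\M([0,1],\R^\X\times\R^{\X\times\X})$. Equicoercivity follows from the fact that finite energy forces $\rho_h$ to take values in the compact set $\mathcal{P}(\X)$. I will work throughout in flux variables, so that the continuity equation is linear. The action integrand $(\rho,m)\mapsto \frac{m^2}{\theta(\rho(i),\rho(j))}$, suitably rewritten in terms of the flux $\theta(\rho)\odot m$, is jointly convex, $1$-homogeneous and lower semicontinuous, because $\theta$ is concave and $1$-homogeneous. Consequently $\A$ is a convex, weak* lower semicontinuous functional on measures.

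\textbf{Liminf inequality.} Take $(\rho_h,m_h)\rightharpoonup^*(\rho,m)$ with $\liminf\E_h(\rho_h,m_h)<\infty$, so that $(\rho_h,m_h)\in\mathcal{CE}_h(\rho_A,\rho_B)$ along a subsequence.
\begin{enumerate}
\item I will show that the limit solves the continuous equation $\mathcal{CE}(\rho_A,\rho_B)$ in the distributional sense. Given a smooth test function $\varphi$, test the discrete equation with its piecewise-constant projection $\varphi_h(t_i)=\varphi(t_i)$.
\item Sum by parts in time to move the difference quotient onto $\varphi$. The boundary terms carry $\rho_A$ and $\rho_B$ exactly, since $\rho_h\in V^1_{n,h}$ is continuous with fixed endpoints.
\item The interpolation errors are $O(h\|\partial_t\varphi\|_\infty)$ multiplied by the total masses. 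These masses are bounded: $\rho_h$ lies in the simplex, and $m_h$ is bounded in $L^1$ by Cauchy--Schwarz against the action, using $\theta\le$ a bound on the simplex.
\item The error terms therefore vanish as $h\to0$, and the limit satisfies the equation weakly.
\end{enumerate}
Lower semicontinuity of $\A$ then gives $\A(\rho,m)\le\liminf \A(\rho_h,m_h)$.

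\textbf{Limsup inequality (recovery sequence).} Take $(\rho,m)\in\mathcal{CE}(\rho_A,\rho_B)$ with finite action. The construction proceeds in three steps.
\begin{enumerate}
\item \emph{Regularize.} Mix with the stationary measure, setting $\rho^\delta=(1-\delta)\rho+\delta\mathbf 1$ with the corresponding convex combination of fluxes plus a correction connecting to $\mathbf{1}$. Then time-mollify, after extending the curve constantly outside $[0,1]$ and reparametrizing time slightly so the endpoints remain $\rho_A,\rho_B$. Convexity of $\A$, in the spirit of the linear-interpolation convexity from \cite{erbarRicciCurvatureFinite2012}, shows that the action increases by at most $o(1)$ as $\delta\to0$. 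The regularized curve is smooth and has densities bounded below by a positive constant.
\item \emph{Discretize.} For such a curve, set $\rho_h$ to be the piecewise-linear interpolant of $\rho(t_i)$, and define the flux on $I_i$ as the time average $\frac1h\int_{I_i}\theta(\rho)m\,dt$. The discrete continuity equation then holds exactly, because integrating the continuous equation over $I_i$ yields precisely $\rho(t_{i+1})-\rho(t_i)+h\,\divx(\cdot)=0$.
\item \emph{Estimate the action.} Since $\rho_h$ is uniformly close to $\rho$ and bounded away from zero, the integrand is uniformly continuous there. Jensen's inequality applied to the time average gives $\limsup_h \A(\rho_h,m_h)\le\A(\rho,m)$.
\end{enumerate}
A diagonal argument in $(\delta,h)$ completes the recovery sequence.

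\textbf{Main obstacle.} I expect the regularization step to be the hardest, because the action degenerates where $\theta(\rho(i),\rho(j))\to0$ on the boundary of the simplex. To handle it, I would first try the mixture with the uniform measure combined with convexity, which bounds the cost by $(1-\delta)\A(\rho,m)+\delta\cdot C$. I would then check carefully that the time reparametrization preserves the boundary data while costing only a factor of $(1+O(\delta))$ in the action.
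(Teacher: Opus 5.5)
The paper gives no proof of this statement; it imports it from Erbar et al., so your argument is judged on its own. Your liminf half is the standard argument and is sound: you test \(\mathcal{CE}_h\) with \(\varphi(t_i)\), bound \(m_h\) in \(L^1\) by Cauchy--Schwarz against the action, and use weak\(^*\) lower semicontinuity of the convex \(1\)-homogeneous integrand. The shape of your limsup (regularize, interpolate nodally, average the flux) is also the natural one.

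The gap is at the temporal endpoints. No curve in \(\mathcal{CE}(\rho_A,\rho_B)\) has densities bounded below by a positive constant when \(\rho_A\) or \(\rho_B\) vanishes at some node, because the curve must start at \(\rho_A\) and end at \(\rho_B\). The Dirac data of the paper's own two-point experiment is such a case. Constant-weight mixing moves the endpoints to \((1-\delta)\rho_A+\delta\mathbf{1}\), and no time reparametrization undoes that. Whatever you use to reconnect degenerates to \(\rho_A\) at \(t=0\): either concatenated connectors, or a weight \(\beta(t)\) with \(\beta(0)=\beta(1)=0\) and a correction flux satisfying \(\divx m^{c}=\rho-\mathbf{1}\). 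So Step 3, which needs a uniform lower bound for uniform continuity of the integrand, says nothing about a boundary layer near \(t=0\) and \(t=1\). That layer is exactly where the degeneracy you flagged lives, and mixing does not remove it.

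This is not cosmetic. Take \(\E_h\) as written here, with the action integrated along the piecewise affine \(\rho_h\), and an edge with \(\rho_A(x)=\rho_A(y)=0\). On \(I_0\), \(1\)-homogeneity gives \(\theta(\rho_h(t,x),\rho_h(t,y))=\tfrac{t}{h}\,\theta(\rho(h,x),\rho(h,y))\). The edge therefore contributes a positive multiple of \(\bar m_0(x,y)^2\int_0^h \frac{h\,dt}{t\,\theta(\rho(h,x),\rho(h,y))}=+\infty\), unless the averaged flux there vanishes. For a mixed, mollified curve it generally does not, so your Step 2 can output discrete curves of infinite energy.

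A repair must treat the layer by hand. For example:
\begin{enumerate}
\item Extend \((\rho,m)\) by \((\rho_A,0)\) on \([-3\epsilon,0]\), and symmetrically at \(t=1\), before mollifying at width \(\epsilon\).
\item Restrict to \([-2\epsilon,1+2\epsilon]\) and rescale; the curve is then stationary at \(\rho_A\) near \(t=0\).
\item Inside that window, ramp the mixing: use \((1-\delta\beta)\rho_A+\delta\beta\mathbf{1}\) with \(\beta=(t/\tau)^2\) and flux \(\delta\dot\beta\,m^c\), where \(\divx m^c=\rho_A-\mathbf{1}\).
\end{enumerate}
Since \(\theta\) is nondecreasing in each argument, the ramp costs at most \(C\delta/\tau\) in continuous time. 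It also costs at most \(C\delta/\tau\) uniformly in \(h\) if the action sees \(\rho_h\) only through interval averages \(\tfrac12(\rho_h(t_i)+\rho_h(t_{i+1}))\), as in the \(\bar\rho\)-splitting the paper describes. With the literal \(\E_h\) you must in addition move mass one graph layer per step across edges with a charged endpoint. That requires \(\int_0 ds/\theta(1,s)<\infty\), which holds for \(\theta_{geom}\) and \(\theta_{\log}\). As it stands, your argument proves the statement only for strictly positive \(\rho_A,\rho_B\).
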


An appropriate inner product can be defined so that the set \(H = V_{n,h}^1 \times (V_{e,h}^0)^4 \times (V_{n,h}^0)^2\) is a Hilbert space. Then, well-chosen slack variables are introduced so that the problem of computing \(\I_{\mathcal{CE}_h(\rho_A, \rho_B)}\) can be decomposed into several low-dimensional projection problems. A typical element of \(H\) looks like \(v = (\rho, m, \theta, \rho_{+}, \rho_{-}, \bar{\rho}, q) \); here \(\rho, m\) are in some sense the actual objects of interest; in brief, we have \(\rho\) and \(m\) representing the rectification of the geodesic curve and its momentum, while the additional components \(\theta, \rho_{+}, \rho_{-}, \bar{\rho},\) and \(q\) are the slack variables used for decomposing the projection problem. The components \(\theta, \rho_{+},\) and \(\rho_{-}\) are vector fields on the edge set, translating the mass on the nodes into mass averaged on the edges, while \(q\) and \(\bar\rho\) encode the average mass on the nodes over a time interval.  The Chambolle-Pock routine is carried out until the convergence threshold
\begin{equation*}
  \int_0^1\|\rho_{k+1} - \rho_{k}\|_{\pi}^2 \diff t < 10^{-10}.
\end{equation*}

The Chambolle-Pock algorithm, first introduced by its eponymous authors in \cite{chambolleFirstOrderPrimalDualAlgorithm2011}, is a primal-dual algorithm for solving non-smooth convex optimization problems, a form into which the discrete transportation geodesic approximation problem may be cast. Specifically, if \(F, G: H \to \R\) are convex functions on a Hilbert space, the Chambolle-Pock algorithm approximates the solution to

\begin{equation*}
  \min_{x \in H} F(x) + G(x).
\end{equation*}

It belongs to a broad class of extra-gradient methods \cite{korpelevichEXTRAGRADIENTMETHODFINDING1976}, such as the well-known Douglas-Rachford technique \cite{lionsSplittingAlgorithmsSum1979, condat2023proximal}. In order to apply the Chambolle-Pock routine to this problem, a set of slack variables is introduced to decouple certain non-linearities arising from the non-linear form of the action functional. Given convex functions \(F,G: H \to \R\), the Chambolle-Pock routine requires that we be able to compute the proximal mappings of the \(G\) and the Fenchel conjugate of \(F\) (denoted \(F^*\)). Recall that for an arbitrary function \(f\), its proximal mapping in \(H\) is defined by

\begin{equation*}
  \prox_f(x) := \argmin_{y \in H} \frac{1}{2}\|x - y\|_{H}^2 + f(y).
\end{equation*}
Note that if \(A \subset H\), and \(\mathcal{I}_A: H \to \R\) is defined so that \(\mathcal{I}_A(x) = 0\) for \(x \in A\) and \(+\infty\) otherwise, then 
\begin{equation*}
  \prox_{\mathcal{I}_A}(x) = \proj_A(x).
\end{equation*}
This observation allows for the algorithm given in \cite{erbarComputationOptimalTransport2020} to be expressed as the composition of many low-dimensional projection problems.

\begin{algorithm}
  \caption{Generic Chambolle-Pock Routine}
  \label{CP}
  \begin{algorithmic}
    \Require Convex function \(F, G\); \(\tau,\sigma > 0\) satisfying \(\tau\sigma < 1\), \(\lambda \in (0,1]\); initialization \(x^{(0)}, y^{(0)}, \bar{x}^{(0)}=x^{(0)}\)
    \Ensure \(x^*\) minimizing \(F(x) + G(x)\) 

    \While{not converged}
      \State \(y^{(k+1)} = \prox_{\sigma F^{*}}(y^{(k)} + \sigma \bar{x}^{(k)})\)
      \State \(x^{(k+1)} = \prox_{\tau G}(x^{(k)} - \tau \bar{y}^{(k+1)})\)
      \State \(\bar{x}^{(k+1)} = x^{(k+1)} + \lambda (x^{(k+1)} - {x}^{(k+1)})\)

    \EndWhile

    \State \Return \(x^{(k)}\)
	
  \end{algorithmic}
  
\end{algorithm}
\subsection{Approximating the Exponential Map with the Continuity Equation}
It is essential for our approach that this method of approximating the discretized \(\W\) metric proceeds by approximating geodesic curves connecting measures in \(\mathcal{P}(\X)\). Because the optimizers output by Algorithm \eqref{CP} include the momentum information associated to the geodesic curves, we are therefore able to approximate the intrinsic gradient of the discrete variance functional \eqref{GraphVF} at a point \(\nu\) by approximating the geodesics between \(\nu\) and each reference measure and summing the associated momenta, weighted according to the prescribed coordinates \(\lambda \in\Delta^{p-1}\).

In \cite{erbarComputationOptimalTransport2020}, an implicit Euler scheme in the spirit of the construction given in \cite{jordanVariationalFormulationFokkerPlanck1998} and expanded upon in \cite{ambrosioGradientFlowsMetric2005}, is constructed to compute gradient flows of the entropy for the discretized metric \(\W_h\).  It's classical formulation for a generic functional \(F\) is given by 
\begin{equation*}
   \rho_{k} := \argmin \left\{\frac{1}{2\tau}W_2^2(\rho, \rho_{k-1}) + F(\rho)\right\}.
\end{equation*}
This is sometimes referred to as a ``minimizing movements'' scheme, since the next iterate must minimize the sum of the objective functional and the \(W_2\) distance from the current iterate. In \cite{erbarComputationOptimalTransport2020}, this works because the entropy functional, defined by the equation
\begin{equation*}
  H(\rho) = \sum\limits_{x \in V}\rho(x)\log(\rho(x))\pi(x),
\end{equation*}
\begin{wrapfigure}{l}{0.4\textwidth}
  \centering
  \includegraphics[width=0.4\textwidth]{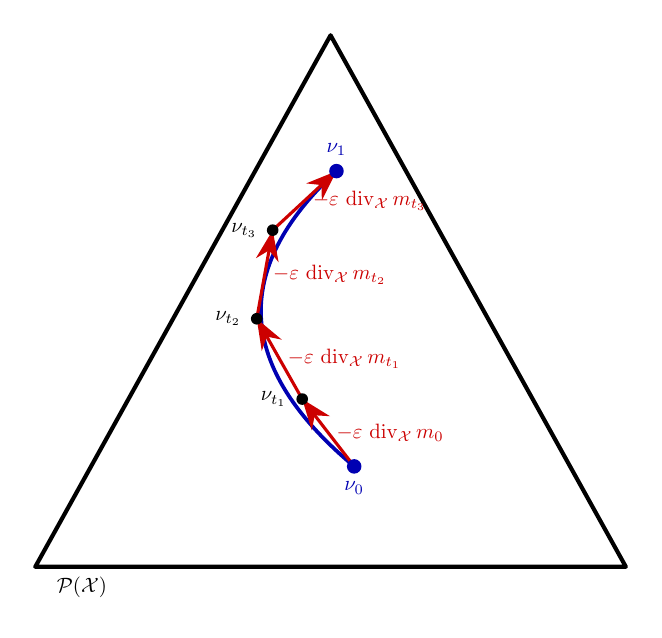}
  \caption{The Chambolle-Pock routine is used to approximate energy minimizing curves in \(\mathcal{P}(\X)\), ultimately producing a rectified curve encoded as a sequence of points and a collection of momentum vector fields. We leverage these vector fields to approximate the gradient of \eqref{GraphVF}. }
  \label{fig:ChambollePock}
\end{wrapfigure}
\noindent admits a Fenchel conjugate whose proximal mapping may be quickly approximated via the Moreau identity and a Newton scheme, and \(H^*\) can be folded neatly into the Chambolle-Pock objective function after certain modifications to the projection problems (see Section 6 of \cite{erbarComputationOptimalTransport2020} for full details). The implicit scheme is desirable from an analytic perspective because it is well-studied and is known to generalize the notion of a gradient flow to a generic metric space lacking linear structure --- however, it is not obvious how one would implement it numerically, which restricts its use in the practitioner's setting. In an ideal world, a similar approach might be used for computing the gradient flow associated to \eqref{GraphVF}, but computing the proximal mapping of \eqref{GraphVF} is itself as complicated as minimizing the functional itself to begin with. Therefore, the implicit Euler approach is not appropriate to the task, and so we instead proceed by an explicit approach.

First, we outline the general steps of the algorithm. In words, to synthesize a barycenter for a given dictionary with prescribed weights, we fix some initialization measure \(\nu_0\), and then for \(i=1, 2, \dots, \) we approximate the \(\W\) gradient at iterate \(\nu_i\) and update the iterate using a heuristic based on the continuity equation.  We stop iterating once the iterates converge in the \(\|\cdot\|_\pi\) norm, corresponding to a zero of the gradient vector field \(\nabla_{\W}\J\).
In order to carry out the update step of the intrinsic gradient descent scheme, it is necessary to have a way of ``stepping forward'' on the surface of the manifold. Generically the ideal situation would be to have some expression of the exponential map in terms of local coordinates, but this is rarely the case. Here we propose an approximation of the exponential mapping on \(\mathcal{P}(\X)\) by infinitesimal reasoning, leveraging the structure of the continuity equation. This is a formal argument at best: all geodesics in this manifold obey the continuity equation, but not all solutions to the continuity equation are geodesic curves. Nevertheless, our idea is that provided we do not move too far away from the initial point, we will move forward on the manifold in a way which is roughly distance minimizing. 

If \((\rho,m)\) is a pair which satisfies the continuity equation, then we have
\[\lim\limits_{\varepsilon\to 0}\frac{\rho(t + \varepsilon) - \rho(t)}{\varepsilon} = -\divx(m(t)).\]
Rearranging, taking \(\varepsilon > 0\), and evaluating at \(t = 0\), we then write
\begin{equation}
\rho(\varepsilon) \approx \rho(0) - \varepsilon\divx(m(0)).
\label{step}
\end{equation}
We will use \eqref{step} as our approximation of the exponential map. Applying this to \eqref{GraphVF}, we obtain our update step: 
\begin{equation*}
    \nu_{k+1} = \nu_k + t_k\divx\left(\sum\limits_{i=1}^p\lambda_im_{\nu,\nu_i}(0)\right).
\end{equation*}

\begin{algorithm}
  \caption{Synthesis of Barycenters via Intrinsic Gradient Descent in the \(\W\) Metric}
  \label{Alg1}
  \begin{algorithmic}
    \Require Markov chain \((Q,\pi)\), reference measures \(\left\{ \nu_i \right\}_{i=1}^p\), weights \(\lambda\in\Delta^{p-1}\), convergence criterion
    \Ensure Barycenter \(\nu = \Bary(\left\{ \nu_i \right\}_{i=}^p), \lambda)\)

    \State \(\nu \gets \nu_1\) \Comment{Initialize with a measure which is trivially in \(\Bary(\left\{ \nu_i \right\}_{i=1}^p)\)}

    \While{not converged}
      \State \(m \gets 0 \in \R^{|V|\times|V|}\)
      \For{\(r=1,\dots,p\)} 
        \State \(m \gets m + \lambda_r m_{\nu,\nu_r}(0)\) \Comment{Compute the initial momentum of the geodesic \([\nu, \nu_r]\) and sum}
        \EndFor
      \State \(\nu \gets \nu + \varepsilon\divx m \) \Comment{Update via continuity equation heuristic}
    \EndWhile

    \State \Return \(\nu\)
	
  \end{algorithmic}
  
\end{algorithm}

A natural concern here is whether or not \eqref{step} remains in the admissible set \(\mathcal{P}(\X)\). We show that this is almost the case with the following lemma.
\begin{lemma}
\label{lemma1}
Let \(\nabla\varphi \in T_{\nu}\mathcal{P}(\X)\). Then 
\begin{equation*}
\sum\limits_{x \in V}\divx \nabla\varphi(x)\pi(x) = 0.
\end{equation*}
\end{lemma}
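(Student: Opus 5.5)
The plan is to prove a slightly stronger statement: for an arbitrary edge vector field \(m \in \R^{\X\times\X}\), the \(\pi\)-weighted sum of \(\divx m\) vanishes. The lemma then follows by taking \(m = \nabla\varphi\), and also for \(m = \theta(\nu)\odot\nabla\varphi\). This general version is what is really needed, since the update step in Algorithm \ref{Alg1} adds \(\varepsilon\divx m\) with \(m\) a weighted sum of momenta. Conceptually, the result is the discrete analogue of the divergence theorem on a space without boundary. Equivalently, it is the duality \(\langle \divx m, \mathbf{1}\rangle_\pi = -\langle m, \nabla_\X \mathbf{1}\rangle_Q\) up to sign convention, together with the fact that \(\nabla_\X \mathbf{1} = 0\).

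To make the argument self-contained, I would compute directly from the definition. Write
\[
\sum_{x\in V}\divx m(x)\,\pi(x) = \frac{1}{2}\sum_{x,y\in V} m(y,x)\,Q(x,y)\pi(x) \;-\; \frac{1}{2}\sum_{x,y\in V} m(x,y)\,Q(x,y)\pi(x).
\]
In the first double sum, I would use the detailed balance assumption \(\pi(x)Q(x,y)=\pi(y)Q(y,x)\) to replace \(Q(x,y)\pi(x)\) by \(Q(y,x)\pi(y)\). Relabeling the dummy indices \(x\leftrightarrow y\) then turns the first sum into \(\frac12\sum_{x,y} m(x,y)Q(x,y)\pi(x)\), which cancels the second sum exactly. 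All sums are finite, so the interchange and relabeling need no justification.

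The only point that deserves care is the role of detailed balance. Without symmetry of \(\pi(x)Q(x,y)\), the two sums need not cancel for a general \(m\). For \(m=\nabla\varphi\) specifically, one could instead use the antisymmetry \(\nabla\varphi(y,x)=-\nabla\varphi(x,y)\). That rewrites the sum as \(-\sum_{x,y}\nabla\varphi(x,y)Q(x,y)\pi(x)\), and the same relabeling plus detailed balance then shows this quantity equals its own negative, hence zero. Either route works, and I expect no real obstacle. I would conclude by remarking that the lemma shows \eqref{step} preserves total \(\pi\)-mass \(\sum\rho(x)\pi(x)=1\), so the only way an iterate can leave \(\mathcal{P}(\X)\) is through loss of nonnegativity. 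That is why the statement is ``almost'' admissibility, and small \(\varepsilon\) together with positivity of \(\nu\) handles the remaining issue.
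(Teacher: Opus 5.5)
Your proof is correct, and your main argument is more general than the paper's. The paper uses the skew-symmetry of $\nabla\varphi$ to rewrite the sum as $\sum_{x,y}\pi(x)Q(x,y)\nabla\varphi(y,x)$. It then notes that $[\pi(x)Q(x,y)]_{x,y}$ is symmetric by reversibility, and concludes because the entries of a skew-symmetric matrix sum to zero. This is essentially the alternative you sketch at the end.

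Your primary route never uses skew-symmetry. Relabeling gives
\[
\sum_{x\in V}\divx m(x)\,\pi(x)=\frac12\sum_{x,y\in V}m(x,y)\bigl[\pi(y)Q(y,x)-\pi(x)Q(x,y)\bigr],
\]
which vanishes for every edge field $m$ exactly when detailed balance holds. This covers what the synthesis update actually does. The update adds $\varepsilon\divx$ of a weighted sum of momenta returned by the Chambolle--Pock routine. Those momenta are computed only to finite tolerance, so they need not be exact gradients, or even exactly skew-symmetric. Your version therefore justifies mass preservation for the real iterates.

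The paper's argument extends verbatim only to skew-symmetric fields. That class does include $\theta(\nu)\odot\nabla\varphi$, since $\theta$ is symmetric. In the other direction, reversibility is more than is needed for gradient fields. Since $\divx\nabla\varphi=\Delta_{\X}\varphi=(Q-D)\varphi$, the lemma as stated is just $\pi^{T}(Q-D)\varphi=0$, i.e.\ stationarity of $\pi$. It is your general version that genuinely requires detailed balance.

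Your closing remark goes beyond the lemma and is too optimistic. Positivity of $\nu$ makes $\nu+\varepsilon\divx m$ nonnegative only for $\varepsilon$ below a threshold determined by $\min_x\nu(x)$ and $\max_x|\divx m(x)|$. That threshold depends on the current iterate and can degenerate as the iterates approach the boundary of the simplex. So with the algorithm's fixed step size, nonnegativity is not guaranteed; the paper explicitly lists this as an open question.
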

\begin{proof}
    The claim follows from the reversibility of the pair \((Q, \pi)\). By definition of the graph gradient, \(\nabla\varphi\) is a skew-symmetric matrix. Therefore, we have 
    \begin{align*}
        \sum\limits_{x\in V}\divx\nabla\varphi(x)\pi(x) &= \sum\limits_{x\in V}\frac{1}{2}\sum\limits_{y \in V}Q(x,y)(\nabla\varphi(y,x) - \nabla\varphi(x,y))\pi(x), \\
      &=\sum\limits_{x,y\in V}\pi(x)Q(x,y)\nabla\varphi(y,x). 
    \end{align*}
Therefore we may interpret the graph divergence of \(\nabla\varphi\) as an elementwise sum of a certain matrix \(\hat{Q}\odot \nabla\varphi\), where \(\hat{Q}\) is the matrix \(\hat{Q} = [\pi(x)Q(x,y)]_{x,y \in \X}\) and \(\odot\) is the Hadamard product for matrices (i.e., elementwise multiplication). By reversibility of the pair \((Q, \pi)\), the matrix \(\hat{Q}\) is symmetric, so its Hadamard product with \(\nabla\varphi\) is skew-symmetric. Observing that the elementwise sum of any skew-symmetric matrix is \(0\) finishes the proof. 
\end{proof}

\begin{corollary}
Let \(\nu \in \mathcal{P}(\X)\). Then \(\sum\limits_{x\in V}(\nu(x) + \varepsilon\divx\nabla\varphi(x))\pi(x)=1\).
\end{corollary}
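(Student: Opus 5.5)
The plan is to use linearity of the weighted sum and then apply Lemma \ref{lemma1} directly. I will write
\[\sum_{x\in V}(\nu(x)+\varepsilon\divx\nabla\varphi(x))\pi(x) = \sum_{x\in V}\nu(x)\pi(x) + \varepsilon\sum_{x\in V}\divx\nabla\varphi(x)\pi(x),\]
which is justified because all sums are finite and \(\varepsilon\) is a fixed scalar.

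The first term equals \(1\) by the definition of \(\mathcal{P}(\X)\): probability measures are normalized with respect to \(\pi\), so \(\sum_x\nu(x)\pi(x)=1\). The second term vanishes by Lemma \ref{lemma1}, applied to \(\nabla\varphi\in T_\nu\mathcal{P}(\X)\). Adding the two gives the claim.

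There is no real obstacle. The one point worth stating explicitly is that the corollary is about mass preservation only. It does not assert that \(\nu+\varepsilon\divx\nabla\varphi\) is nonnegative, and hence does not assert that it lies in \(\mathcal{P}(\X)\). Nonnegativity would require \(\varepsilon\) small relative to \(\min_x\nu(x)\) when \(\nu\) is in the interior of the simplex, and that is what the paper's remark ``this is almost the case'' refers to.

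A second remark: in Algorithm \ref{Alg1} the update direction is \(\sum_r\lambda_r m_{\nu,\nu_r}(0)\), a weighted sum of momenta. Each momentum is a graph gradient, so the sum is again a graph gradient \(\nabla\big(\sum_r\lambda_r\varphi_r\big)\) by linearity of \(\nabla_\X\). The corollary therefore applies directly to the synthesis update.
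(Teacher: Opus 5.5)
Your proof is correct and matches the paper's intended argument. The paper states this corollary with no separate proof, as an immediate consequence of Lemma~\ref{lemma1}: split the finite sum, use the normalization $\sum_{x}\nu(x)\pi(x)=1$ built into $\mathcal{P}(\X)$, and kill the divergence term with the lemma. Your closing remark about the synthesis update is not needed, and you do not have to check that the computed momentum is literally a graph gradient. Under detailed balance, swapping $x$ and $y$ in the double sum gives $\sum_{x}\divx m(x)\pi(x)=0$ for every edge field $m$.
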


In other words, while it could in principle be possible to leave the probability simplex by violating the non-negativity constraint, the updates will remain on the hyperplane \(\{\langle\rho,1\rangle = 1\}\).

\subsection{Approximation of the Exponential Map's Inverse via Chambolle-Pock}
Recalling that the exponential mapping of tangent vector \(v \in T_p\M\) is defined to be \(\gamma_{p,v}(1)\), where \(\gamma_{p,v}\) is a geodesic curve starting at \(p\) with initial velocity \(v\) and time domain \([0,1]\), it follows then that if we had an oracle which could produce for arbitrary \(p,q\in\M\) a geodesic \(\gamma\), then the inverse of the exponential mapping at \(p\) could be computed by evaluating \(\frac{d}{dt}\gamma(t)|_{t=0}\). Fortunately this is precisely one of the outputs of Algorithm \eqref{CP}. Forming the linear system associated to the quadratic program in \eqref{AP} is therefore as simple as approximating the geodesic between the target measures \(\nu\) and the dictionary of reference measures \(\left\{ \nu_i \right\}\), computing the pairwise inner products of their initial momentum vectors in the space \(T_\nu \mathcal{P}(\X)\) and invoking an off-the-shelf quadratic programming solver.

\begin{algorithm}
  \caption{Analysis of Barycenters via Quadratic Programming}
  \label{Alg2}
  \begin{algorithmic}
    \Require Markov chain \((Q,\pi)\), target measure \(\nu\), reference measures \(\left\{ \nu_i \right\}_{i=1}^p\)
    \Ensure Coordinates \(\lambda^*\) such that \(\nu = \Bary(\left\{ \nu_i \right\}_{i=}^p, \lambda^*)\)

    \State \(A \gets 0 \in \R^{p\times p}\)
    \For{\(i=1,\dots,p\)}
      \State \(\nabla\varphi_i \gets m_{\nu,\nu_i}(0)\) \Comment{Compute the initial momentum of the geodesic \([\nu,\nu_r]\)}
    \EndFor
    \For{\(i,j=1,\dots,p\)}
      \State \(A_{i,j} \gets \frac{1}{2}\sum\limits_{x,y\in \X}\theta(\nu(x), \nu(y))\nabla\varphi_i(x,y)\nabla\varphi_j(x,y)\) \Comment{Form the Gram matrix}
    \EndFor

    \State \(\lambda^*\gets \min\left\{ \lambda^TA\lambda : \lambda \in \Delta^{p-1} \right\}\) \Comment{Solve via CVX, etc.}

    \State \Return \(\lambda^*\)
	
  \end{algorithmic}
  
\end{algorithm}

\subsection{Hyperparameters for the Synthesis and Analysis Algorithms}
Several parameters need to be set when running these algorithms which are divorced from the relevant measures and weights themselves; Table \ref{table:Hyperparameters} summarizes which hyperparameters are involved in which subroutines. First, there are two hyperparameters associated to the underlying Chambolle-Pock routine computation of action minimizing geodesics: the convergence tolerance, \(\delta_g\) and the number of time steps in each geodesic, \(N\). For the synthesis computation, there is also the descent step size \(\varepsilon\), and the convergence threshold for the descent scheme \(\delta_b\).

The parameters \(\delta_g\) and \(N\) control the quality of the approximated geodesics in the \(\W\) geometry. As \(\delta_g \to 0\) and \(N \to \infty\), the outputs of the Chambolle-Pock routine become finer and finer, but at the cost of computational complexity --- note that the subroutines (proximal map computations) have time complexity growing quadratically in \(N\) (see \cite{erbarComputationOptimalTransport2020} for full details), and that in general Chambolle-Pock exhibits a rate of convergence \(\mathcal{O}(1/k)\) in the number of iterations. 

The descent parameter \(\delta_b\) governs the threshold for convergence of the synthesis scheme, while \(\varepsilon\) controls the size of the step taken after each gradient computation. In principle, intrinsic gradient descent, under certain assumptions on the underlying manifold, exhibits \(\mathcal{O}(1/k)\) rate of convergence, and an optimal descent step size can be chosen for the constant step size paradigm as detailed in \cite{afsariConvergenceGradientDescent2013}, but such a choice requires knowledge of the injectivity radius of \((\Delta^{p-1}, \W)\) and an upper bound on the eigenvalues of the Riemannian Hessian of the variance functional \(D^2\J\). At present, these quantities are unstudied, and we leave investigation of them to future work; here we simply remark that we find that taking \(\varepsilon \in (0.1, 0.5)\) generally results in reasonable rates of convergence. In recent years, interest has been taken in accelerating the rate of convergence for intrinsic gradient descent \cite{alimisisMomentumImprovesOptimization2021}, taking inspiration from techniques well-known in Euclidean optimization \cite{polyak1987introduction}, and this may also be a useful direction in which to expand the present work.

\begin{table}
  \centering
  \begin{tabular}{| c | c | c | c | c |} 
 \hline
 Routine        & \(\delta_g\)    & \(N\)      & \(\delta_b\)    & \(\varepsilon\)\\ [0.5ex] 
 \hline\hline

 Chambolle-Pock & \checkmark & \checkmark &            &  \\ [1ex] 
 \hline

 Synthesis      & \checkmark & \checkmark & \checkmark & \checkmark  \\ [1ex] 
 \hline
 Analysis       & \checkmark & \checkmark &            &             \\ [1ex] 
 \hline
\end{tabular}
  \caption{Table of hyperparameters associated to each algorithm}
  \label{table:Hyperparameters}
\end{table}

\subsection{Barycentric Coding Model via Entropic Regularization}
In \cite{bonneelWassersteinBarycentricCoordinates2016}, an algorithm is proposed for performing transport-based histogram regression for measures on fixed support. For a given family of histograms \(\left\{ p_s \right\}_{s=1}^S\), consider the mapping 
\begin{equation*}
  P: \lambda \mapsto \argmin \sum\limits_s \lambda_sOT_{\varepsilon_{reg}}(p,p_s),
\end{equation*}
and seek solutions to the optimization problem
\begin{equation*}
  \argmin \E(\lambda) := \mathcal{L}(P(\lambda), q),
\end{equation*}
for prescribed reference histogram \(q\), where \(\mathcal{L}\) is some common and easily differentiated loss function (e.g., the squared \(L^2\) difference, or the Kullback-Liebler divergence). While it is expensive to compute the Jacobian \([\partial P(\lambda)]^T\) needed for the naive descent scheme, they show that replacing it by a running estimate \([\partial^{(L)} P(\lambda)]^T\) obtained after \(L\) Sinkhorn iterations makes the problem computationally tractable.

In \cite{gentile2025regularized}, that approach was leveraged to implement an entropically regularized barycentric coding model for measures supported on graphs, using diffusion distances as ground metrics encoding the connectivity information of the graph. There are trade-offs associated to this approach to barycenters of measures on graphs. Firstly, there is the question of how to choose the cost matrix \(C\).  As alluded to previously, there is not an isomorphism between pairwise distance matrices and graph structures --- it is perfectly possible to have two identical cost matrices which represent different network structures. Secondly, the Bregman projections algorithm, which depends on the entropic regularization, both in regularizing parameter \(\varepsilon_{reg}\) and choice of reference measure (typically taken to be the tensor product of the reference measures) is sensitive to the choice of regularization parameter \(\varepsilon_{reg}\), with instability of the computations occurring in the limit as \(\varepsilon_{reg} \to 0\).  Strictly speaking, one could choose to forgo the regularization of the transport cost and try solving the linear program directly, but this is not without issues of its own. One such issue is that solving linear programs is expensive, but another issue is that even if we are happy to pay the cost of solving an expensive linear program, a fundamental problem remains in that the geometry induced by the Kantorovich problem does not admit non-trivial geodesics \cite{maasGradientFlowsEntropy2011}.

In practice this trade-off may be worth it, since synthesizing barycenters via the Sinkhorn algorithm is much faster than the gradient descent approach outlined above.  However, it is worth emphasizing that the introduction of the entropic regularization term introduces bias and in fact computation of entropy-regularized transport cost can be interpreted as performing a maximum-likelihood estimation for Gaussian deconvolution \cite{rigolletEntropicOptimalTransport2018} --- in short, it causes a loss in geometric information which we see realized as a kind of ``blurring'' of the support of the measures across the entire domain.

Summarizing then, we are comparing two approaches which could be taken to the problem of synthesizing and analyzing barycenters of measures on graphs: one is the ``static'' approach born of the Kantorovich problem, and the other is the ``dynamic'' approach born of the transport metric proposed in \cite{maasGradientFlowsEntropy2011}. The former offers the following trade-off: if we are willing to make a non-canonical choice about how to encode the connectivity of a given graph and choose a workable loss function, and as well introduce the entropic regularization, then a combination of Bregman projections, the Sinkhorn-Knopp algorithm, and gradient descent can achieve system for synthesizing and analyzing barycenters of measures on graphs. The latter requires us on the other hand to set certain tolerances for convergence, but otherwise allows us to construct a barycentric coding model using a canonical representation of the graph via its Markov chain random walk transition matrix.

\section{Experiments}
Here we present some experimental results based on our numerical implementation of the outlined scheme above and highlight our major observations.  Broadly speaking, we find that our intrinsic descent scheme is able to synthesize and analyze barycenters of measures on graphs. In Figures \ref{fig:Hypercube} and \ref{fig:Massachusetts} we demonstrate the general capability of our program. Figure \ref{fig:Hypercube} features the barycenter of \(4\) measures on a graph with 16 nodes, and for both of the weighted and unweighted edge cases. Figure \ref{fig:Massachusetts} features two barycenters of measures defined on a spatial geography graph where each vertex corresponds a district for the Massachusetts state house, with edges indicating that two districts share a border (in total, there are \(156\) vertices in the graph and \(404\) edges); the two barycenters share the same references and prescribed coordinates, but differ in terms of their overall convergence threshold \(\delta_b\).

\begin{figure}
    \centering
    \includegraphics[width=0.95\linewidth]{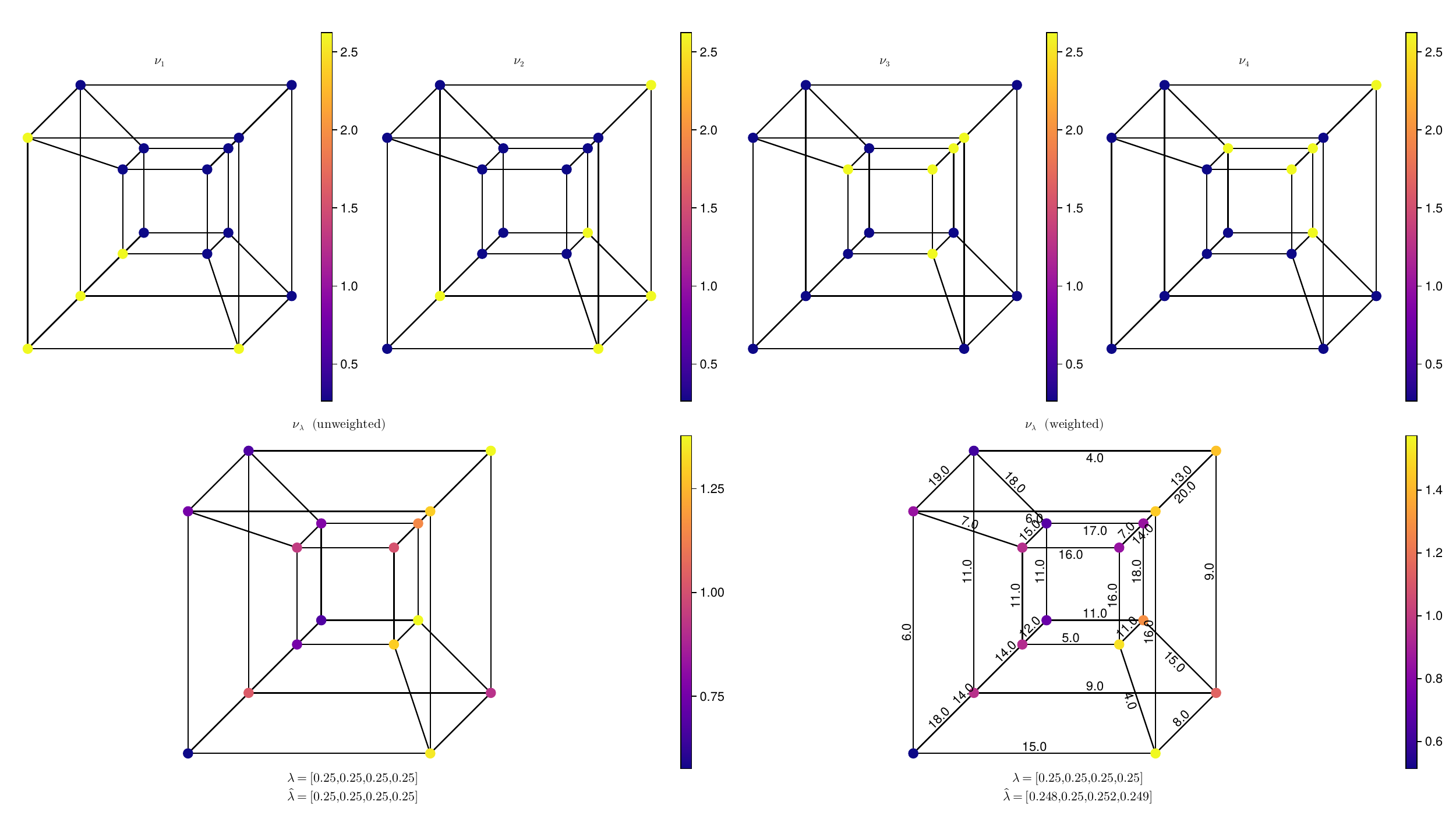}
    \caption{Barycenter of 4 measures on the graph formed by the edges and vertices of a hypercube. Bottom left: edges of the hypercube are uniformly weighted with \(\omega_{i,j}\equiv 1\); bottom right: edges are randomly weighted (weights are symmetric to preserve the reversibility condition of the Markov chain). Parameters used to compute barycenters: \(\varepsilon = 0.1, N=10, \delta_g=10^{-10}, \delta_b=10^{-10}\).}
    \label{fig:Hypercube}
\end{figure}

\begin{figure}
    \centering
    \includegraphics[width=0.7\linewidth]{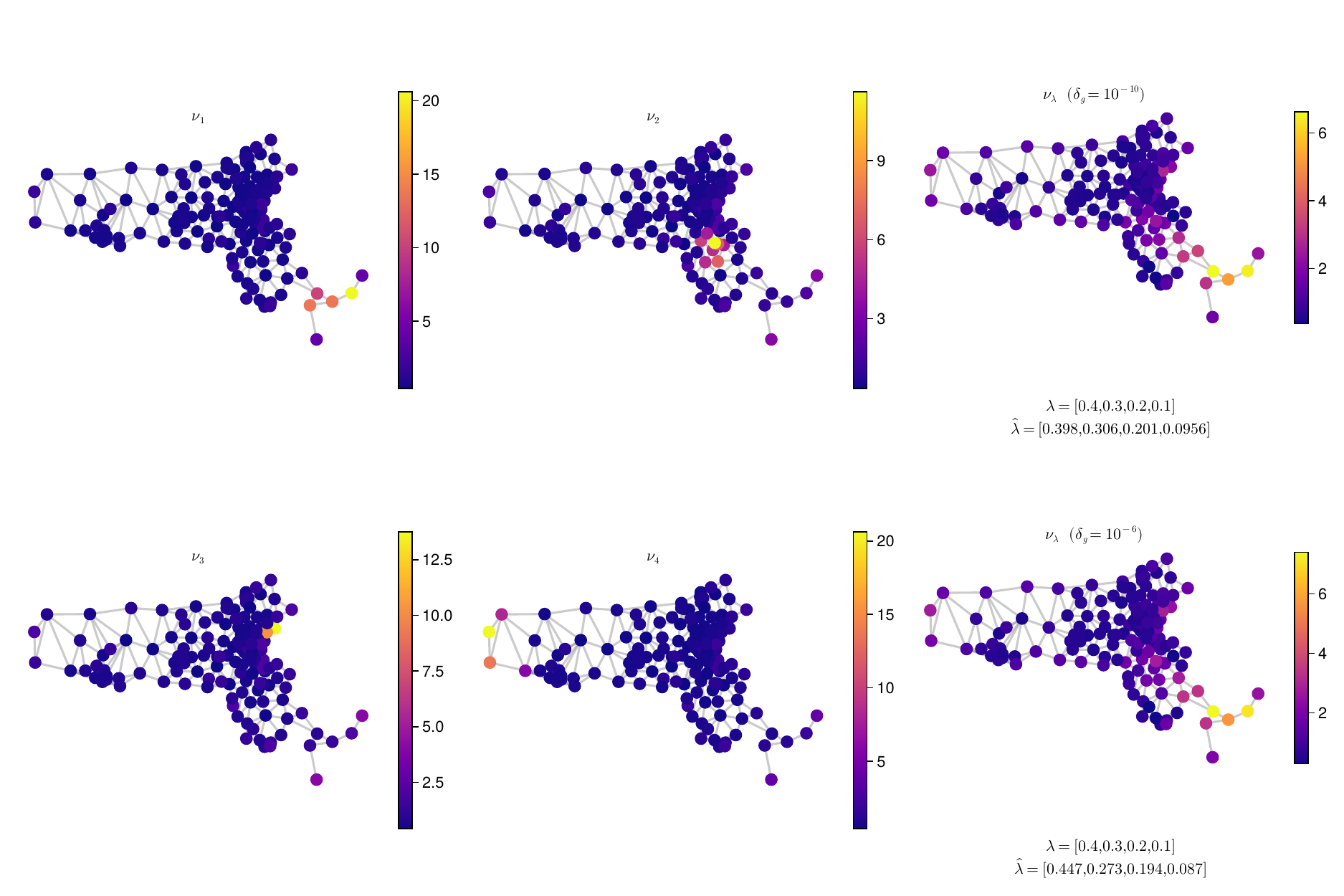}
    \caption{Barycenter of 4 measures on the graph formed by the spatial geography node constructed from the Massachusetts state house districts. Left: 4 randomly generated geographically concentrated reference measures; right: barycenters with weights \(\lambda = [0.4, 0.3, 0.2, 0.1]\) and recovered coordinates; top right, barycenter computed with \(\delta_g = 10^{-10}\); bottom right: barycenter computed with \(\delta_g=10^{-6}\). We observe that more rigorous convergence tolerance in the Chambolle-Pock routine leads to superior coordinate recovery in the analysis problem.  Common parameters used to compute barycenters: \(\varepsilon = 0.1, N=2, \delta_b=10^{-8}\). Data retrieved from \url{https://www.mass.gov/info-details/massgis-data-massachusetts-house-legislative-districts-2021}.}
    \label{fig:Massachusetts}
\end{figure}

\subsection{Comparison: Recovering Geodesic Curves via Wasserstein Gradient Descent}

The first test we can conduct to verify our algorithm's outputs is to consider the case of \(2\) reference measures. As pointed out in \cite{aguehBarycentersWassersteinSpace2010}, if we fix reference measures \(\nu_0, \nu_1\) and denote by \(\nu(t) : [0,1] \to \mathcal{P}(\X)\) the geodesic connecting \(\nu_0\) to \(\nu_1\), with \(\nu(0) = \nu_0\), then a consequence of the triangle inequality is that
\begin{equation*}
  \nu(t) = \argmin\limits_{\nu} \left\{(1 - t) \W^2(\nu_0, \nu) + t \W^2(\nu_1, \nu)\right\}.
\end{equation*}
This implies that computing a barycenter of two measures requires no additional bells and whistles beyond a mechanism for computing the geodesics between them. 

\begin{figure}[htbp]
  \begin{subfigure}[t]{0.45\textwidth}
  \includegraphics[width=\linewidth]{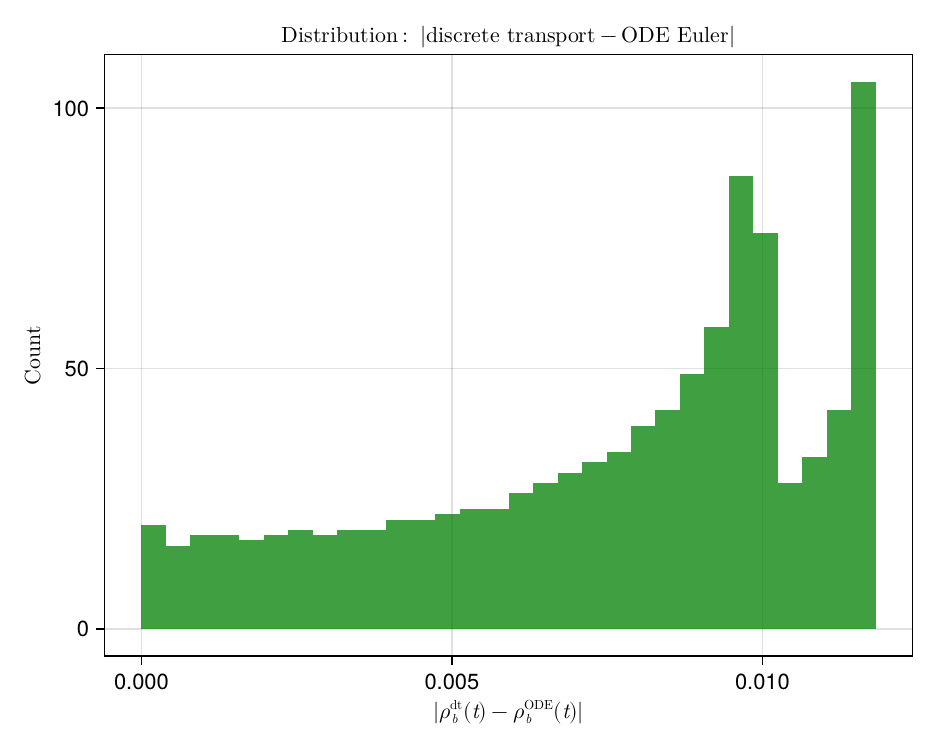}
  \caption{Distribution of deviation between Chambolle-Pock and Euler methods for approximating mass on node \(b\) across geodesic between Dirac masses on a two point graph.}
  \end{subfigure}
 \hfill 
\begin{subfigure}[t]{0.45\textwidth}
  \includegraphics[width=\linewidth]{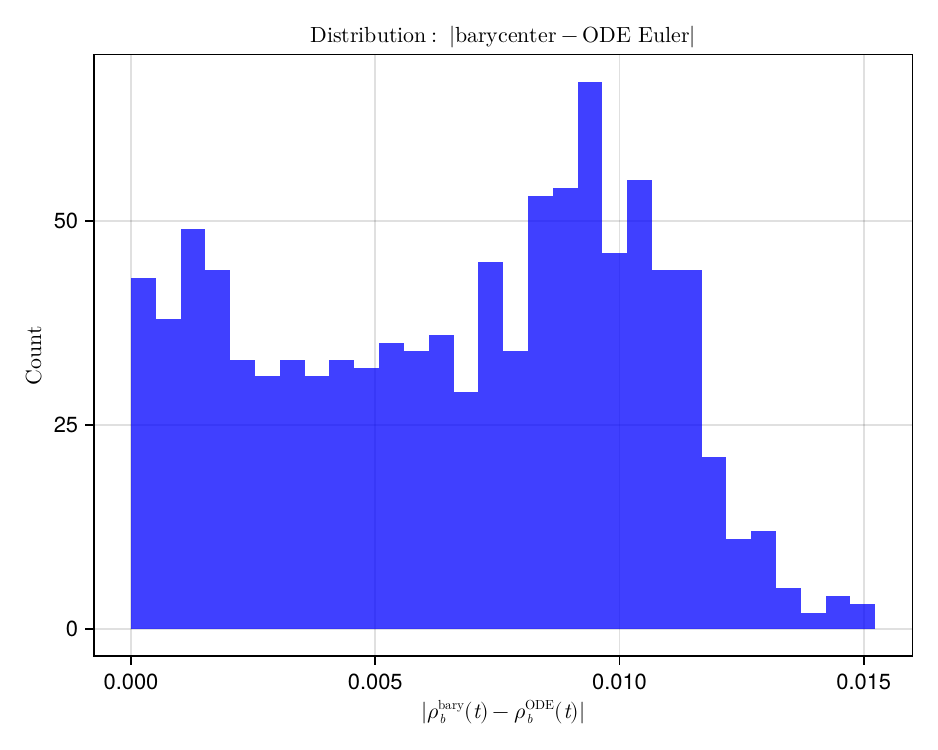}
   \caption{Distribution of deviation between intrinsic descent and Euler methods for approximating mass on node \(b\) across geodesic between Dirac masses on a two point graph.}
\end{subfigure}
\caption{Comparison of barycenters on a \(2\) node graph computed by solving an ODE vs computed by intrinsic gradient descent.}
\label{fig:ODEexperiment}
\end{figure}

We will proceed in two distinct ways for this procedure. First, we note that in the case of a two-point space \(\X_2 = \{a,b\}\), there exists a simple ODE governing the geodesic curve between any given pair of measures \(\nu_0, \nu_1 \in \mathcal{P}(\X_2)\). Here, we take \(\nu_0 = \delta_a(x)\) and \(\nu_1 = \delta_b(x)\) to be Dirac masses concentrated on either of the nodes in the graph. Then, we can solve the ODE via an explicit Euler method, and compare the geodesic predicted by the solution of the ODE to the barycenters produced by our descent method.  All three methods produce similar geodesics, and we compare them directly by looking at the amount of mass on node \(b\), taking geodesic obtained by the Euler method as our ground truth; see Figure \ref{fig:ODEexperiment} for the distribution of results.

For the other approach to this experiment, we can on the one hand compute the geodesic \(\nu(t)\) between \(\nu_0, \nu_1\) using the Chambolle-Pock algorithm, evaluate that geodesic at different values of \(t = t_0, t_1, \dots\), and see how those evaluations compare to the barycenters obtained by, on the other hand, applying our descent method to the reference measures \(R = \{\nu_0, \nu_1\}\) with weights \(\lambda = (1 - t_0, t_0),  \dots, (1 - t_N, t_N)\). We perform two versions of this approach to the experiment. First, we fix a single pair of measures \(\left\{ \nu_0, \nu_1 \right\}\) on a graph \(G\), compute the points on the geodesic \(\left\{ \nu_{t_i} \right\}_{i=0}^{N}\) between them with \(N=1000\) time steps, and then for each \(i=1, \dots, N - 1\), compute the barycenter \(\argmin\left\{ (1 - i/N)\W^2(\nu,\nu_0) + (i/N)\W^2(\nu,\nu_1) \right\}\). We treat the measures \(\nu_{t_i}\) output by the Chambolle-Pock computation of the geodesic curve between \(\nu_0\) and \(\nu_1\) as the ground truth and report the relative error of the barycenters computed via intrinsic gradient descent; the results of this experiment are plotted in Figures \ref{fig:NormDiffsDistribution} and \ref{fig:NormDiffsInTime}, both as an ensemble distribution and as a function of the interpolation parameter \(t\). Then we perform a variation on the experiment, keeping \(G\) fixed but now letting the measures and coordinates be randomly generated each time, see Figure \ref{fig:NormDiffsRandomized}. In order to generate the random measures on the triangle, we sample from the uniform distribution on the interval \([0,1]\) three times, and normalize by dividing through by the sum of values, and then componentwise by the steady state vector \(\pi\).

We also hasten to point out that taking the measures along the computed geodesic as ground truth it not necessarily optimal. Note that the object being optimized by the Chambolle-Pock routine is a representation of the entire geodesic. While in the limit of \(N \to \infty\) we would expect perfect correspondence between points on the geodesic and barycenters of \(\nu_0\) and \(\nu_1\), in the case of finite step size we must anticipate some difference between the true barycenters and the measures along the approximated curve. Nevertheless, the small relative error in our experiments indicates that the intrinsic descent scheme does converge towards a variance minimizing measure.

\begin{figure}[htbp]
  \centering
  \begin{subfigure}[t]{0.48\textwidth}
  \includegraphics[width=\linewidth]{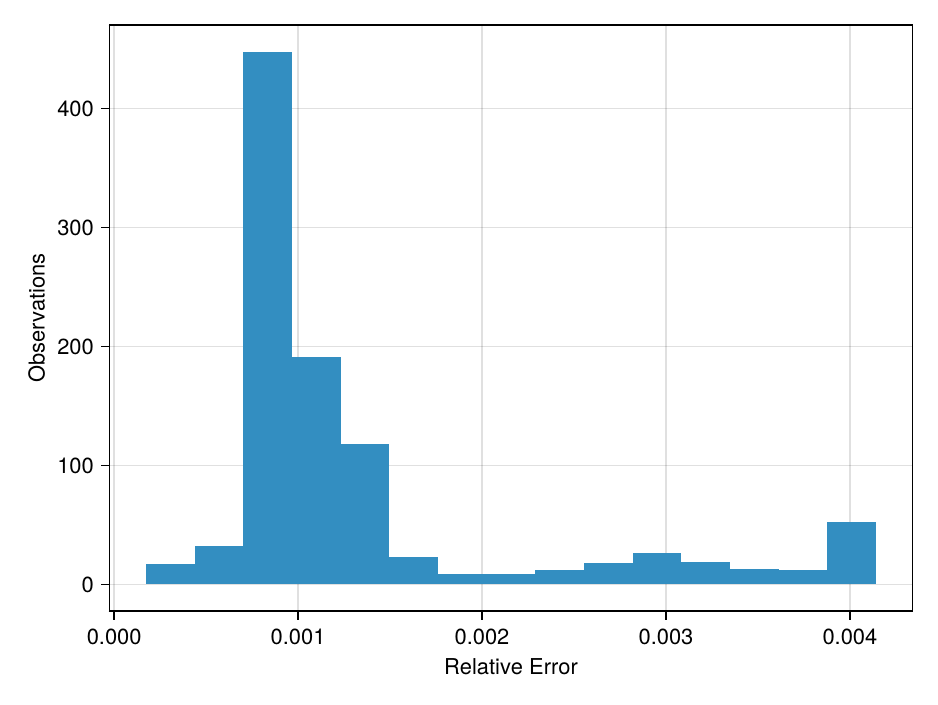}
  \caption{Distribution of 1000 computed relative norm differences between (a) the barycenter of two fixed measures \(\nu_0, \nu_1\) on a triangle graph and (b) the corresponding point on the geodesic between \(\nu_0\) and \(\nu_1\).}
  \label{fig:NormDiffsDistribution}
  \end{subfigure}
 \hfill 
\begin{subfigure}[t]{0.48\textwidth}
\includegraphics[width=\linewidth]{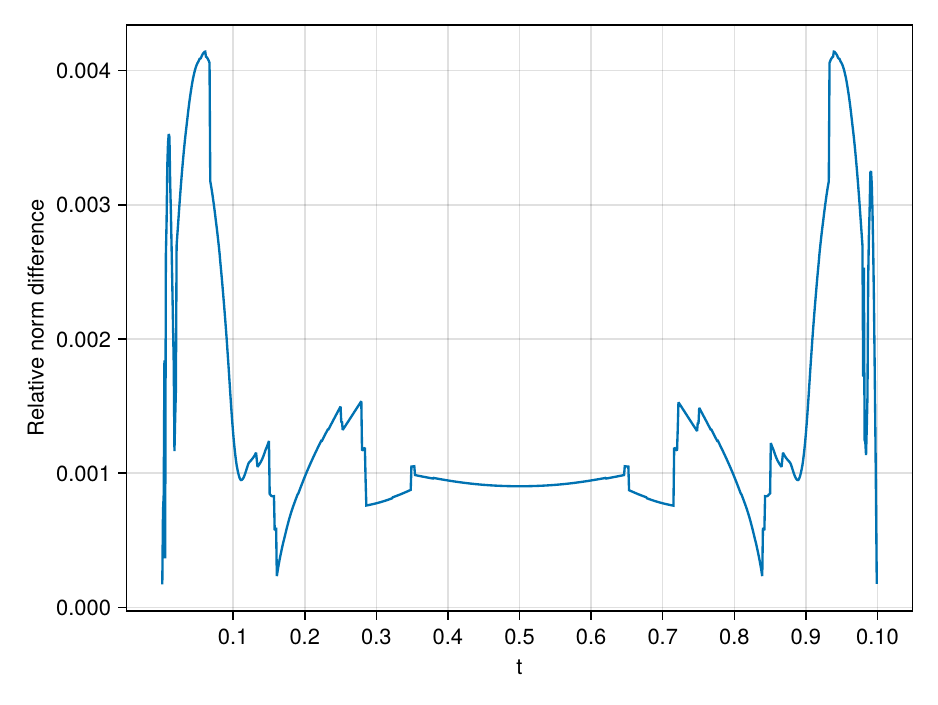}
  \caption{1000 computed relative norm differences between (a) the barycenter of two fixed measures \(\nu_0, \nu_1\) on a triangle graph and (b) the corresponding point on the geodesic between \(\nu_0\) and \(\nu_1\) as a function of the interpolation parameter \(t\).}
  \label{fig:NormDiffsInTime} 
\end{subfigure}
\caption{Comparison between barycenters of two measures computed as geodesics via Chambolle-Pock and via intrinsic gradient descent}
\end{figure}

\begin{figure}[htbp]
  \centering
  \includegraphics[width=0.7\linewidth]{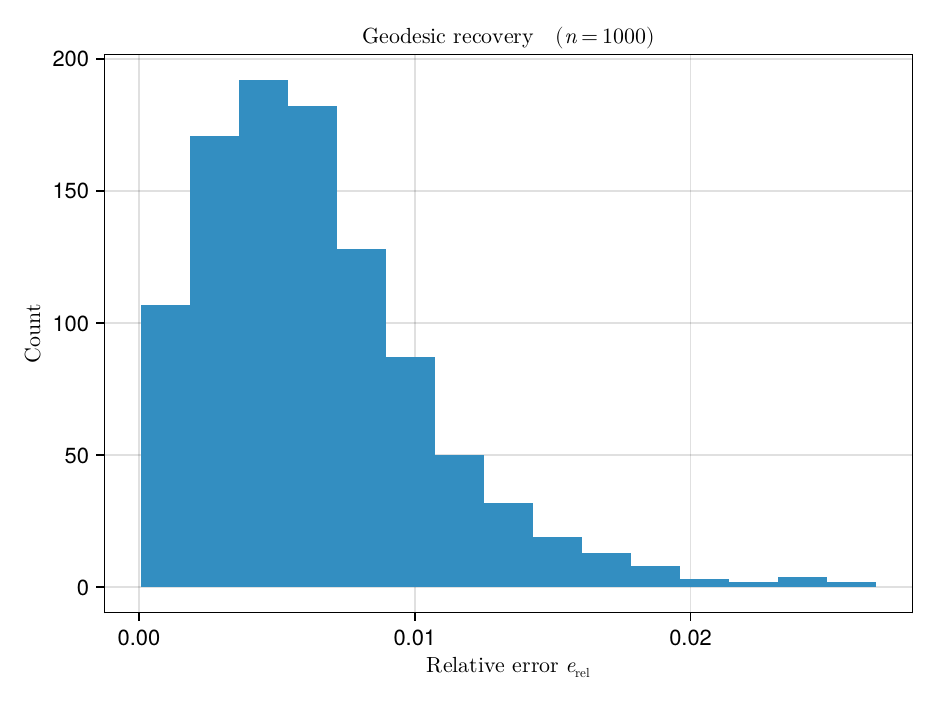}
  \caption{Distribution of 1000 computed relative norm differences between (a) the barycenter \(\nu_b\) of two randomly generated measures \(\nu_0^i, \nu_1^i\) on a triangle graph and (b) the corresponding point on the geodesic between \(\nu_0^i\) and \(\nu_1^i\), \(\nu_g\). When computing the relative error we assume that the point on the geodesic is a genuine representative of the barycenter, and therefore compute \(\|\nu_{g} - \nu_{b}\|_{\pi} / \|\nu_g\|_\pi\)}
  \label{fig:NormDiffsRandomized}
\end{figure}

\subsection{Comparison: Barycenters Obtained by Gradient Descent vs by Entropic Regularization}
Here we compare qualitatively the barycenters obtained by our dynamic approach to those obtained via the entropically regularized static approach. We will compare our algorithm's outputs to those obtained when the cost matrix used in formulating the entropic regularization problem is (1) the squared shortest path metric on the graph and (2) the squared diffusion distance between graph vertices for various values \(t > 0\).

For our first comparison between the methods, we fix a grid graph on \(49\) vertices, generate \(3\) random measures and coordinates \(\lambda = (0.5, 0.3, 0.2)\), and then compute the barycenter \(\nu_\lambda\) \((1)\) by intrinsic gradient descent, \((2)\) by entropic regularization with the ground metric given by the squared shortest path metric and \((3)\) by entropic regularization with the squared diffusion distance at time \(t=12\), shown in Figure \ref{fig:GridEntropicComparison}. For our second comparison, we repeat the process outlined above on a graph representing the 48 contiguous U.S. States + D.C, this time taking \(t=11\) for the diffusion distance, shown in Figure \ref{fig:StatesEntropicComparison}. Diffusion times in these experiments are selected by setting \(t\) according to the diameter of the graph, i.e., the maximal path length among all shortest paths between pairs of vertices.

In order to generate the measures which have some semblance of geographic concentration we perform the following procedure. First, we pick an arbitrary node \(v \in V\), and let \(N(v)\) be the set of vertices connected to \(v\), together with \(v\). Then we generate a weighting vector, for a set weight parameter \(w > 1\), and assign mass \(1\) to \(u \notin N(v)\) and mass \(w\) to \(u \in N(v)\). Finally we normalize the resulting vector so that it lies in \(\mathcal{P}(\X)\), which is done by dividing componentwise by the steady state vector \(\pi\) and the total unnormalized mass \(|N(v)^c| + w|N(v)|\). The parameter \(w\) roughly controls how close the measures are to the boundary of \(\mathcal{P}(\X)\). In these specific experiments, we take \(w=100\).

Some commonalities jump out from these experiments. For brevity, let us refer to the barycenter computed with Algorithm \ref{Alg1} as \(\nu_{dyn}\), the one computed via entropic regularization with cost given by the squared shortest path metric as \(\nu_{st,sp}\), and the one computed via entropic regularization with cost given by the squared diffusion distance as \(\nu_{st, diff}\). 
First, we look at the results for the grid graph. Qualitatively, all three barycenters have supported concentrated in the lower left corner of the graph, but none of them are actually similar to each other. \(\nu_{dyn}\) has support which extends along the off-diagonal of the grid, while \(\nu_{st,sp}\) has support mostly concentrated below the main diagonal and parallel to it, and \(\nu_{st,diff}\) is significantly more concentrated than either of the others. In terms of recovery error, \(\nu_{st,sp}\) dominates with a relative error of approximately \(2.43\times 10^{-7}\), while \(\nu_{dyn}\) and \(\nu_{st,diff}\) both admit relative recovery errors of approximately \(0.006\). 

Turning our attention to the U.S. states graph, we see that \(\nu_{st,diff}\) significantly improves its relative recovery error to roughly match the relative error obtained for \(\nu_{st,sp}\), the former coming in at \(6.02\times10^{-8}\) and the latter at \(5.24\times10^{-8}\). Recovery is not much changed for \(\nu_{dyn}\) on the graph, still at \(0.006\). Also of note in regards to the U.S. states graph, the barycenters computed via the static approaches are much more visually similar to each other than they are in the case of the grid. 

These experiments highlight the trade-offs between approaching the barycentric coding model from either the static or dynamic point of view. On the one hand, results can be obtained quickly using the static formulation with entropic regularization, but at the cost of having to make a good choice about what ground metric to use, while on the other hand, consistent results can be obtained without having to make such a choice, at the cost of significantly more compute time when using the dynamic formulation.

\begin{figure}
  \centering
    \includegraphics[width=0.84\linewidth]{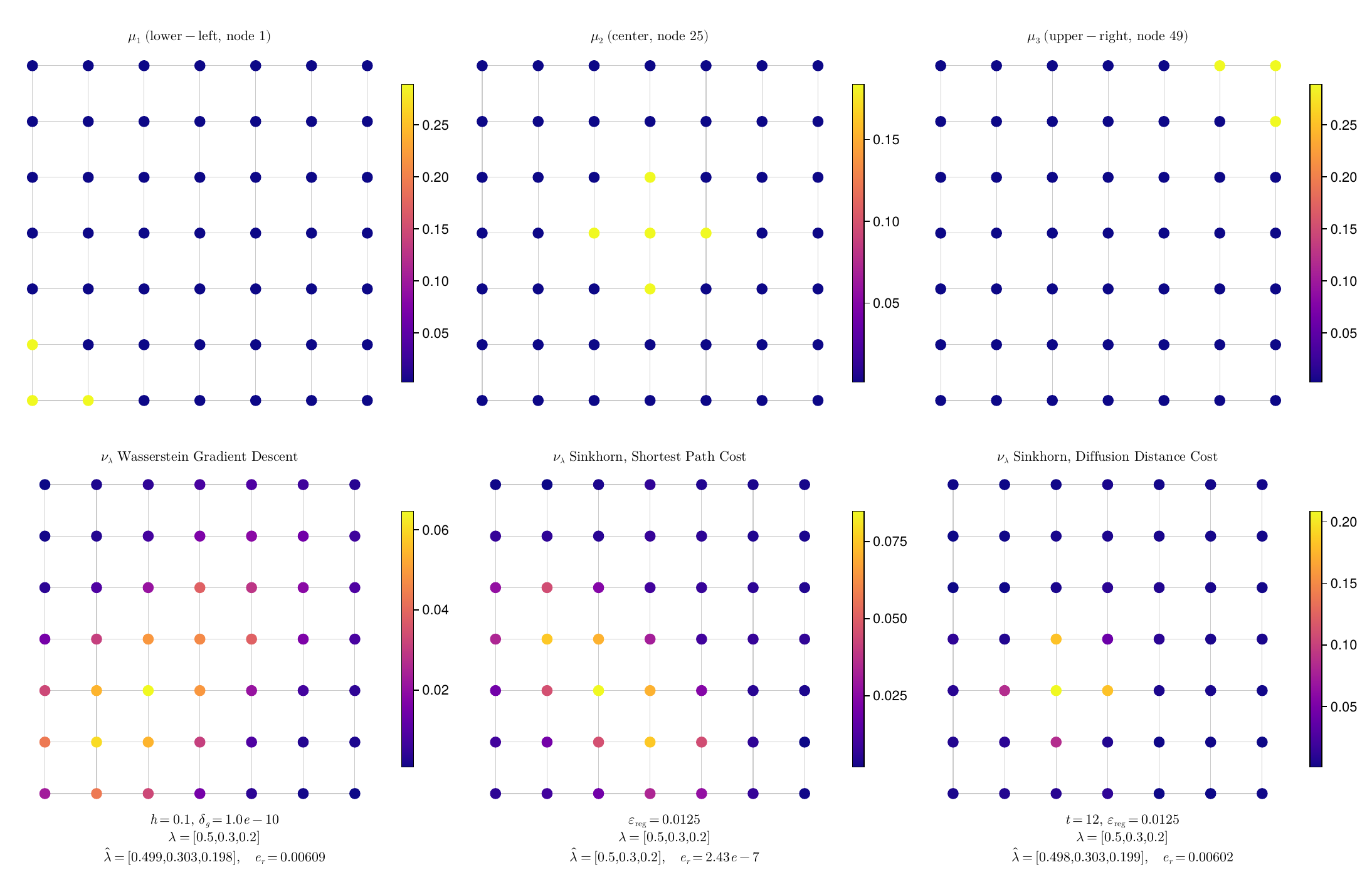}
  \caption{Barycenter of three locally concentrated measures on a grid graph on \(49\) vertices. Bottom left: computed via intrinsic gradient descent and the dynamic formulation of discrete transport; bottom center: computed via entropic regularization with cost given by the squared shortest path metric; bottom right: computed via entropically regularized transport, with the cost matrix given by the squared diffusion distance for \(t=12\). Parameters used to compute barycenters: \(\varepsilon = 0.1, N=3, \delta_g = 10^{-12}, \delta_b = 10^{-10}\).}
  \label{fig:GridEntropicComparison}
\end{figure}

\begin{figure}
  \centering
    \includegraphics[width=0.84\linewidth]{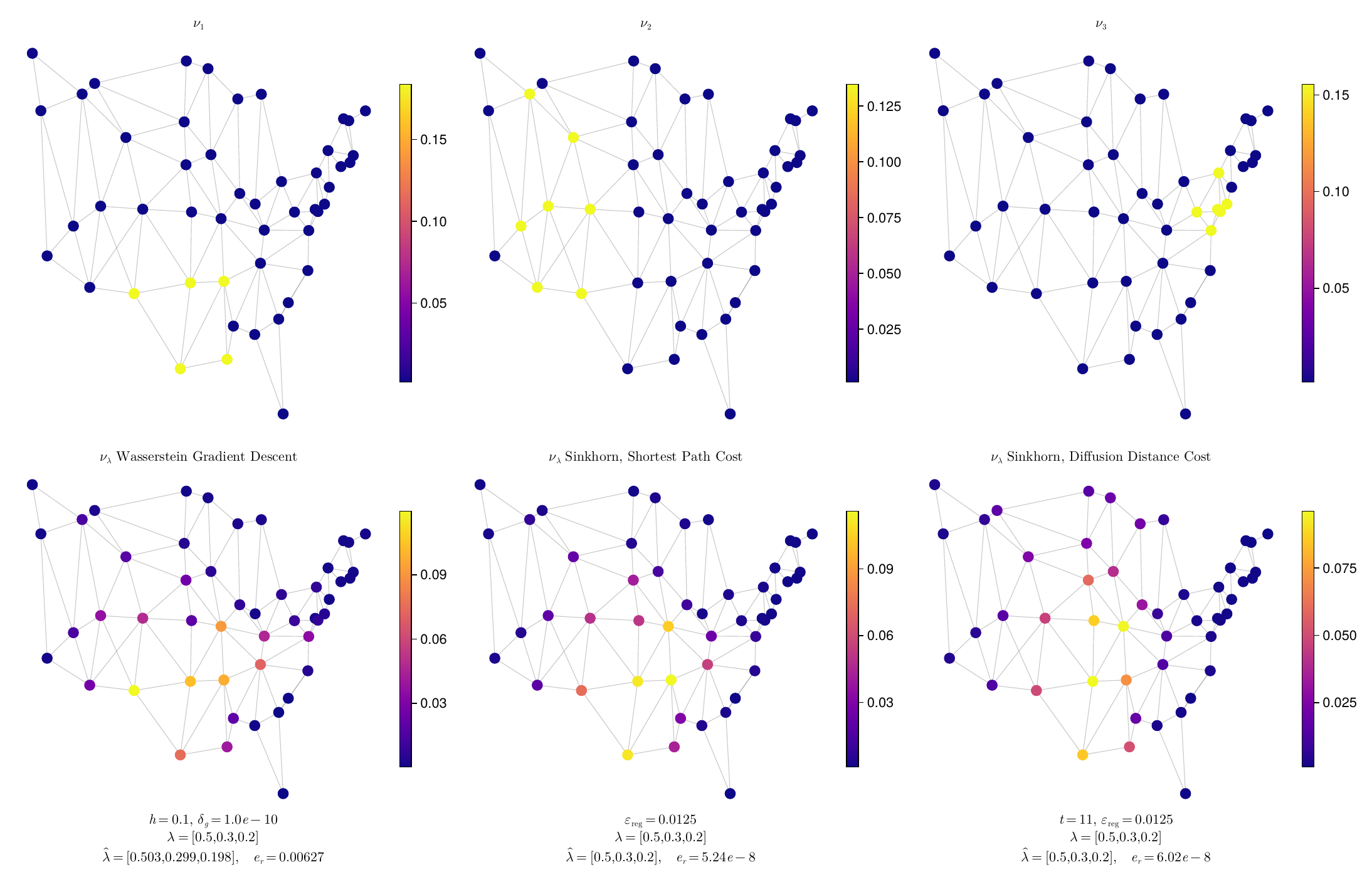}
  \caption{Barycenter of three geographically concentrated measures on a graph representing the 48 contiguous states + DC. Bottom left: computed via intrinsic gradient descent and the dynamic formulation of discrete transport; bottom center: computed via entropic regularization with cost given by the squared shortest path metric; bottom right: computed via entropically regularized transport, with the cost matrix given by the squared diffusion distance for \(t=11\). Parameters used to compute barycenters: \(\varepsilon = 0.1, N=3, \delta_g = 10^{-12}, \delta_b = 10^{-10}\).}
  \label{fig:StatesEntropicComparison}
\end{figure}

\subsection{Effect of Different Initializations for the Descent Scheme}
Lacking guarantees about the geodesic convexity of \eqref{GraphVF}, or its strict convexity with respect to the Euclidean geometry, a natural question to ask is whether or not the choice of initialization \(\nu_0\) in the descent scheme \eqref{step} has a meaningful effect on the convergence behavior of the algorithm, and whether or not it becomes stuck in local minima. To test this, we generate a collection of arbitrary measures on a graph of \(25\) vertices, as well as a spatial geography data graph representing the 48 contiguous states of the U.S., plus D.C., and initiate the descent from each of the reference measures. We make this choice because, as prescribed in \cite{afsariConvergenceGradientDescent2013}, it is generally best practice to choose an initialization we know to be within the geodesic hull of the dictionary of reference measures.  Results are displayed in Figures \ref{fig:GridInit} and \ref{fig:StatesInit}. Once the algorithm converges for each choice of initialization, we compare the distance between the output barycenters in the \(\|\cdot\|_\pi\) norm. Barycenters are considered to be converged at a threshold of \(\delta_b = 10^{-10}\); for the experiment on the grid, the maximal norm difference between computed barycenters was observed to be \(10^{-9}\), while on the graph of the contiguous states the maximal observed norm difference was \(8\times 10^{-10}\), which indicates that the choice of initializing measure for the descent scheme may not have a meaningful impact on the final output of the algorithm. This also suggests a modest computational optimization to make: initiating descent from the dictionary atom
\[\argmin_{i}\left\{\J[\nu_i]\right\}.\]

\begin{figure}
    \centering
    \includegraphics[width=0.95\linewidth]{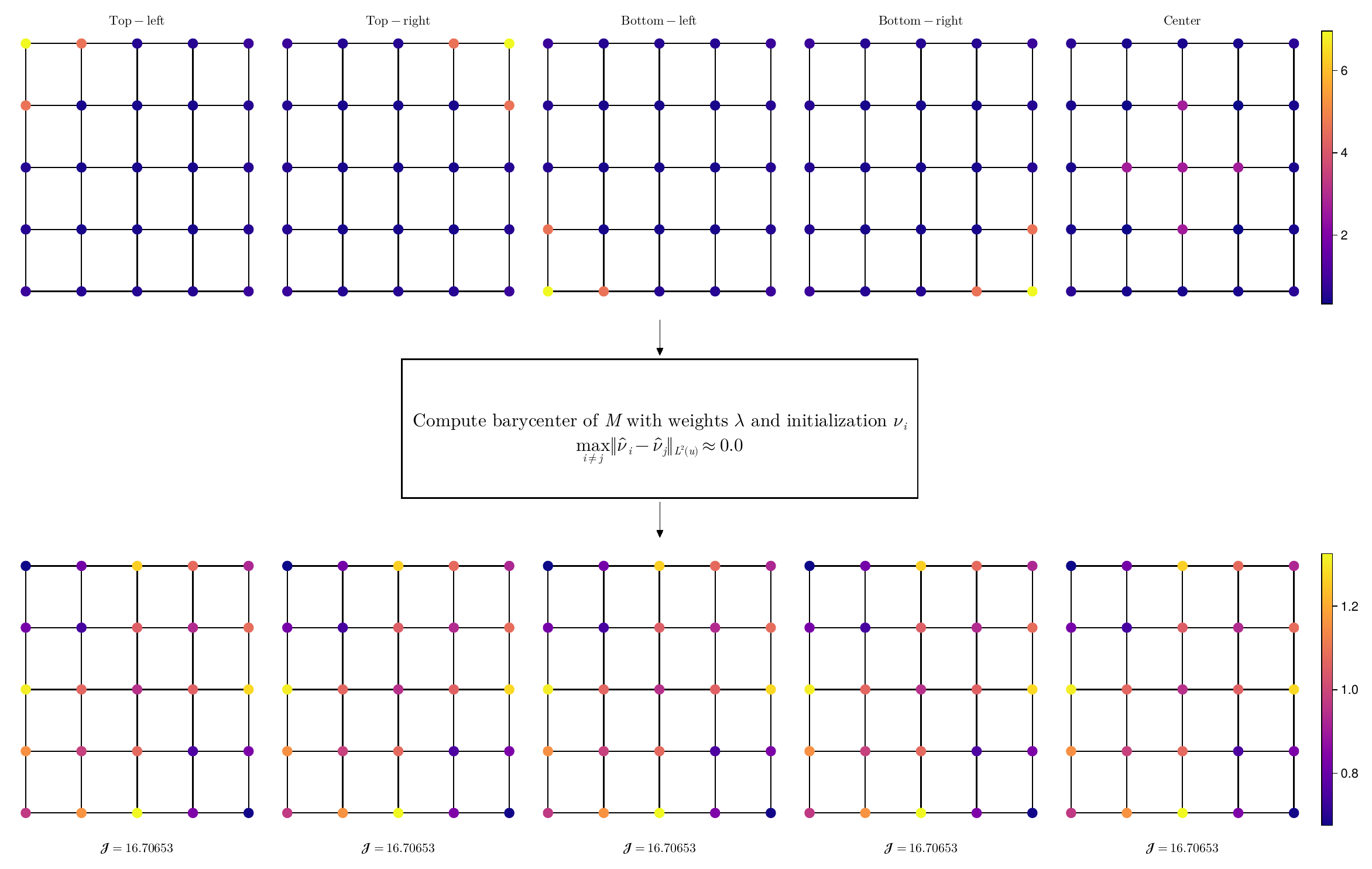}
    \caption{Barycenter of several reference measures computed with differing initializations for the descent scheme. The maximum observed difference in \(\|\cdot\|_\pi\) between the computed barycenters was \(1\times10^{-9}\), the parameters used to compute the barycenters were \(N=16, \delta_g=10^{-6}, \varepsilon=0.25, \delta_b=10^{-10}.\)}
    \label{fig:GridInit}
\end{figure}

\begin{figure}
    \centering
    \includegraphics[width=0.95\linewidth]{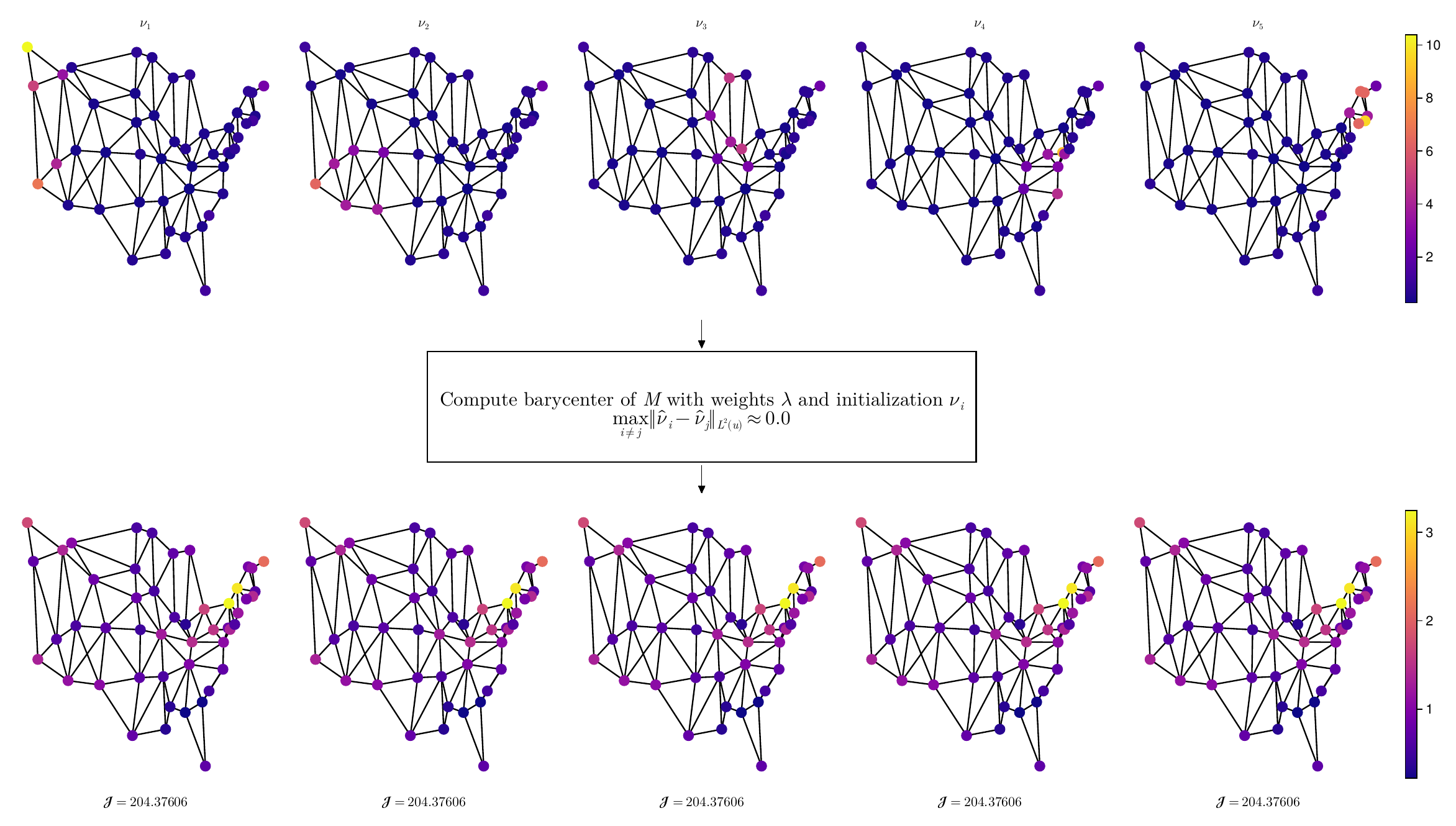}
    \caption{Barycenter of several reference measures computed with differing initializations for the descent scheme. The maximum observed difference in \(\|\cdot\|_\pi\) between the computed barycenters was \(8\times10^{-10}\), the parameters used to compute the barycenters were \(N=5, \delta_g=10^{-6}, \varepsilon=0.25, \delta_b=10^{-10}.\)}
    \label{fig:StatesInit}
\end{figure}

\subsection{Recovery of Coordinates and Effect of Hyperparameters on Recovery Quality}
Perhaps the single most desirable property this tool could have is consistency. It should be the case that if we fix \(\lambda \in \Delta^{p-1}\) and a family of reference measures \(R = \{\nu_i\}_{i=1}^p\), then compute their corresponding barycenter \(\nu^*\), and finally analyze the computed \(\nu^*\) with respect to the family of reference measures \(R\) to obtain coordinates \(\lambda^*\), then it should be the case that \(\lambda \approx \lambda^*\).  We test the consistency of our algorithm in two ways.  For the first experiment we fixed a grid graph on \(16\) nodes, generated \(3\) random measures, a random set of coordinates \(\lambda \in \Delta^2\), and computed the relative error of the recovered coordinates \(\lambda^*\), taking \(\delta_g = 10^{-9}, N=10, \varepsilon=0.5, \delta_b=10^{-10}\). We find that a large majority of experiments result in \(< 0.01\) relative error, and refer to Figure \ref{fig:GridRecoveryRand} for the full distribution of relative error in coordinate recovery. For the second experiment, we modulate the hyperparameters associated to the Chambolle-Pock routine to test the effect of geodesic quality on coordinate recovery. For this experiment, we fix a single graph on \(64\) nodes, three measures \(\nu_1, \nu_2, \nu_3\) and set of coordinates \(\lambda \in \Delta^2\), then vary the hyperparameters \(\delta_g\) and \(N\) to test their effect on coordinate recovery. Our results are optimistic: even taking very coarse parameters, we see the relative error stays under \(0.01\) (see \ref{fig:ParamVariations} for details). This indicates that the computational cost of synthesizing barycenters might be significantly reduced by a judicious choice of hyperparameters.
\begin{figure}
    \centering
    \includegraphics[width=0.6\linewidth]{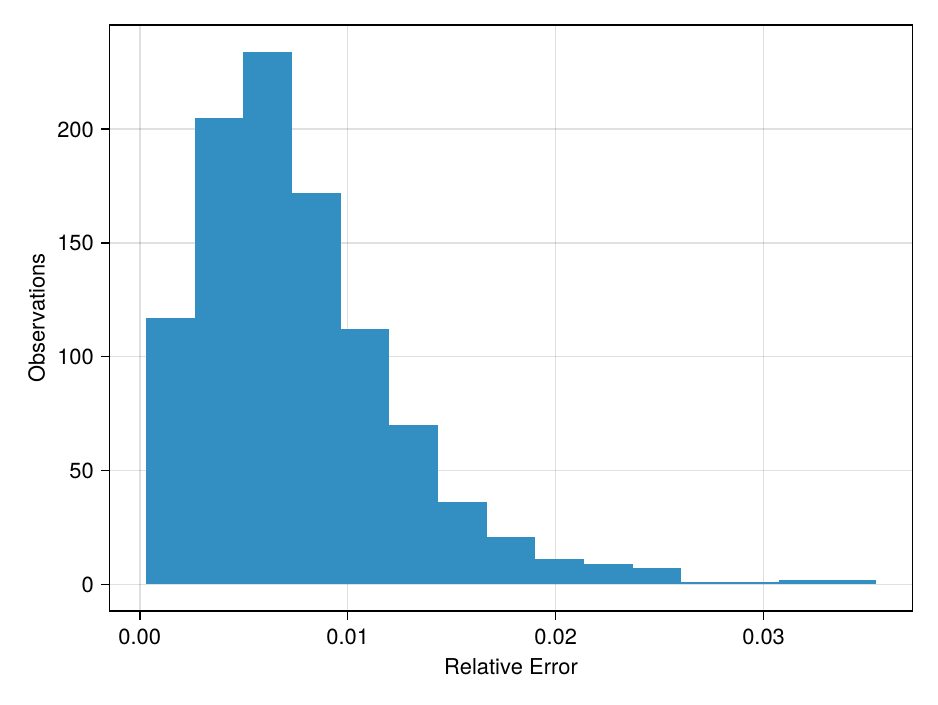}
      \caption{Distribution of 1000 observed relative errors in recovery of coordinates from a barycenter synthesized via intrinsic gradient descent for a grid graph on \(16\) nodes, \(3\) random measures, a random set of coordinates \(\lambda \in \Delta^2\), with hyperparameters \(\delta_g = 10^{-9}, N=10, \varepsilon=0.5, \delta_b=10^{-10}\).}
    \label{fig:GridRecoveryRand}
\end{figure}

\begin{figure}[htbp]
\begin{subfigure}[t]{0.48\textwidth}
  \centering
  \includegraphics[width=\linewidth]{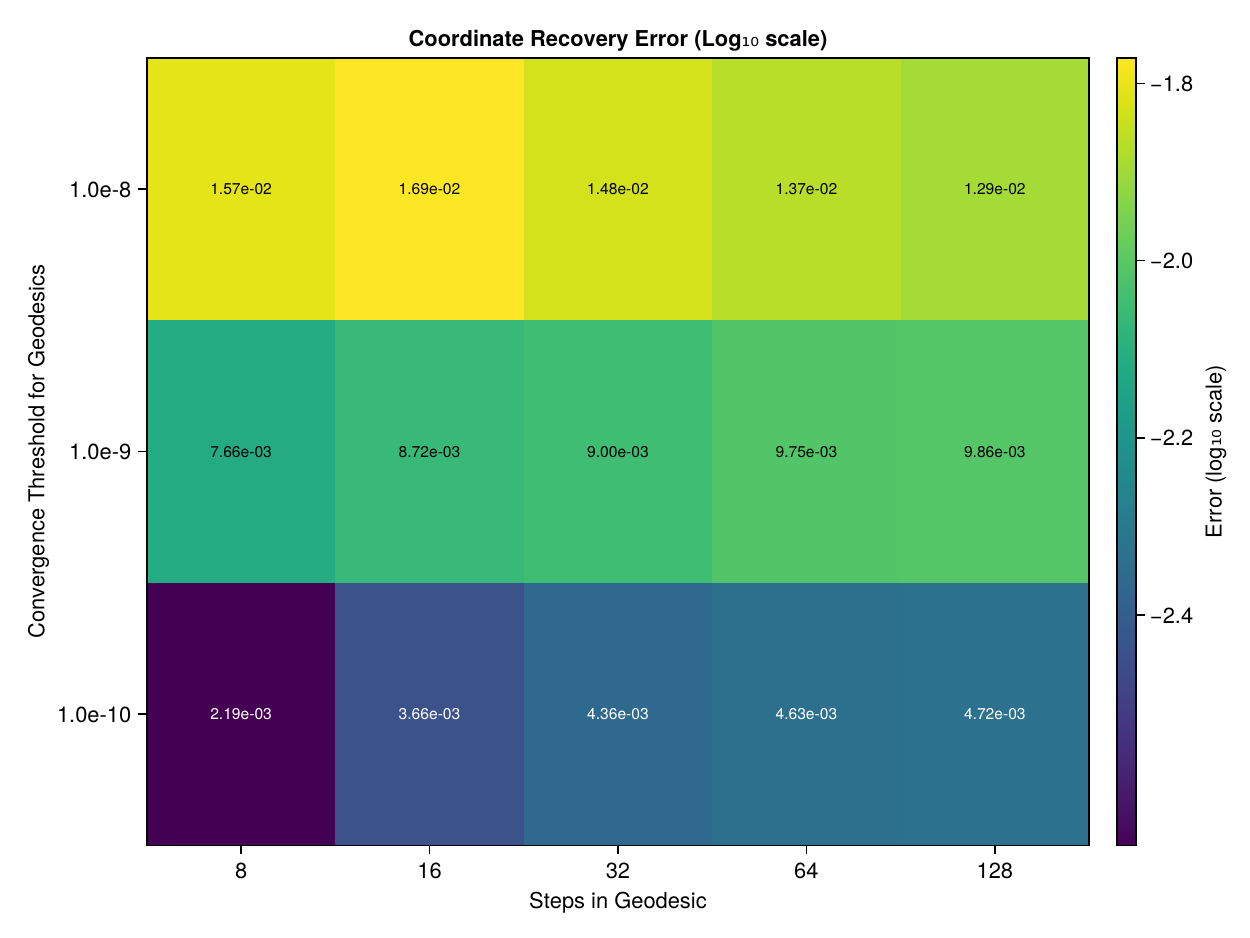}
  \caption{Error matrix for coordinate recovery experiment. A graph defined by the \(48\) contiguous states + D.C. is fixed and three measures \(\nu_1, \nu_2, \nu_3\) are generated randomly, along with randomly chosen coordinates \(\lambda\). The barycenter \(\nu_\lambda\) is computed via intrinsic gradient descent, and then analyzed by solving the associated quadratic program. The relative error of the recovered coordinates is recorded as a function of the convergence threshold for the geodesic computations and the number of steps in each geodesic. Convergence threshold for the barycenters is set to \(\delta_g = 10^{-8}\), and stepsize is taken to be \(\varepsilon = 0.05\).}
\end{subfigure}
 \hfill 
  \begin{subfigure}[t]{0.48\textwidth}
  \centering
  \centering
  \includegraphics[width=\linewidth]{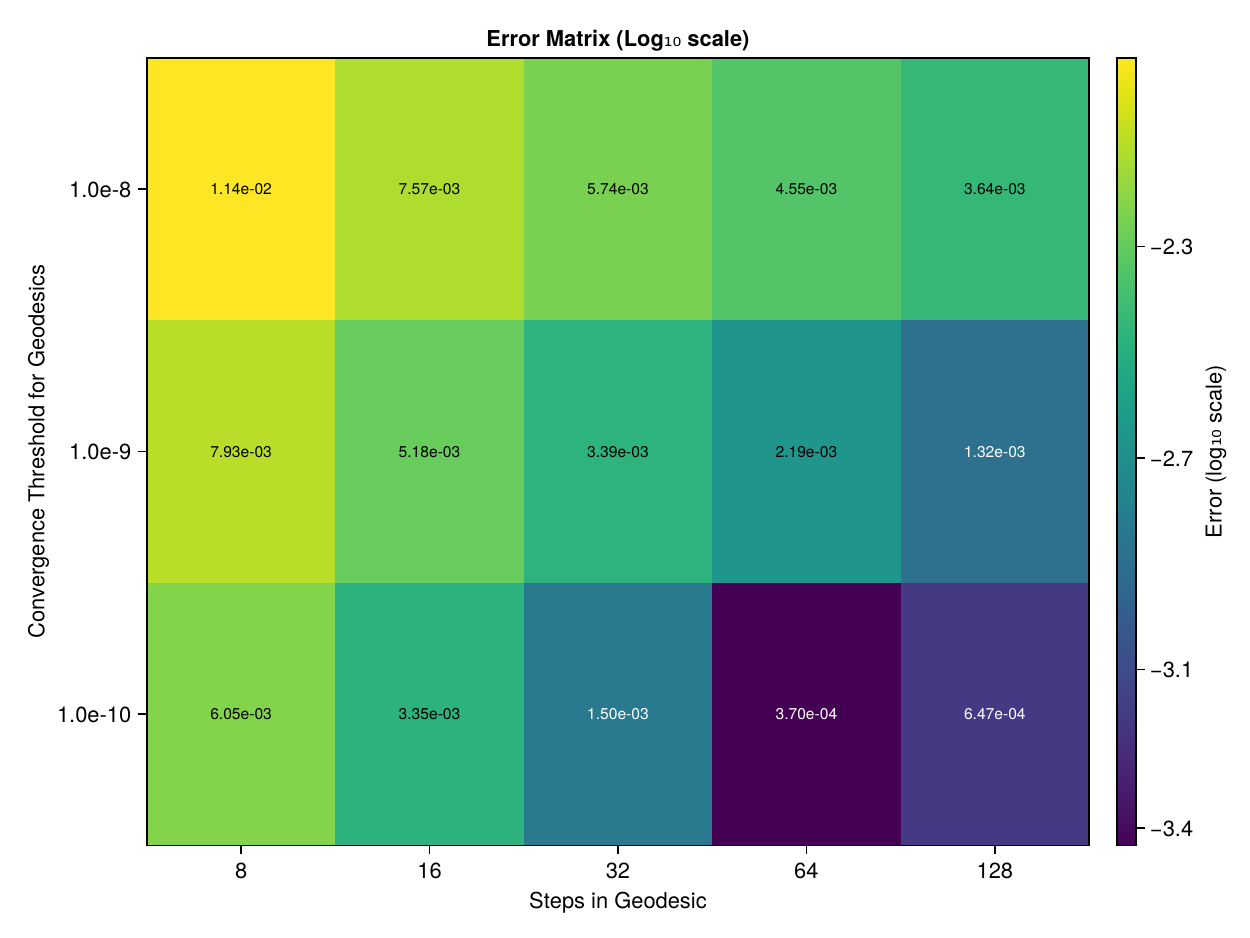}
  \caption{Error matrix for coordinate recovery experiment. A grid graph on \(64\) vertices is fixed and three measures \(\nu_1, \nu_2, \nu_3\) are generated randomly, along with randomly chosen coordinates \(\lambda\). The barycenter \(\nu_\lambda\) is computed via intrinsic gradient descent, and then analyzed by solving the associated quadratic program. The relative error of the recovered coordinates is recorded as a function of the convergence threshold for the geodesic computations and the number of steps in each geodesic. Convergence threshold for the barycenters is set to \(\delta_g = 10^{-8}\), and stepsize is taken to be \(\varepsilon = 0.05\).}
  \end{subfigure}
  \caption{Recovery of coordinates for synthesized barycenters.}
\label{fig:ParamVariations}
\end{figure}
\subsection{Empirical Convergence}
Lacking analytical information about either the injectivity radius or the sectional curvatures of the space \(\mathcal{P}(\X)\), we cannot make guarantees about the rate of convergence for the intrinsic gradient descent scheme \cite{afsariConvergenceGradientDescent2013}, which is unfortunate. However, we are able to study the rate of convergence for our algorithm empirically. For this experiment, we fixed a grid graph on \(16\) vertices, and for each run of the experiment, generated \(3\) random measures and a random set of barycentric coordinates \(\lambda\); then we tracked the difference in norm between iterates as the descent scheme proceeded. In other words, we track the quantity
\begin{equation*}
    \|\nu_{k+1} - \nu_{k}\|_\pi = h \|\divx\nabla J|_{\nu_k}\|_{\pi}.
\end{equation*}
We repeated this procedure 100 times, with a convergence threshold \(\delta_{b} = 10^{-10}\), and plotted the results. We observe clear linear convergence behavior in Figure \ref{fig:LinearConvergence}. 
\section{Discussion}
\begin{figure}
    \centering
    \includegraphics[width=0.78\linewidth]{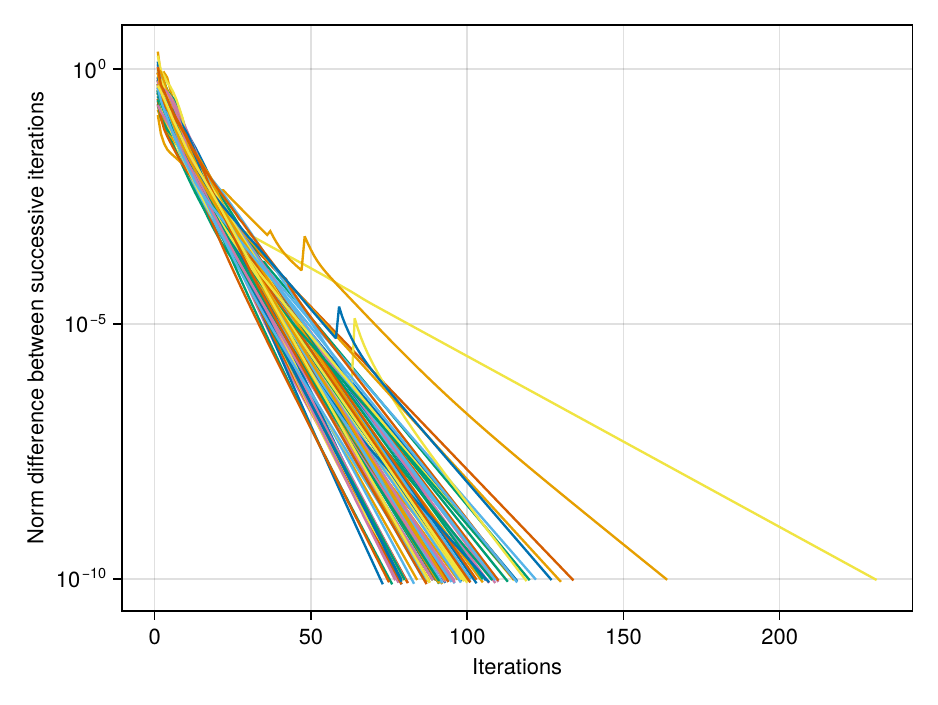}
    \caption{Log-linear scale of the absolute \(\ell^2\) norm difference between consecutive iterates and the number of iterations, or equivalently, the norm of the \(\W\) gradient at each iteration multiplied by the stepsize \(\varepsilon\). Hyperparameters for this experiment: \(\varepsilon =0.25\), \(\delta_b = 10^{-10}\), \(N=10\), \(\delta_g = 10^{-9}\).}
  \label{fig:LinearConvergence}
\end{figure}

We have built and made available a novel signal processing tool for probability measures on graphs. We contrasted a dynamic approach to synthesis of barycenters for measures supported on a graph via intrinsic gradient descent on the simplex to a static approach using the Sinkhorn algorithm. While Sinkhorn produces synthesized barycenters more quickly, the intrinsic gradient descent approach does not rely on any particular choice being made to encode the connectivity of the graph --- it is enough to work just with the Markov chain representation of the space.

Much remains to be done. This work is essentially computational and experimental, and many pressing theoretical questions remain to be investigated. We list just some of them here:
\begin{enumerate}
  \item Although Lemma \ref{lemma1} shows that the gradient descent iterates in the synthesis algorithm remain signed measures of unit mass with respect to the stationary distribution \(\pi\), ultimately we would like to find sufficient conditions guaranteeing that the iterates are always non-negative probability measures.
  \item While it is clear that barycenters exist by virtue of the convexity of the variance functional with respect to linear interpolation, it is \emph{not} clear that the variance functional is itself geodesically convex. In the case of measures supported on a Euclidean domain, the answer to the corresponding question is known, and in fact it is known to not be geodescially convex --- nevertheless, gradient descent in the \(W_2\) metric has been shown to be a legitimate tool for approximation of \(W_2\) barycenters under suitable conditions \cite{chewiGradientDescentAlgorithms2020}. It is of great interest to decide if variance functional is geodesically convex in our discrete setting.
  \item Can we make meaningful improvements to the algorithm described in \cite{erbarComputationOptimalTransport2020}?
  \item Given the popularity of entropic regularization in the computational approximation of Euclidean transport, can such a regularization be found for discrete transport distances?
  \item The step-size in the intrinsic gradient descent scheme proposed in this paper is taken as a constant for simplicity, but adaptive step sizes often have computational benefits in achieving faster rates of convergence \cite{afsariConvergenceGradientDescent2013, chambolleFirstOrderPrimalDualAlgorithm2011} --- could such a scheme be appropriate and beneficial in this setting?
  \item Alternatively, changing nothing at all about the way that we are currently approaching the problem of synthesizing and analyzing barycenters, there are already hyperparameters which have an unclear effect on the quality of our final outputs, namely the number of steps used to compute discrete geodesics, the convergence tolerance for the Chambolle-Pock routine, and the convergence tolerance for the intrinsic gradient descent scheme. It is entirely conceivable that a high tuning any of these knobs (e.g., taking the number of steps per geodesic \(N\) to be large) will exhibit diminishing returns.
  \item It has been shown in \cite{erbarGeometryGeodesicsDiscrete2019} that the geodesics of the \(\W\) metric may exhibit non-locality if certain topological properties are not satisfied. It would be of great interest to see if similar results can be extended to \(\W\) barycenters --- i.e., if \(\{\nu_i\}_i\) is a collection of measures supported on a subset \(A \subset \X\), when can it be guaranteed that the barycenter \(\Bary(\{\nu_i\}_i)\) is also supported entirely in \(A\)?
  \item We have totally avoided consideration of the boundary of the simplex in our discussion, and it is known that geodesic curves in the \(\W\) geometry may intersect the boundary of \(\mathcal{P}(\X)\) even when both end points lie in the interior of \(\mathcal{P}(\X)\) \cite{gangboGeodesicsMinimalLength2019}. 
  \item One may consider warm starts as follows.  Fix a family of reference measures \(\{\nu_i\}_{i=1}^p\) and let \(\nu^{(k)}\) be the \(k\)-th iterate of the descent scheme for the synthesis of \(\Bary(\left\{\nu_i\right\}, \lambda)\) for some arbitrary \(\lambda\). Denote by \(\rho_i^{(k)}\) the (rectified) geodesic curves computed by the Chambolle-Pock routine which connect \(\nu^{(k)}\) to \(\nu_i\). Presuming that \(\|\nu^{(k+1)} - \nu^{(k)}\|_{\pi}\) is small, it seems reasonable to conclude that \(\|\rho_i^{(k+1)} - \rho_i^{(k)}\|_H\) will also be small. In that case, converged primal and dual variables of the Chambolle-Pock routine obtained on the \(k\)-th iteration of the synthesis descent scheme may prove to be suitable initializations for the \((k+1)\)-th iteration, and yield faster convergence for our algorithm.
\end{enumerate}

\vspace{10pt}

\noindent \textbf{Acknowledgments:}  This work was partially supported by the National Science Foundation through grants DMS-2309519 and DMS-2318894. The authors acknowledge the use of Claude Code for assistance with code documentation, generating code to produce figures, and a significant improvement to one of the Newton methods implemented as a part of the Chambolle-Pock routine.

\bibliography{library}
\end{document}